\def\eqref#1{equation~\ref{#1}}
\def\1{\bm{1}}
\def\rr{{\textnormal{r}}}
\def\vzero{{\bm{0}}}
\def\vone{{\bm{1}}}
\def\va{{\bm{a}}}
\def\vb{{\bm{b}}}
\def\ve{{\bm{e}}}
\def\vf{{\bm{f}}}
\def\vh{{\bm{h}}}
\def\vg{{\bm{g}}}
\def\vh{{\bm{h}}}
\def\vq{{\bm{q}}}
\def\vs{{\bm{s}}}
\def\vu{{\bm{u}}}
\def\vv{{\bm{v}}}
\def\vw{{\bm{w}}}
\def\vx{{\bm{x}}}
\def\vy{{\bm{y}}}
\def\rr{\mathbb{R}}
\DeclareMathAlphabet{\mathsfit}{\encodingdefault}{\sfdefault}{m}{sl}
\SetMathAlphabet{\mathsfit}{bold}{\encodingdefault}{\sfdefault}{bx}{n}
\def\t{\intercal}
\newcommand{\Cov}{\mathrm{Cov}}
\def\eee#1#2{\mathbb{E}_{#1}\left[#2\right]}
\newtheorem{theorem}{Theorem}
\newtheorem{definition}{Definition}
\newtheorem{lemma}{Lemma}
\newtheorem{corollary}{Corollary}
\newtheorem{assumption}{Assumption}
\def\tr{\mathrm{tr}}
\def\var{\mathbb{V}}
\def\dd{\mathrm{d}} 
\def\circle#1{{\small \textcircled{\raisebox{-0.9pt}{#1}}}}
\def\aug{\mathrm{aug}}
\def\Cov{\mathrm{Cov}}
\def\vec{\mathrm{vec}}
\def\ee#1{\mathbb{E}\left[#1\right]}
\def\ema{\mathrm{ema}}
\def\eee#1#2{\mathbb{E}_{#1}\left[#2\right]}
\def\cW{\mathcal{W}}
\def\hltm{\texttt{HLTM}}
\newif\ifcomments
\newif\ifarxiv
\def\vspacenoarxiv#1{\ifarxiv \else \vspace{#1} \fi}
\newcommand{\yuandong}[1]{\todo[fancyline, color=red!50]{\textbf{Yuandong}: #1}\ignorespaces}
\newcommand{\yuandongil}[1]{\todo[inline, color=red!50]{\textbf{Yuandong}: #1}\ignorespaces}
\newcommand{\lantao}[1]{\todo[fancyline, color=cyan!50]{\textbf{Lantao}: #1}\ignorespaces}
\newcommand{\lantaoil}[1]{\todo[inline, color=cyan!50]{\textbf{Lantao}: #1}\ignorespaces}
\newcommand{\xinlei}[1]{\todo[fancyline, color=purple!50]{\textbf{Xinlei}: #1}\ignorespaces}
\def\ourtitle{Understanding Self-supervised Learning with Dual Deep Networks}
\icmltitlerunning{\ourtitle}
\begin{document}

\twocolumn[
\icmltitle{\ourtitle}

\icmlsetsymbol{equal}{*}

\begin{icmlauthorlist}
\icmlauthor{Yuandong Tian}{fair}
\icmlauthor{Lantao Yu}{stanford}
\icmlauthor{Xinlei Chen}{fair}
\icmlauthor{Surya Ganguli}{fair,stanford}
\end{icmlauthorlist}

\icmlaffiliation{fair}{Facebook AI Research}
\icmlaffiliation{stanford}{Stanford University}

\icmlcorrespondingauthor{Yuandong Tian}{yuandong@fb.com}

\icmlkeywords{Machine Learning, ICML}

\vskip 0.3in
]

\printAffiliationsAndNotice{}

\vspacenoarxiv{-0.1in}
\begin{abstract}
\vspacenoarxiv{-0.03in}
We propose a novel theoretical framework to understand contrastive self-supervised learning (SSL) methods that employ dual pairs of deep ReLU networks (e.g., SimCLR). First, we prove that in each SGD update of SimCLR with various loss functions, including simple contrastive loss, soft Triplet loss and InfoNCE loss, the weights at each layer are updated by a \emph{covariance operator} that specifically amplifies initial random selectivities that vary across data samples but survive averages over data augmentations. To further study what role the covariance operator plays and which features are learned in such a process, we model data generation and augmentation processes through a \emph{hierarchical latent tree model} (\hltm) and prove that the hidden neurons of deep ReLU networks can learn the latent variables in the \hltm, despite the fact that the network receives \emph{no direct supervision} from these unobserved latent variables. This leads to a provable emergence of hierarchical features through the amplification of initially random selectivities through contrastive SSL. Extensive numerical studies justify our theoretical findings. Code is released in \url{https://github.com/facebookresearch/luckmatters/tree/master/ssl}.
\end{abstract}

\vspacenoarxiv{-0.15in}
\section{Introduction}
While self-supervised learning (SSL) has achieved great empirical success across multiple domains, including computer vision~\cite{he2020momentum,goyal2019scaling,simclr,byol,misra2020self,caron2020unsupervised}, natural language processing~\cite{bert}, and speech recognition~\cite{wu2020self,baevski2020effectiveness,baevski2019vq}, its theoretical understanding remains elusive, especially when multi-layer nonlinear deep networks are involved~\cite{Bahri2020-mi}. Unlike supervised learning (SL) that deals with labeled data, SSL learns meaningful structures from randomly initialized networks without human-provided labels. 

In this paper, we propose a systematic theoretical analysis of SSL with deep ReLU networks. Our analysis imposes no parametric assumptions on the input data distribution and is applicable to state-of-the-art SSL methods that typically involve two parallel (or \emph{dual}) deep ReLU networks during training (e.g., SimCLR~\cite{simclr}, BYOL~\cite{byol}, etc). We do so by developing an analogy between SSL and a theoretical framework for analyzing supervised learning, namely the student-teacher setting ~\cite{tian2019student,allen2020backward,Lampinen2018-sl,saad1996dynamics}, which also employs a pair of dual networks.  Our results indicate that SimCLR weight updates at every layer are amplified by a fundamental positive semi definite (PSD) \emph{covariance operator} that only captures feature variability across data points that {\it survive} averages over data augmentation procedures designed in practice to scramble semantically unimportant features (e.g. random image crops, blurring or color distortions ~\cite{falcon2020framework,kolesnikov2019revisiting,misra2020self,purushwalkam2020demystifying}). This covariance operator provides a principled framework to study how SimCLR amplifies initial random selectivity to obtain distinctive features that vary \emph{across} samples after surviving averages over data-augmentations. 

While the covariance operator is a mathematical object that is valid for any data distribution and augmentations, we further study its properties under specific data distributions and augmentations. We first start with a simple one-layer case where two 1D objects undergo 1D translation, then study a fairly general case when the data are generated by a hierarchical latent tree model (\hltm), which can be regarded as an abstract conceptual model for object compositionality~\cite{} in computer vision. In this case, training deep ReLU networks on the data generated by the \hltm{} leads to the emergence of learned representations of the latent variables in its intermediate layers, even if these intermediate nodes have \emph{never} been directly supervised by the unobserved and inaccessible latent variables. This shows that in theory, useful hidden features can automatically emerge by contrastive self-supervised learning.  

To the best of our knowledge, we are the first to provide a systematic theoretical analysis of modern SSL methods with deep ReLU networks that elucidates how both data and data augmentation, drive the learning of internal representations across multiple layers.  

\section{Related Works}
In addition to SimCLR and BYOL, many concurrent SSL frameworks exist to learn good representations for computer vision tasks. MoCo~\cite{he2020momentum,chen2020improved} keeps a large bank of past representations in a queue as the slow-progressing target to train from.  DeepCluster~\cite{caron2018deep} and SwAV~\cite{caron2020unsupervised} learn the representations by iteratively or implicitly clustering on the current representations and improving representations using the cluster label. ~\cite{alwassel2019self} applies similar ideas to multi-modality tasks. Contrastive Predictive Coding~\cite{oord2018representation} learns the representation by predicting the future of a sequence in the latent
space with autoregressive models and InfoNCE loss. Contrastive MultiView Coding~\cite{tian2019contrastive} uses multiple sensory channels (called  ``views'') of the same scene as the positive pairs and independently sampled views as the negative pairs to train the model. Recently, \cite{li2020prototypical} moves beyond instance-wise pairs and proposes to use prototypes to construct training pairs that are more semantically meaningful.  

In contrast, the literature on the (theoretical) analysis of SSL is sparse.~\cite{wang2020understanding} shows directly optimizing the alignment/uniformity of the positive/negative pairs leads to comparable performance against contrastive loss.  
\cite{Arora2019-xw} proposes an interesting analysis of how contrastive learning aids downstream classification tasks, given assumptions about data generation.
~\cite{lee2020predicting} analyzes how learning pretext tasks could reduce the sample complexity of the downstream task and~\cite{tosh2020contrastive} analyzes contrastive loss with multi-view data in the semi-supervised setting, with different generative models.
However, they either work on linear models or treat deep models as a black-box function approximators with sufficient capacity. In comparison, we incorporate self-supervision, deep models, contrastive loss, data augmentation and generative models together into the same theoretical framework, and make an attempt to understand how and what intermediate features emerge in modern SSL architectures with deep models that achieve SoTA. 

\def\rev{reversible}
\def\revnoun{reversibility}

\vspacenoarxiv{-0.1in}
\section{Overall framework}
\label{sec:background}

\paragraph{Notation.} Consider an $L$-layer ReLU network obeying $\vf_l = \psi(\tilde\vf_l)$ and $\tilde\vf_l = W_l \vf_{l-1}$ for $l=1,\dots L$. Here $\tilde\vf_l$ and $\vf_l$ are $n_l$ dimensional pre-activation and activation vectors in layer $l$, with $\vf_0 = \vx$ being the input and $\vf_L=\tilde\vf_L$ the output (no ReLU at the top layer). $W_l \in \rr^{n_l \times n_{l-1}}$ are the weight matrices, and $\psi(\vu) := \max(\vu, 0)$ is the element-wise ReLU nonlinearity. We let $\cW := \{W_l\}_{l=1}^L$ be all network weights.  We also denote the gradient of any loss function with respect to $\vf_l$ by $\vg_l \in \rr^{n_l}$, and the derivative of the output $\vf_L$ with respect to an earlier pre-activation $\tilde\vf_l$ by the Jacobian matrix $J_l(\vx;\cW) \in \rr^{n_L\times n_l}$, as both play key roles in backpropagation (Fig.~\ref{fig:notation}(b)).

\begin{figure*}
    \centering
    \includegraphics[width=0.9\textwidth]{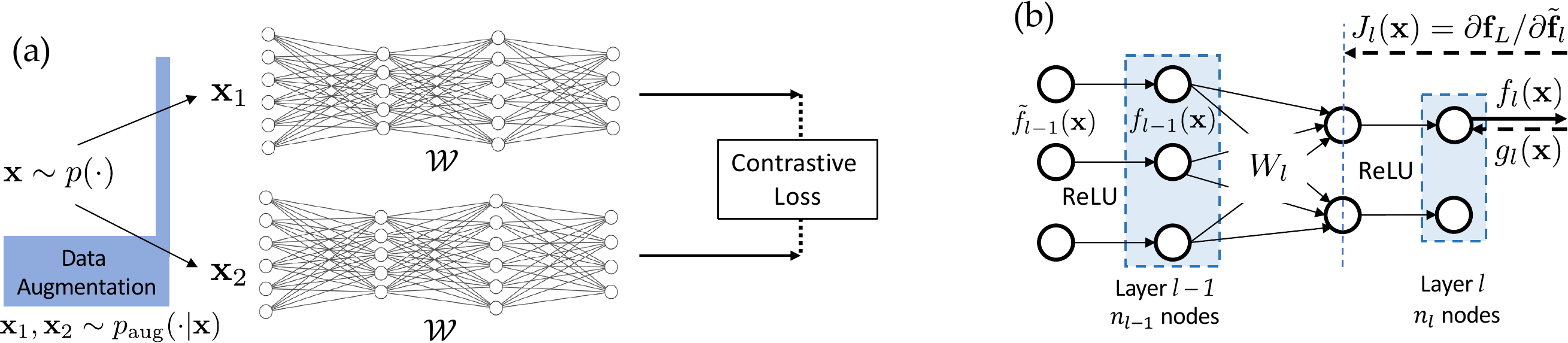}
    \vspace{-0.1in}
    \caption{\small \textbf{(a)} Overview of the SimCLR architecture. A data point $\vx\sim p(\cdot)$ is augmented to two views $\vx_1,\vx_2\sim p_\aug(\cdot|\vx)$, which are sent to two deep ReLU networks with identical weights $\cW$, and their outputs are sent to contrastive loss function. \textbf{(b)} Detailed notations.}
    \vspace{-0.1in}
    \label{fig:notation}
\end{figure*}

\vspacenoarxiv{-0.05in}
\paragraph{An analogy between self-supervised and supervised learning: the dual network scenario.} Many recent successful approaches to self-supervised learning (SSL), including SimCLR~\cite{simclr}, BYOL~\cite{byol} and MoCo \cite{he2020momentum}, employ dual ``Siamese'' pairs ~\cite{koch2015siamese} of such networks (Fig.~\ref{fig:notation}(b)).  Each network has its own set of weights $\cW_1$ and $\cW_2$, receives respective inputs $\vx_1$ and $\vx_2$ and generates outputs $\vf_{1,L}(\vx_1;\cW_1)$ and $\vf_{2,L}(\vx_2;\cW_2)$.  The pair of inputs $\{\vx_1,\vx_2\}$ can be either positive or negative, depending on how they are sampled.  For a positive pair, a {\it single} data point $\vx$ is drawn from the data distribution $p(\cdot)$, and then two augmented views $\vx_1$ and $\vx_2$ are drawn from a conditional augmentation distribution  $p_\aug(\cdot|\vx)$.  Possible image augmentations include random crops, blurs or color distortions, that ideally preserve semantic content useful for downstream tasks.  In contrast, for a negative pair, two {\it different} data points $\vx, \vx' \sim p(\cdot)$ are sampled, and then each are augmented independently to generate $\vx_1 \sim p_\aug(\cdot|\vx)$ and $\vx_2 \sim p_\aug(\cdot|\vx')$.  For SimCLR, the dual networks have tied weights with $\cW_1 = \cW_2$, and a loss function is chosen to encourage the representation of positive (negative) pairs to become similar (dissimilar). 

Our fundamental goal is to analyze the mechanisms governing how contrastive SSL methods like SimCLR lead to the emergence of meaningful intermediate features, starting from random initializations, and how these features depend on the data distribution $p(\vx)$ and augmentation procedure $p_\aug(\cdot|\vx)$. Interestingly, the analysis of {\it supervised} learning (SL) often employs a similar dual network scenario, called the \emph{teacher-student setting}~\cite{tian2019student,allen2020backward,Lampinen2018-sl,saad1996dynamics}, where $\cW_2$ are the ground truth weights of a {\it fixed} teacher network, which generates outputs in response to random inputs.  These input-output pairs constitute training data for the first network, which is a student network. Only the student network's weights $\cW_1$ are trained to match the target outputs provided by the teacher.  This yields an interesting mathematical parallel between SL, in which the teacher is fixed and only the student evolves, and SSL, in which both the teacher and student evolve with potentially different dynamics.  This mathematical parallel opens the door to using techniques from SL (e.g.,~\cite{tian2019student}) to analyze SSL.   

\vspacenoarxiv{-0.05in}
\paragraph{Gradient of $\ell_2$ loss for dual deep ReLU networks.} As seen above, the (dis)similarity of representations between a pair of dual networks plays a key role in both SSL and SL.  We thus consider minimizing a simple measure of dissimilarity, the squared $\ell_2$ distance $r := \frac{1}{2}\|\vf_{1,L}-\vf_{2,L}\|^2$ between the final outputs $\vf_{1,L}$ and $\vf_{2,L}$ of two multi-layer ReLU networks with weights $\cW_1$ and $\cW_2$ and inputs $\vx_1$ and $\vx_2$. This gradient formula will be used to analyze multiple contrastive loss functions in Sec.~\ref{sec:simclr}. Without loss of generality, we only analyze the gradient w.r.t $\cW_1$. For each layer $l$, we first define the  \emph{connection} $K_l(\vx)$, a quantity that connects the bottom-up feature vector $\vf_{l-1}$ with the top-down Jacobian $J_l$, which both contribute to the gradient at weight layer $l$.


\begin{definition}[The connection $K_l(\vx)$]
\label{def:connection}
The \emph{connection} $K_l(\vx;\cW):= \vf_{l-1}(\vx;\cW)\otimes J_l^\t(\vx;\cW) \in \rr^{n_l n_{l-1}\times n_L}$. Here $\otimes$ is the Kronecker product.
\end{definition}

\begin{theorem}[Squared $\ell_2$ Gradient for dual deep ReLU networks]
\label{thm:l2-two-tower}
The gradient $g_{W_l}$ of $r$ w.r.t. $W_l \in \rr^{n_l \times n_{l-1}}$ for a single input pair $\{\vx_1, \vx_2\}$ is  
(here $K_{1,l} := K_l(\vx_1;\cW_1)$, $K_{2,l} := K_l(\vx_2;\cW_2)$ and $g_{W_l}:=\vec(\partial r/\partial W_{1,l})$):
\begin{equation}
    \vspace{-0.1in}
    g_{W_l}\!=\! K_{1,l}\left[K^\t_{1,l}\vec(W_{1,l})\!-\!K^\t_{2,l}\vec(W_{2,l})\right]
\end{equation}
\end{theorem}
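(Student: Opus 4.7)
The plan is to combine three standard ingredients: (i) the chain rule applied to $r=\tfrac{1}{2}\|\vf_{1,L}-\vf_{2,L}\|^2$, which produces an outer product involving the top-layer residual, the Jacobian $J_{1,l}$, and the activation $\vf_{1,l-1}$; (ii) the vec/Kronecker identity $\vec(AXB)=(B^\t\otimes A)\vec(X)$, which converts that outer product into an expression in the connection $K_{1,l}=\vf_{1,l-1}\otimes J_{1,l}^\t$; and (iii) the homogeneity property of bias-free ReLU networks, namely $\vf_L = J_l \tilde{\vf}_l = J_l W_l \vf_{l-1}$, which allows each network's output to be rewritten as $K_l^\t \vec(W_l)$. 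Once all three are in hand, the desired identity follows by direct substitution.

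First I would differentiate $r$ with respect to $W_{1,l}$ via the chain rule. The output-space gradient is the residual $\vf_{1,L}-\vf_{2,L}$; pulling it back through $J_{1,l}$ yields $\partial r/\partial \tilde{\vf}_{1,l} = J_{1,l}^\t(\vf_{1,L}-\vf_{2,L})$, and since $\tilde{\vf}_{1,l}=W_{1,l}\vf_{1,l-1}$ the matrix gradient becomes the rank-one outer product $\partial r/\partial W_{1,l} = J_{1,l}^\t(\vf_{1,L}-\vf_{2,L})\vf_{1,l-1}^\t$. Vectorizing with $A=J_{1,l}^\t$, $X=\vf_{1,L}-\vf_{2,L}$, $B=\vf_{1,l-1}^\t$ in the Kronecker identity gives $g_{W_l} = (\vf_{1,l-1}\otimes J_{1,l}^\t)(\vf_{1,L}-\vf_{2,L}) = K_{1,l}(\vf_{1,L}-\vf_{2,L})$.

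The decisive step is then the ReLU-homogeneity lemma. For a fixed input $\vx$, the gating pattern $D_l = \mathrm{diag}(\mathbf{1}\{\tilde{\vf}_l > 0\})$ is constant, so $\vf_l = D_l \tilde{\vf}_l$. Chaining this through layers $l,\ldots,L$ shows that the \emph{same} product $W_L D_{L-1} W_{L-1}\cdots D_{l+1} W_{l+1} D_l$ appears both when expressing $\vf_L$ as a function of $\tilde{\vf}_l$ and when computing $J_l = \partial \vf_L/\partial \tilde{\vf}_l$, giving $\vf_L = J_l \tilde{\vf}_l = J_l W_l \vf_{l-1}$. Applying $\vec(AXB)=(B^\t\otimes A)\vec(X)$ once more (with $A=J_l$, $X=W_l$, $B=\vf_{l-1}$) rewrites this as $\vf_L = (\vf_{l-1}^\t \otimes J_l)\vec(W_l) = K_l^\t \vec(W_l)$. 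Substituting this expression for both networks into $g_{W_l} = K_{1,l}(\vf_{1,L}-\vf_{2,L})$ yields $g_{W_l} = K_{1,l}\bigl[K_{1,l}^\t\vec(W_{1,l}) - K_{2,l}^\t\vec(W_{2,l})\bigr]$, as claimed. The only nontrivial step is the homogeneity lemma: one must verify that the rightmost factor $D_l$ in $J_l$ is literally the same diagonal gating matrix used in the forward pass, and that the absence of biases (together with the absence of ReLU at the top) is exactly what makes $\vf_L = J_l \tilde{\vf}_l$ hold globally rather than only to first order in a perturbation of $\tilde{\vf}_l$. The chain-rule and Kronecker manipulations around it are routine.
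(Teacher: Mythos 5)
Your proof is correct. The core mathematical content is the same as in the paper: the identity $\vf_L = J_l\,\tilde\vf_l$ for bias-free ReLU networks (which you call homogeneity, and which the paper establishes inductively via the ``reversibility'' Lemma and the definition of $V_l^\f$), combined with $\tilde\vf_l = W_l\vf_{l-1}$ and the Kronecker vec identity. The paper's route packages the outer-product chain-rule step and the homogeneity rewrite together inside the inductive Lemma~\ref{lemma:extension-multiplicative}, arriving directly at $\tilde\vg_{1,l} = J_{1,l}^\t[J_{1,l}W_{1,l}\vf_{1,l-1} - J_{2,l}W_{2,l}\vf_{2,l-1}]$, whereas you unbundle them: first get $g_{W_l} = K_{1,l}(\vf_{1,L}-\vf_{2,L})$ from the plain chain rule, then replace each $\vf_{i,L}$ with $K_{i,l}^\t\vec(W_{i,l})$. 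Your version is arguably a cleaner exposition for the ReLU case and makes the role of positive homogeneity explicit; the paper's reversibility framework (with the extra $Q_l$ matrix) buys generality to LeakyReLU, linear, monomial, and $\ell_2$-normalization layers, which is used in the remarks and in the $\ell_2$-normalized variant of the gradient formula. Your caveat that $D_l$ in $J_l$ must coincide with the forward-pass gating, and that bias-freeness plus no top ReLU are what make $\vf_L = J_l\tilde\vf_l$ hold exactly rather than only to first order, is exactly the right thing to flag.
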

Here $\vec(W)$ is a column vector constructed by stacking columns of $W$ together. We used such notation for the gradient $g_{W_l}$ and weights $W_l$ to emphasize certain theoretical properties of SSL learning below. The equivalent matrix form is  
$\partial r/\partial W_{1,l} = 
J_{1,l}^\t \left[ 
J_{1,l} W_{1,l} \vf_{1,l-1}  -
J_{2,l} W_{2,l} \vf_{2,l-1} 
\right] \vf_{1,l-1}^T $. 

See Appendix for proofs of all theorems in the main text. 

\ifarxiv
\textbf{Remarks}. Note that Theorem~\ref{thm:l2-two-tower} can also be applied to networks with other non-linearity than ReLU, as long as they satisfy a more general property called \emph{\revnoun{}} (See Def.~\ref{def:reversibility} in Appendix). This includes linear, LeakyReLU, and monomial activations, such as the quadratic activation function $\psi(x) = x^2$ used in many previous works~\cite{du2018power,soltanolkotabi2018theoretical,allen2020backward}. In this paper, we focus on ReLU and leave other non-linearity for future works. 
\fi

\def\simple{\mathrm{simp}}
\def\triplet{\mathrm{tri}}
\def\nce{\mathrm{nce}}

\def\lsimple{L_{\simple}}
\def\ltriplet{L_{\triplet}}
\def\lnce{L_{\nce}}

\vspacenoarxiv{-0.1in}
\section{The Covariance Operator}
\label{sec:simclr}
\xinlei{One thing to note from the empirical side is that one can (actually according to their code, they did) combine the forward pass of both views into one single forward pass as the network weight is identical for both views.}
As discussed above, SimCLR~\cite{simclr} employs both positive and negative input pairs, and a symmetric network structure with $\cW_1 = \cW_2 = \cW$. Let $\{\vx_1, \vx_+\}$ be a positive input pair from $\vx$, and let
$\{\vx_1, \vx_{k-}\}$ for $k=1,\dots,H$ be $H$ negative pairs.  
These input pairs induce corresponding squared $\ell_2$ distances in output space,
$r_+ := \frac{1}{2}\|\vf_{1,L} - \vf_{+,L}\|^2_2$, and 
$r_{k-} := \frac{1}{2}\|\vf_{1,L} - \vf_{k-,L}\|^2_2$. 

In this paper, we consider three different contrastive losses:
\begin{itemize}
    \item[\textbf{(1)}] the simple contrastive loss $\lsimple := r_+ - r_-$, 
    \item[\textbf{(2)}] (Soft) Triplet loss $\ltriplet^\tau:= \tau \log(1 + e^{(r_+ - r_- + r_0) / \tau})$ (here $r_0 \ge 0$ is the margin). Note that $\lim_{\tau\rightarrow 0}\ltriplet^\tau = \max(r_+ - r_- + r_0, 0)$~\cite{schroff2015facenet}
    \item[\textbf{(3)}] InfoNCE loss $\lnce^\tau$~\cite{oord2018representation}:
\begin{equation}
\vspacenoarxiv{-0.1in}
\!\!\!\lnce^\tau(r_+, \{r_{k-}\}_{k=1}^H) \!\!:=\!\!-\log\frac{e^{-r_+/\tau}}{e^{-r_+/\tau} + \sum_{k=1}^H e^{-r_{k-}/\tau}} \label{eq:contrastive-loss}
\end{equation}
\end{itemize}
Note that when $\|\vu\|_2 = \|\vv\|_2 = 1$, we have $-\frac{1}{2}\|\vu-\vv\|^2_2 = \mathrm{sim}(\vu,\vv) - 1$ where $\mathrm{sim}(\vu,\vv) = \frac{\vu^\t\vv}{\|\vu\|_2\|\vv\|_2}$, and Eqn.~\ref{eq:contrastive-loss} reduces to what the original SimCLR uses (the term $e^{-1/\tau}$ cancels out). 

For simplicity, we move the analysis of the final layer $\ell_2$ normalization to Appendix. When there is no $\ell_2$ normalization, the goal of our analysis is to show that useful weight components grow exponentially in the gradient updates.

We first note one interesting common property: 
\begin{theorem}[Common Property of Contrastive Losses]
\label{thm:contrastive-loss}
For loss functions $L \in \{\lsimple{}, \ltriplet^\tau{}, \lnce^\tau\}$, we have $\frac{\partial L}{\partial r_+} > 0$, $\frac{\partial L}{\partial r_{k-}} < 0$ for $1 \le k \le H$ and $\frac{\partial L}{\partial r_+} + \sum_{k=1}^H \frac{\partial L}{\partial r_{k-}} = 0$. 
\end{theorem}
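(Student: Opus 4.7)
The plan is to verify the three claimed properties by direct computation on each of the three losses. A useful preliminary observation, which I would highlight first, is that the sum-to-zero condition $\partial L/\partial r_+ + \sum_{k}\partial L/\partial r_{k-} = 0$ is equivalent to $L$ being invariant under a simultaneous translation $r_+ \mapsto r_+ + c$, $r_{k-}\mapsto r_{k-}+c$ of all distance arguments by the same constant $c$. Inspecting the three losses, all depend on their arguments only through differences $r_+ - r_{k-}$ (for $\lsimple$ and $\ltriplet^\tau$, which in the stated form use a single negative $H=1$) or through the softmax denominator $e^{-r_+/\tau}+\sum_k e^{-r_{k-}/\tau}$ divided by $e^{-r_+/\tau}$ (for $\lnce^\tau$), so this translation invariance holds. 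That immediately gives property~(3) in each case, leaving only the sign statements to check.

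For the first two, the differentiation is one-line. For $\lsimple = r_+-r_-$ we get $\partial_{r_+}L=1>0$ and $\partial_{r_-}L=-1<0$. For $\ltriplet^\tau = \tau\log(1+e^{(r_+-r_-+r_0)/\tau})$, setting $s := \sigma((r_+-r_-+r_0)/\tau)\in(0,1)$ with $\sigma$ the logistic function, one gets $\partial_{r_+}L = s>0$ and $\partial_{r_-}L=-s<0$. In both cases the two partials sum to zero, which both checks (3) independently and matches the translation-invariance argument.

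For $\lnce^\tau$, I would rewrite
\begin{equation}
\lnce^\tau = \frac{r_+}{\tau} + \log\!\Bigl(e^{-r_+/\tau} + \sum_{k=1}^H e^{-r_{k-}/\tau}\Bigr),
\end{equation}
let $Z := e^{-r_+/\tau}+\sum_k e^{-r_{k-}/\tau}$, and differentiate directly to obtain the softmax-style weights $p_+ := e^{-r_+/\tau}/Z$ and $p_{k-} := e^{-r_{k-}/\tau}/Z$. Then $\partial_{r_+}L = \tau^{-1}(1-p_+) = \tau^{-1}\sum_k p_{k-}>0$ and $\partial_{r_{k-}}L = -\tau^{-1}p_{k-}<0$, with $p_+ + \sum_k p_{k-}=1$ immediately giving property~(3). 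The strict signs use $p_+,p_{k-}\in(0,1)$, which holds because all exponentials are positive and finite.

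I do not anticipate a real obstacle here: each case reduces to elementary differentiation of a log-sum-exp or linear expression, and the common thread (translation invariance in the $r$'s) makes property~(3) transparent rather than a coincidence of three separate calculations. The only care needed is to state explicitly that the statement of the theorem for $\lsimple$ and $\ltriplet^\tau$ is implicitly with $H=1$ as written, and that all signs are strict because the relevant softmax/sigmoid values lie in the open interval $(0,1)$.
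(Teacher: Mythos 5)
Your proof is correct and takes essentially the same approach as the paper: direct differentiation of each loss, yielding exactly the softmax-weight identities the paper records for $\lnce^\tau$ and the (nearly trivial) calculations the paper labels ``obvious'' for $\lsimple$ and $\ltriplet^\tau$. The translation-invariance observation ($L$ depends only on differences $r_+ - r_{k-}$, so the partials sum to zero) is a pleasant conceptual add-on not present in the paper's proof, but it does not change the route.
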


\def\aug{\mathrm{aug}}
\def\covop{\mathrm{OP}}

With Theorem~\ref{thm:l2-two-tower} and Theorem~\ref{thm:contrastive-loss}, we now present our first main contribution of this paper: the gradient in SimCLR is governed by a positive semi-definite (PSD) \emph{covariance operator} at any layer $l$: 

\begin{theorem}[Covariance Operator for $\lsimple$]
\label{thm:contrast-simclr-simple}
With large batch limit, $W_l$'s update under $\lsimple{}$ is $W_l(t+1) = W_l(t) + \alpha \Delta W_l(t)$ ($\alpha$ is the learning rate), where 
\begin{equation}
\vec(\Delta W_l(t)) = \covop_l^\simple(\cW) \vec(W_l(t)). \label{eq:covariance-operator}
\end{equation}
Here $\covop_l^\simple(\cW)\!\!:=\!\!\var_{\vx\sim p(\cdot)}[\bar K_l(\vx;\cW)] \in \rr^{n_ln_{l-1}\times n_ln_{l-1}}$ is the \emph{covariance operator} for $\lsimple{}$, $\bar K_l(\vx;\cW) := \eee{\vx' \sim p_\aug(\cdot|\vx)}{K_l(\vx';\cW)}$ is the expected connection under the augmentation distribution, conditioned on datapoint $\vx$.
\end{theorem}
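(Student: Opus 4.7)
The plan is to apply Theorem~\ref{thm:l2-two-tower} separately to $r_+$ and $r_-$, then exploit the conditional independence structure of SimCLR's sampling procedure when taking the large-batch expectation. Writing $K_l(\vx) := K_l(\vx;\cW)$ under the tied-weight setting $\cW_1=\cW_2=\cW$, Theorem~\ref{thm:l2-two-tower} gives, for the positive and negative squared distances, the gradient contribution from the $\vx_1$-side as
\begin{equation*}
\nabla_{W_{1,l}} r_{\pm} = K_l(\vx_1)\bigl[K_l^\t(\vx_1) - K_l^\t(\vx_\pm)\bigr]\vec(W_l),
\end{equation*}
where $\vx_1,\vx_+ \sim p_\aug(\cdot|\vx)$ are conditionally independent given $\vx\sim p$, while $\vx_- \sim p_\aug(\cdot|\vx')$ is generated from an independent draw $\vx'\sim p$. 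Since $\partial \lsimple/\partial r_+=1$ and $\partial\lsimple/\partial r_-=-1$ (a special case of Theorem~\ref{thm:contrastive-loss}), subtracting the two cancels the diagonal $K_l(\vx_1)K_l^\t(\vx_1)\vec(W_l)$ term exactly, leaving
\begin{equation*}
\nabla_{W_{1,l}}\lsimple = K_l(\vx_1)\bigl[K_l^\t(\vx_-) - K_l^\t(\vx_+)\bigr]\vec(W_l).
\end{equation*}

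Next I would take expectations and invoke independence twice. Conditional independence of $\vx_1$ and $\vx_+$ given $\vx$ yields $\ee{K_l(\vx_1)K_l^\t(\vx_+)} = \eee{\vx\sim p}{\bar K_l(\vx)\bar K_l^\t(\vx)}$, while independence of the two anchor datapoints in the negative pair yields $\ee{K_l(\vx_1)K_l^\t(\vx_-)} = \eee{\vx\sim p}{\bar K_l(\vx)}\,\eee{\vx'\sim p}{\bar K_l^\t(\vx')}$. Substituting back gives
\begin{equation*}
\ee{\nabla_{W_{1,l}}\lsimple} = -\Bigl(\eee{\vx}{\bar K_l(\vx)\bar K_l^\t(\vx)} - \eee{\vx}{\bar K_l(\vx)}\,\eee{\vx}{\bar K_l^\t(\vx)}\Bigr)\vec(W_l),
\end{equation*}
and the parenthesized difference is precisely $\var_{\vx\sim p}[\bar K_l(\vx)] = \covop_l^\simple(\cW)$, which is manifestly PSD. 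Since gradient descent uses $\Delta W_l \propto -\nabla L$, this matches the claimed update $\vec(\Delta W_l)=\covop_l^\simple(\cW)\vec(W_l)$ up to the learning rate $\alpha$ factored out in the theorem statement.

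The main technical subtlety I anticipate is handling the weight-tying: Theorem~\ref{thm:l2-two-tower} reports only the gradient w.r.t.\ $W_{1,l}$, but the true gradient w.r.t.\ the shared $\cW$ also contains a symmetric contribution from the $\vx_\pm$-side of each pair. Fortunately, the augmentation distribution is invariant under the swap $\vx_1 \leftrightarrow \vx_+$ (both drawn i.i.d.\ from $p_\aug(\cdot|\vx)$), and an analogous statement holds for the negative pair, so this second contribution has the identical expectation and merely doubles the prefactor, which is absorbed into $\alpha$. I expect this bookkeeping to be the most error-prone part of writing out the proof carefully; the remaining algebra is a transparent application of conditional independence to factorize the relevant second moments into products of $\bar K_l$.
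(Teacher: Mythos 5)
Your proof is correct and follows essentially the same route as the paper's: apply Theorem~\ref{thm:l2-two-tower} to $r_+$ and $r_-$, cancel the $K_l(\vx_1)K_l^\t(\vx_1)$ term, and then factorize the second moments using conditional independence (for the positive pair) versus full independence (for the negative pair) to recover $-\var_\vx[\bar K_l(\vx)]$. Your final remark about the weight-tied second contribution is a real subtlety that the paper sidesteps by defining $g_{W_l}:=\vec(\partial r/\partial W_{1,l})$ (anchor side only), and your resolution via swap-invariance of the augmentation sampling correctly shows it only doubles the prefactor.
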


Intuitively, the \emph{covariance operator} $\covop_l(\cW)$ is a time-varying PSD matrix over the \emph{entire} training procedure. Therefore, all its eigenvalues are non-negative and at any time $t$, $W_l$ is most amplified along its largest eigenmodes.  Intuitively, $\covop_l(\cW)$ ignores different views of the same sample $\vx$ by averaging over the augmentation distribution to compute $\bar K_l(\vx)$, and then computes the expected covariance of this \emph{augmentation averaged} connection with respect to the data distribution $p(\vx)$.   Thus, at all layers, any variability in the connection across different data points, that survives augmentation averages, leads to weight amplification. This amplification of weights by the PSD data covariance of an augmentation averaged connection constitutes a fundamental description of SimCLR learning dynamics for \emph{arbitrary} data and augmentation distributions and holds at every layer of arbitrarily deep ReLU networks.

Given this result for $\lsimple{}$, one might ask whether the same property holds for more realistic loss functions like $\ltriplet{}$ and $\lnce{}$ that are extensively used in prior works. The answer is yes. Define \emph{weighted} covariance for matrices $X$ and $Y$: 
\begin{equation}
\Cov^\xi[X,Y]\!\!:=\!\!\ee{\xi(X,Y)(X-\ee{X})(Y-\ee{Y})^\t}
\end{equation}
and $\var^\xi[X] := \Cov^\xi[X,X]$. Note that $\Cov[X,Y]$ means $\xi(\cdot) \equiv 1$. Then we have the following theorem: 

\begin{theorem}[Covariance Operator for $\ltriplet^\tau{}$ and $\lnce^\tau$ ($H = 1$, single negative pair)]
\label{thm:contrast-simclr-pairwise}
Let $r := \frac{1}{2}\|\vf_L(\vx) - \vf_L(\vx')\|_2^2$. The covariance operator $\covop_l(\cW)$ has the following form:
\begin{equation}
\covop_l(\cW) = \frac{1}{2}\var^\xi_{\vx,\vx'\sim p(\cdot)}\left[\bar K_l(\vx) - \bar K_l(\vx')\right] + \theta \label{eq:weighted-cov-op}
\end{equation}
where the pairwise weight $\xi(r)$ takes the following form for different loss functions:
\begin{equation}
   \xi(r) = \left\{
   \begin{array}{cc}
   1 & L = \lsimple \\
    \frac{e^{-(r-r_0)/\tau}}{1 + e^{-(r-r_0)/\tau}} & L = \ltriplet^\tau \\
    \frac{1}{\tau} \frac{e^{-r/\tau}}{1 + e^{-r/\tau}} & L = \lnce^\tau{}
   \end{array}
   \right. \label{eq:xi}
\end{equation}
and $\theta\!:=\!\mathcal{O}(\eee{\vx,\vx'}{\sqrt{r(\vx,\vx')}\sigma_\aug(\vx)}\!+\!\eee{\vx}{\sigma^2_\aug(\vx)})$ is the residue term. $\sigma^2_\aug(\vx) := \tr\var_{\vx''\sim p_\aug(\cdot|\vx)}[\vf_L(\vx'')]$ and $\mathrm{tr}$ is the trace of a matrix. For $\lsimple$, $\theta\equiv 0$.  
\end{theorem}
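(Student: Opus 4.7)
My plan is to combine Theorem~\ref{thm:l2-two-tower} with Theorem~\ref{thm:contrastive-loss} via the chain rule, then push expectations through the augmentation sampling. For $L\in\{\ltriplet^\tau,\lnce^\tau\}$ at $H=1$, the chain rule plus the common property $\partial L/\partial r_+=-\partial L/\partial r_-$ contracts the gradient into one scalar coefficient $\eta(r_+,r_-)$, equal to $\sigma((r_+-r_-+r_0)/\tau)$ for the triplet loss and $\tau^{-1}\sigma((r_+-r_-)/\tau)$ for InfoNCE. Inserting Theorem~\ref{thm:l2-two-tower} for each of the two pairwise distances and cancelling the identical $K_l(\vx_1)K_l(\vx_1)^\t$ pieces gives the clean per-sample form
\begin{equation}
-g_{W_l}=\eta(r_+,r_-)\,K_l(\vx_1)\bigl[K_l(\vx_+)-K_l(\vx_-)\bigr]^\t\vec(W_l).
\end{equation}

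Next I would pass to the large-batch limit by conditioning on the pre-augmentation points $(\vx,\vx')$ and integrating over the augmentations $\vx_1,\vx_+\sim p_\aug(\cdot|\vx)$ and $\vx_-\sim p_\aug(\cdot|\vx')$. The key move is to replace $\eta(r_+,r_-)$ by the augmentation-free proxy $\xi(r)$ with $r=\tfrac12\|\vf_L(\vx)-\vf_L(\vx')\|^2$. Because $r_+$ is the squared output distance between two augmentations of the same point, $\mathbb{E}[r_+|\vx]=O(\sigma^2_\aug(\vx))$; similarly $r_--r$ is an $O(\sigma_\aug)$ fluctuation around $r$. The $1/(4\tau)$-Lipschitz property of the sigmoid then yields $\eta(r_+,r_-)=\xi(r)+\Delta$ with $|\Delta|=O(\tau^{-1}(|r_+|+|r_--r|))$. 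Once $\xi(r)$ has been factored out, independence of $\vx_1,\vx_+$ given $\vx$, and of $\vx_1,\vx_-$ given $(\vx,\vx')$, gives $\mathbb{E}[K_l(\vx_1)K_l(\vx_+)^\t|\vx]=\bar K_l(\vx)\bar K_l(\vx)^\t$ and $\mathbb{E}[K_l(\vx_1)K_l(\vx_-)^\t|\vx,\vx']=\bar K_l(\vx)\bar K_l(\vx')^\t$. Symmetrizing over the swap $\vx\leftrightarrow\vx'$, which leaves both $r$ and $\xi(r)$ invariant, turns the resulting asymmetric outer product $\bar K_l(\vx)(\bar K_l(\vx)-\bar K_l(\vx'))^\t$ into $\tfrac12(\bar K_l(\vx)-\bar K_l(\vx'))(\bar K_l(\vx)-\bar K_l(\vx'))^\t$, and since $\mathbb{E}[\bar K_l(\vx)-\bar K_l(\vx')]=0$ by exchangeability, this is precisely $\tfrac12\var^\xi[\bar K_l(\vx)-\bar K_l(\vx')]$. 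The $\lsimple$ case has $\eta\equiv 1$ identically, so $\Delta=0$ and $\theta=0$, consistent with Theorem~\ref{thm:contrast-simclr-simple}.

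The main obstacle will be bounding the residue $\theta$ tightly enough to match the stated form $\mathcal{O}(\mathbb{E}[\sqrt{r}\,\sigma_\aug]+\mathbb{E}[\sigma_\aug^2])$. By positive homogeneity of the bias-free ReLU network one has the identity $K_l^\t(\vx)\vec(W_l)=J_l(\vx)\tilde\vf_l(\vx)=\vf_L(\vx)$, so $(K_l(\vx_+)-K_l(\vx_-))^\t\vec(W_l)=\vf_L(\vx_+)-\vf_L(\vx_-)$, whose $\ell_2$ norm is controlled by $\sqrt{2r}+O(\sigma_\aug)$. Combined with $|\Delta|=O(\tau^{-1}(\sigma_\aug^2+\sigma_\aug\sqrt{r}))$ via Cauchy-Schwarz, the cross-term contributes $\mathbb{E}[\sqrt{r}\,\sigma_\aug]$ and the pure noise term contributes $\mathbb{E}[\sigma_\aug^2]$, matching the claimed residue. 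Carrying out this bookkeeping carefully, especially tracking which expectation each augmentation fluctuation lives inside and verifying that the Cauchy-Schwarz slack does not blow up the constants, will be the most delicate part of the argument.
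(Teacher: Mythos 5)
Your plan matches the paper's proof route essentially step for step: start from the per-sample gradient formula with the $K_l(\vx_1)K_l^\t(\vx_1)$ cancellation (the paper states this as Lemma~\ref{lemma:cov-grad}), replace $-\partial L/\partial r_-$ by its value $\xi(r)$ at the unaugmented pair and carry the fluctuation into a residue $\epsilon$, use independence of the three augmentation draws given $(\vx,\vx')$ to pull in $\bar K_l$, symmetrize over $\vx\leftrightarrow\vx'$ using $\xi$'s symmetry to produce the weighted variance, and then bound the residue via a Taylor/Lipschitz argument on the sigmoid plus Cauchy--Schwarz. This is exactly what the appendix does.

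Two points to tighten up. First, the constant: for $\lnce^\tau$ the prefactor $\tau^{-1}$ in $\eta$ and the $\tau^{-1}$ from the sigmoid's argument combine, so the Lipschitz bound on $\epsilon$ should scale as $\mathcal{O}(\tau^{-2})$, not $\tau^{-1}$; the paper uses the bound $|\partial F/\partial r_\pm|\le 1/\tau^2$. Second, and more substantively, your idea to exploit the homogeneity identity $K_l^\t(\vx)\vec(W_l)=\vf_L(\vx)$ to turn $(K_l(\vx_+)-K_l(\vx_-))^\t\vec(W_l)$ into an output difference of size $\sqrt{2r}+\mathcal{O}(\sigma_\aug)$ is an extra observation the paper does not use, and if you then multiply it by $|\epsilon|=\mathcal{O}(\sigma_\aug^2+\sigma_\aug\sqrt{r})$ you will pick up cross-terms like $\mathbb{E}[r\,\sigma_\aug]$ and $\mathbb{E}[\sqrt{r}\,\sigma^2_\aug]$ that do \emph{not} appear in the stated residue. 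The paper sidesteps this by not refining the matrix factor at all: it simply bounds $\|K_l(\vx_1)(K_l^\t(\vx_+)-K_l^\t(\vx_-))\|\le 2M_K^2$ with $M_K:=\max_\vx\|K_l(\vx)\|$, so \emph{all} of the $\sqrt{r}\,\sigma_\aug$ and $\sigma_\aug^2$ dependence comes from bounding $\mathbb{E}[|\epsilon|]$ alone, via the expansion of $r_--r$ into a linear-in-$\|\vf(\vx)-\vf(\vx')\|$ term plus a quadratic augmentation-fluctuation term. If you want to retain your sharper identity, you should still bound the $K_l(\vx_1)$ factor by a constant and be careful not to double-insert the output-difference scaling; otherwise, fall back to the coarse $M_K$ bound as the paper does and the stated form drops out directly.
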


Note that Theorem~\ref{thm:contrast-simclr-pairwise} is a strict extension of Theorem~\ref{thm:contrast-simclr-simple}: with constant pairwise weight $\xi(r(\vx,\vx'))\equiv 1$, $\frac{1}{2}\var_{\vx,\vx'}\left[\bar K_l(\vx) - \bar K_l(\vx')\right] = \var_{\vx}\left[\bar K_l(\vx)\right]$. 

Intuitively, the difference between $\lsimple{}$ and $\ltriplet{}$, $\lnce{}$ is that the last two put an emphasis on distinctive samples $\vx$ and $\vx'$ whose representations $\vf_{1,L}$ and $\vf_{2,L}$ are close (i.e., small $r(\vx,\vx')$). This makes sense since the goal of the contrastive learning is to learn a representation that separates distinct samples. 

\def\exact{\mathrm{exact}}

\vspace{-0.1in}
\subsection{Discussion and Remarks}
\vspace{-0.1in}
\textbf{The residue term $\theta$}. The residue term in Theorem~\ref{thm:contrast-simclr-pairwise} is related to the variance of the output representation within data augmentation, $\tr\var_{\vx''\sim p_\aug(\cdot|\vx)}[\vf_L(\vx'')]$. For $\lsimple{}$, the expression for covariance operator is exact and $\theta = 0$. While for $\ltriplet^\tau{}$ and $\lnce^\tau{}$, this term is nonzero, we expect it to shrink during the training so that these two losses drive learning more and more through a PSD operator at each layer $l$. Alternatively, we could construct a specific loss function whose covariance operator follows Eqn.~\ref{eq:weighted-cov-op} exactly but with $\theta\equiv 0$:  
\begin{equation}
    L^{\tau,\exact} = \sum_{k=1}^H \mathrm{StopGrad}(\xi^\tau(r_k)) (r_+ - r_{k-}) \label{eq:exact-loss}
\end{equation}
where $r_k = r(\vx, \vx_k)$ is the distance of two unaugmented distinct data points $\vx$ and $\vx_k$ whose augmentation gives $\vx_+$ and $\vx_k$. It is easy to check that $L^{\tau,\exact}$ satisfies the condition $\frac{\partial L}{\partial r_+} + \sum_{k=1}^H \frac{\partial L}{\partial r_{k-}} = 0$. In Sec.~\ref{sec:experiment}, we show that using $L^{\tau,\exact}_{\nce{}}$, the downstream performance in STL-10 is comparable with using regular $\lnce^\tau{}$ loss. This justifies that Theorem~\ref{thm:contrast-simclr-pairwise} indeed separates the most important part of gradient update from less important ones in the residue. 

\textbf{Difference from Neural Tangent Kernels (NTK)}. Note our covariance operator is a completely different mathematical object than the Neural Tangent Kernel (NTK)~\cite{jacot2018neural,arora2019exact}. The NTK is defined in the \emph{sample} space and is full-rank if samples are distinct. For very wide networks (compared to sample size), the NTK seldom changes during training and leads to a convex optimization landscape. On the other hand, the covariance operator $\var_{\vx}\left[\bar K_l(\vx)\right]$ is defined per layer on any data distribution and network of finite width, and does not grow with sample size. 
$\var_{\vx}\left[\bar K_l(\vx)\right]$ may change over the entire training procedure but always remains PSD, leading to many interesting behaviors beyond learning with a fixed kernel. Furthermore, while NTK has nothing to do with data augmentation, $\var_{\vx}\left[\bar K_l(\vx)\right]$ is tied to data augmentation and SSL architectures.  

\def\opintra{\mathrm{EV}}
\def\opinter{\mathrm{VE}}
\def\op{\mathrm{OP}}

\vspace{-0.1in}
\subsection{More general loss functions}
\vspace{-0.1in}
\label{sec:more-general-loss}
For more general loss functions in which $\frac{\partial L}{\partial r_+} + \sum_{k=1}^H \frac{\partial L}{\partial r_{k-}} = \beta \neq 0$, we have a corollary:
\begin{corollary}
SimCLR under $\lsimple^\beta := (1 + \beta) r_+ - r_-$ has the following gradient update rule at layer $l$:
\begin{equation}
    \vec(\Delta W_l)\!=\!\op_l\vec(W_l)\!=\!(-\beta \opintra_l + \opinter_l)\vec(W_l)
\end{equation}
where $\opintra_l$ and $\opinter_l$ are intra-augmentation and inter-augmentation covariance operators at layer $l$:
\begin{eqnarray}
    \opintra_l &:=& \eee{\vx\sim p(\cdot)}{\var_{\vx'\sim p_\aug(\cdot|\vx)}[K_l(\vx')]} \\
    \opinter_l &:=& \var_{\vx\sim p(\cdot)}\left[\eee{\vx'\sim p_\aug(\cdot|\vx)}{K_l(\vx')}\right]
\end{eqnarray}
\end{corollary}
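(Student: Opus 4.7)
The plan is to reduce this corollary to Theorem~\ref{thm:l2-two-tower}, exactly as was done for Theorem~\ref{thm:contrast-simclr-simple}, but now tracking the contribution of $r_+$ separately from $r_-$ instead of letting the two exactly cancel in the intra-augmentation direction. First, for a tied-weight pair with $\cW_1=\cW_2=\cW$, I would symmetrize Theorem~\ref{thm:l2-two-tower} over the two towers to obtain the identity
\begin{equation}
\eee{}{\partial r / \partial W_l} = \eee{\vx_1,\vx_2}{\bigl(K_l(\vx_1)-K_l(\vx_2)\bigr)\bigl(K_l(\vx_1)-K_l(\vx_2)\bigr)^\t}\,\vec(W_l),
\end{equation}
valid for any joint law of $(\vx_1,\vx_2)$. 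Taking $\Delta W_l=-\nabla L$ and absorbing numerical constants into $\alpha$, it then suffices to evaluate this expectation once under the positive-pair law and once under the negative-pair law, and to add the two contributions with coefficients $-(1+\beta)$ and $+1$.

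For the positive pair, $\vx_1,\vx_+$ are i.i.d.\ draws from $p_\aug(\cdot|\vx)$ conditional on $\vx\sim p(\cdot)$. Using the standard identity $\eee{}{(U-V)(U-V)^\t}=2\var[U]$ for i.i.d.\ $U,V$ and then averaging over $\vx$, the positive-pair contribution to $-\nabla L$ becomes proportional to $\eee{\vx\sim p(\cdot)}{\var_{\vx'\sim p_\aug(\cdot|\vx)}[K_l(\vx')]}\vec(W_l)=\opintra_l\,\vec(W_l)$. For the negative pair, $\vx_1\sim p_\aug(\cdot|\vx)$ and $\vx_-\sim p_\aug(\cdot|\vx')$ with $\vx,\vx'$ independent draws from $p(\cdot)$, so $\vx_1$ and $\vx_-$ are i.i.d.\ draws from the marginal $\vx_1\sim (p\circ p_\aug)(\cdot)$. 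The same identity gives the negative-pair contribution as the total variance $\var_{\vx_1}[K_l(\vx_1)]\vec(W_l)$, and the law of total variance decomposes this as
\begin{equation}
\var_{\vx_1}[K_l(\vx_1)] \;=\; \eee{\vx}{\var_{\vx'\sim p_\aug(\cdot|\vx)}[K_l(\vx')]} \;+\; \var_{\vx}\!\bigl[\bar K_l(\vx)\bigr] \;=\; \opintra_l + \opinter_l.
\end{equation}

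Combining these two computations with the loss weights $(1+\beta)$ and $-1$ gives
\begin{equation}
\vec(\Delta W_l)\;\propto\;-(1+\beta)\opintra_l\,\vec(W_l)+(\opintra_l+\opinter_l)\vec(W_l)\;=\;\bigl(-\beta\,\opintra_l+\opinter_l\bigr)\vec(W_l),
\end{equation}
which is the claimed decomposition. The only step requiring real care is the symmetrization of Theorem~\ref{thm:l2-two-tower} under tied weights and the subsequent bookkeeping that reduces the $(K_l(\vx_1)-K_l(\vx_2))(K_l(\vx_1)-K_l(\vx_2))^\t$ matrix to a plain variance; everything else is the law of total variance applied to the conditional augmentation distribution. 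In particular, the corollary is a direct extension of Theorem~\ref{thm:contrast-simclr-simple}: when $\beta=0$ the intra-augmentation pieces coming from $r_+$ and $r_-$ cancel exactly, recovering the pure inter-augmentation operator $\opinter_l=\var_\vx[\bar K_l(\vx)]$, whereas an unbalanced loss ($\beta\neq0$) leaves a residual intra-augmentation term whose sign is opposite to $\opinter_l$ and therefore acts to contract rather than amplify directions of within-augmentation variability.
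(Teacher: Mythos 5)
Your proof is correct and reaches the right operator, but it takes a genuinely different route from the paper. The paper's proof is incremental: it notes that the $r_+-r_-$ piece of $\lsimple^\beta=(r_+-r_-)+\beta r_+$ already yields $\opinter_l$ by Theorem~\ref{thm:contrast-simclr-simple}, and then computes only the new piece directly from Lemma~\ref{lemma:cov-grad}, namely $\eee{\vx_1,\vx_+}{\partial r_+/\partial W_l}=\eee{\vx'\sim p_\aug}{K_lK_l^\t}-\bar K_l\bar K_l^\t$ averaged over $\vx$, which is exactly $\opintra_l$. You instead symmetrize the Theorem~\ref{thm:l2-two-tower} gradient into the outer-product form $(K_l(\vx_1)-K_l(\vx_2))(K_l(\vx_1)-K_l(\vx_2))^\t$, apply the i.i.d.\ identity $\ee{(U-V)(U-V)^\t}=2\var[U]$ to the positive- and negative-pair laws separately, and then invoke the law of total variance to split the negative-pair variance into $\opintra_l+\opinter_l$. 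Your route is self-contained, makes the intra/inter decomposition conceptually transparent, and explains why the intra piece cancels exactly at $\beta=0$; the paper's route is shorter given that Theorem~\ref{thm:contrast-simclr-simple} is already in hand. Two small points to tighten: (i) the identity you write is genuinely ``valid for any joint law'' only if you mean the full tied-weight gradient $\partial r/\partial W_l=\partial r/\partial W_{1,l}+\partial r/\partial W_{2,l}$, which holds pointwise as an outer product; if you instead use the paper's tower-1-only gradient, the symmetrized form only emerges after averaging over an exchangeable joint law, so you should say which you mean; and (ii) this choice introduces a global factor of $2$ relative to the corollary's stated $\Delta W_l$, which you dispatch by absorbing constants into $\alpha$ --- acceptable, but worth noting that the corollary, following Theorem~\ref{thm:contrast-simclr-simple}, defines $\Delta W_l$ before multiplying by the learning rate, so the factor is nonstandard relative to the paper's normalization.
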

In our experiments, we found that $\beta < 0$ accelerates the training a bit relative to plain SimCLR, possibly due to the fact that $\covop_l$ remains a PSD matrix.

\mathchardef\minus="2D
\def\nodes#1{\mathcal{N}_{#1}}
\def\ch#1{\nodes{#1}^{\mathrm{ch}}}
\def\latents#1{\mathcal{Z}_{#1}}
\def\cardinal#1{m_{#1}}
\def\pr{\mathbb{P}}
\def\cZ{\mathcal{Z}}
\def\diag{\mathrm{diag}}

\def\pred{\mathrm{p}}
\def\sym{\mathrm{s}}
\def\base{\mathrm{b}}
\def\bnterm{\delta W^{\mathrm{\scriptscriptstyle BN}}_l}

\makeatletter
\newcommand{\doublewidetilde}[1]{{%
  \mathpalette\double@widetilde{#1}%
}}
\newcommand{\double@widetilde}[2]{%
  \sbox\z@{$\m@th#1\widetilde{#2}$}%
  \ht\z@=.9\ht\z@
  \widetilde{\box\z@}%
}
\makeatother

\begin{figure*}
    \centering
    \includegraphics[width=0.75\textwidth]{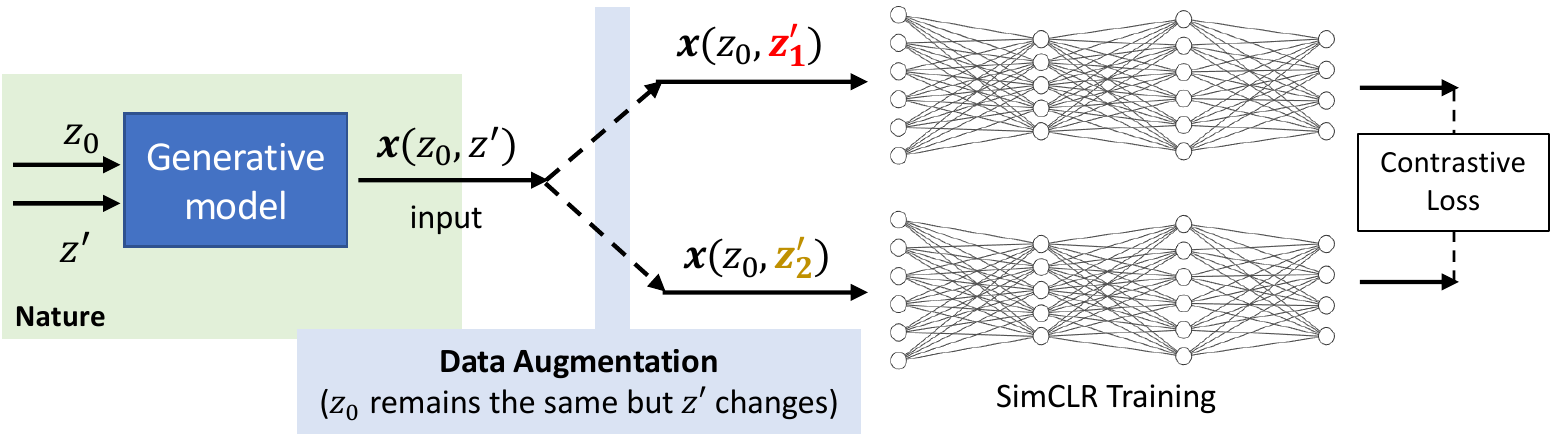}
    \vspace{-0.1in}
    \caption{\small To analyze the functionality of the \emph{covariance operator} $\var_{z_0}\left[\bar K_l(z_0)\right]$ (Theorem~\ref{thm:contrast-simclr-pairwise}), we have Assumption~\ref{assumption:generative-model}: \textbf{(1)} data are generated from some generative model with latent variable $z_0$ and $z'$, \textbf{(2)} augmentation takes $\vx(z_0,z')$, changes $z'$ but keeps $z_0$ intact.}
    \label{fig:overview-assumption}
\end{figure*}

\vspacenoarxiv{-0.1in}
\section{Feature Emergence through Covariance Operator Based Amplification}
\vspacenoarxiv{-0.1in}
\label{sec:feature-emergence}
The covariance operator in Theorem~\ref{thm:contrast-simclr-simple}-\ref{thm:contrast-simclr-pairwise} applies to arbitrary data distributions and augmentations. While this conclusion is general, it is also abstract. To understand what feature representations emerge, we study learning under more specific assumptions on the generative process underlying the data.

\begin{assumption}
\label{assumption:generative-model}
We make two assumptions under the generative paradigm of  (Fig.~\ref{fig:overview-assumption}):
\vspace{-0.1in}
\begin{itemize}
    \item[\textbf{(1)}] The input $\vx=\vx(z_0,z')$ is generated by two groups of latent variables, \emph{class/sample-specific} latents $z_0$ and \emph{nuisance} latents $z'$.
    \item[\textbf{(2)}] Data augmentation changes $z'$ while preserving $z_0$. 
\end{itemize}
\end{assumption}

For brevity of analysis, we use simple loss $\lsimple$ and  Theorem~\ref{thm:contrast-simclr-simple}. Since $\vx = \vx(z_0, z')$, the covariance operator can be represented using expectations over $z_0$ and $z'$: 
\begin{eqnarray}
\covop_l &=& \var_{\vx\sim p(\cdot)}[\eee{\vx'\sim p_\aug(\cdot|\vx)}{K_l(\vx')}] \nonumber \\
&=& \var_{z_0}[\eee{z'|z_0}{K_l(\vx(z_0,z'))}] = \var_{z_0}[\bar K_l(z_0)] \nonumber
\end{eqnarray} 
We leave the analysis of $\lnce{}$ and $\ltriplet{}$ as a future work. At a high-level, they work under similar principles. 

In this setting, we first show that a linear neuron performs dimensionality reduction within an augmentation preserved subspace. We then consider how nonlinear neurons with local receptive fields (RFs) can learn to detect simple objects.  Finally, we extend our analysis to deep ReLU networks exposed to data generated by a hierarchical latent tree model (\hltm), proving that, with sufficient over-parameterization, there exist lucky nodes at initialization whose activation is correlated with latent variables underlying the data, and that SimCLR amplifies these initial lucky representations during learning.

\vspacenoarxiv{-0.05in}
\subsection{SSL and the single neuron: illustrative examples}
\vspacenoarxiv{-0.05in}
\label{sec:one-layer}

\label{sec:non-linear}
\begin{figure}
    \centering
    \includegraphics[width=0.5\textwidth]{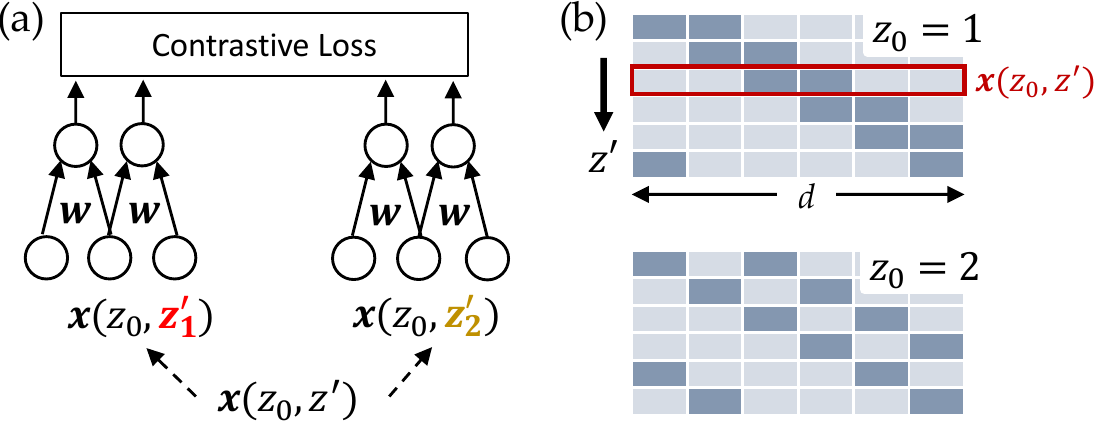}
    \vspace{-0.3in}
    \caption{\small \textbf{(a)} 1-layer convolutional network trained with SimCLR. \textbf{(b)} Its associated generative models: two different objects \texttt{11} ($z_0\!\!=\!\!1$) and \texttt{101} ($z_0\!\!=\!\!2$) undergoes 1D translation. Their locations are specified by $z'$ and subject to change by data augmentation.} 
    \vspace{-0.2in}
    \label{fig:1d-example}
\end{figure}

\paragraph{A single linear neuron performs dimensionality reduction in a subspace preserved by data augmentation.} For a single linear neuron ($L=1, n_L=1$), the connection in Def.~\ref{def:connection} is simply $K_1(\vx) = \vx$. Now imagine that the input space $\vx$ can be decomposed into the direct sum of a semantically relevant subspace, and its orthogonal complement, which corresponds to a subspace of nuisance features.  
Suppose the augmentation distribution $p_\aug(\cdot|\vx)$ is obtained by multiplying $\vx$ by a random Gaussian matrix that acts \emph{only} in the nuisance subspace, thereby identically preserving the semantic subspace. 
Then the augmentation averaged connection $\bar K_1(\vx) = Q^s \vx$ where $Q^s$ is a projection operator onto the semantic subspace. In essence, only the projection of data onto the semantic subspace survives augmentation averaging, as the nuisance subspace is scrambled. Then $\covop = \var_\vx[\bar K_1(\vx)] = Q^s \var_\vx[\vx] Q^{s\t}$.  Thus the covariance of the data distribution, projected onto the semantic subspace, governs the growth of the weight vector $W_1$, demonstrating SimCLR on a single linear neuron performs dimensionality reduction within a semantic subspace preserved by data augmentation.

\textbf{A single linear neuron cannot detect localized objects}.
We now consider a generative model in which data vectors can be thought of as images of objects of the form $\vx(z_0,z')$ where $z_0$ is an important latent semantic variable denoting object identity and $z'$ is a nuisance latent representing its spatial location. The augmentation procedure scrambles position while preserving object identity (Fig.~\ref{fig:1d-example}): 
\begin{equation}
    \vx(z_0,z') = \left\{
        \begin{array}{cc}
            \ve_{z'} + \ve_{(z'+1)\ \mathrm{mod}\ d}& z_0 = 1 \\
            \ve_{z'} + \ve_{(z'+2)\ \mathrm{mod}\ d}& z_0 = 2,
        \end{array}
    \right. \label{eq:translation-generative}
\end{equation}
Specifically, $0\!\le\!z'\!\le\!d-1$ denotes $d$ discrete translational object positions on a periodic ring and
$z_0\in\{1,2\}$ denotes two possible objects \texttt{11} and \texttt{101}. 
The distribution is uniform both over objects and positions: $p(z_0,z') = \frac{1}{2d}$. Augmentation shifts the object to a uniformly random position via $p_\aug(z'|z_0) = 1/d$. For a single linear neuron $K_1(\vx) = \vx$, and the augmentation-averaged connection is $\bar K_1(z_0) = \frac{2}{d}\vone$, and is actually independent of object identity $z_0$ (both objects activate two pixels at any location). Thus $\covop_1 = \var_{z_0}\left[\bar K_1(z_0)\right] = 0$ and no learning happens. 

We next show {\it both} a local RF and nonlinearity can rescue this unfortunate situation.
%
%
\paragraph{A local RF alone does not help.} With the same generative model, now consider a linear neuron with a local RF of width $2$. Within the RF only four patterns can arise: \texttt{00}, \texttt{01}, \texttt{10}, \texttt{11}. Taking the expectation over $z'$ given $z_0$ yields $\bar K_1(z_0\!\!=\!\!1) = \frac{1}{d}\left[\vx_{11} + \vx_{01} + \vx_{10} + (d-3)\vx_{00}\right]$ and $\bar K_1(z_0\!\!=\!\!2) = \frac{1}{d}\left[2\vx_{01} + 2\vx_{10} + (d-4)\vx_{00}\right]$. Here, $\vx_{11} \in \rr^{2}$ denotes pattern \texttt{11}. This yields (here $\vu: = \vx_{11} + \vx_{00} - \vx_{01} - \vx_{10}$): 
\begin{equation}
    \covop_1 = \var_{z_0}\left[\bar K_1(z_0)\right] = \frac{1}{4d^2}\vu\vu^\t \label{eq:cov-local-field}
\end{equation}
and $\covop_1\in\rr^{2\times 2}$ since the RF has width 2. Note that the signed sum of the four pattern vectors in $\vu$ actually cancel, so that $\vu = \vzero$, implying $\covop_1 = 0$ and no learning happens. 

\textbf{A nonlinear neuron with local RF can learn to detect object selective features}. With a ReLU neuron with weight vector $\vw$, from Def.~\ref{def:connection}, the connection is now $K_1(\vx,\vw) = \psi'(\vw^\t \vx) \vx$. Suppose at initialization, $\vw(t)$ happens to be selective for a {\it single} pattern $\vx_p$ (where $p \in \{\texttt{00},\texttt{01},\texttt{10},\texttt{11}\}$), i.e., $\vw(t)^\t \vx_p > 0$ and $\vw(t)^\t \vx_{p'} < 0$ for $p' \neq p$.  The augmentation averaged connection is then $\bar K_1(z_0) \propto \vx_p$ where the proportionality constant depends on object identity $z_0$. Since this averaged connection varies with object identity $z_0$ for all $p$, the covariance operator $\covop_1$ is nonzero and is given by $\var_{z_0}\left[\bar K_1(z_0)\right] = c_p \vx_p\vx_p^\t$ where $c_p>0$ is some constant. By Theorem \ref{thm:contrast-simclr-simple}, the dot product $\vx_p^\t\vw(t)$ grows over time: 
\begin{eqnarray}
    \vx_p^\t\vw(t+1) \!\!&\!=\!&\!\! \vx_p^\t\left(I_{2\times 2} + \alpha c_p \vx_p\vx_p^\t \right) \vw(t) \\
    \!\!&\!=\!&\!\! \left(1 + \alpha c_p\|\vx_p\|^2\right)\vx_p^\t\vw_j(t) > \vx_p^\t\vw_j(t) > 0 \nonumber \label{eq:exclusion}
\end{eqnarray}
Thus the learning dynamics amplifies the initial selectivity to the object selective feature vector $\vx_p$ in a way that cannot be done with a linear neuron. Note this argument also holds with bias terms and initial selectivity for more than one pattern. Moreover,  with a local RF, the probability of weak initial selectivity to some local object sensitive features is high, and we may expect amplification of such weak selectivity in real neural network training, as observed in other settings~\citep{Williams2018-eu}.

\def\vamu{\va_\mu}

\begin{table*}[]
    \centering
    \begin{adjustbox}{width=1\textwidth}
    \small
    \setlength{\tabcolsep}{1.8pt}
    \begin{tabular}{l|l|l|l}
        \textbf{Symbol} & \textbf{Definition} & \textbf{Size} & \textbf{Description}  \\
        \hline
        $\nodes{l}$, $\latents{l}$ & & & The set of all nodes and all latent variables at layer $l$. \\
        $\nodes{\mu}$, $\ch{\mu}$ & &  & Nodes corresponding to latent variable $z_\mu$. $\ch{\mu}$ are children under $\nodes{\mu}$. \\
        \hline
        $P_{\mu\nu}$ & $[\pr(z_\nu|z_\mu)]$ & $2\times 2$ & The top-down transition probability from $z_\mu$ to $z_\nu$. \\ 
        $v_j(z_\mu)$, $\tilde v_j(z_\mu)$ & $\eee{z}{f_j|z_\mu}$, $\mathbb{E}_z[\tilde f_j|z_\mu]$ & scalar, scalar & Expected (pre-)activation $f_j$ (or $\tilde f_j$)  given $z_\mu$  ($z_\mu$'s descendants are marginalized). \\
        $\vf_\mu$, $\vf_{\ch{\mu}}$ & $[f_j]_{j\in\nodes{\mu}}$, $[f_k]_{k\in\ch{\mu}}$ & $|\nodes{\mu}|$, $|\ch{\mu}|$ & Activations for all nodes $j \in \nodes{\mu}$ and for the children of $\nodes{\mu}$ \\ 
        \hline
        $\rho_{\mu\nu}$ & $2 \pr(z_\nu\!\!=\!\!1|z_\mu\!\!=\!\!1) - 1$ & scalar in $[-1,1]$ & Polarity of the transitional probability. \\
        $\rho_0$ & $\pr(z_0=1) - \pr(z_0=0)$ & scalar & Polarity of probability of root latent $z_0$. \\ 
        $s_k$ & $\frac{1}{2}(v_k(1) - v_k(0))$ & scalar & Discrepancy of node $k$ w.r.t its latent variable $z_{\nu(k)}$. \\
        $\vamu$ & $[\rho_{\mu\nu(k)}s_k]_{k\in\ch{\mu}}$ & $|\ch{\mu}|$ & Child selectivity vector. 
    \end{tabular}
    \end{adjustbox}
    \vspace{-0.1in}
    \caption{\small Notation for Sec.~\ref{sec:sb-hltm} (Symmetric Binary \hltm).\label{tab:notation-tree}}
    \vspace{-0.1in}
\end{table*}

\def\vxi{\boldsymbol{\xi}}

\vspacenoarxiv{-0.05in}
\section{Deep ReLU SSL training with Hierarchical Latent Tree Models (\hltm{})}
\vspacenoarxiv{-0.05in}
\label{sec:motivatehltm}

Here we describe a general Hierarchical Latent Tree Model (\hltm{}) of data, and the structure of a multilayer neural network that learns from this data. 

\textbf{Motivation}. The \hltm{} is motivated by the hierarchical structure of our world in which objects may consist of parts, which in turn may consist of subparts. Moreover the parts and subparts may be in different configurations in relation to each other in any given instantiation of the object, or any given subpart may be occluded in some views of an object.  

\textbf{The Generative Model}. The \hltm{} is a very simple toy model that represents a highly abstract mathematical version of such a hierarchical structure.
It consists of a tree structured generative model of data (see Fig.~\ref{fig:hltm}). At the top of the tree (i.e. level $L$), a single categorical latent variable $z_0$ takes one of $m_0$ possible integer values in $\{0\,\dots,m_0-1 \}$, with a prior distribution $\pr(z_0)$.  
One can roughly think of the value of $z_0$ as denoting the identity of one of $m_0$ possible objects.
At level $L-1$ there is a set of latent variables $\latents{L-1}$. 
This set is indexed by $\mu$ and each latent variable $z_\mu$ is itself a categorical variable that takes one of $\cardinal\mu$ values in $\{0,\dots,\cardinal\mu-1\}$. Roughly we can think of each latent variable $z_\mu$ as corresponding to a part, and the different values of $z_\mu$ reflect different configurations or occlusion states of that part.  
The transition matrix (conditional probability) $\pr(z_\mu|z_0)\in \rr^{m_0 \times m_\mu}$ can roughly be thought of as collectively reflecting the distribution over the presence or absence, as well as configurational and occlusional states of each part $\mu$, conditioned on object identity $z_0$.  This process can continue onwards to describe subparts of parts, until layer $l=0$ (the ``pixel'' level). All the latent variables at $l = 0$ are now visible and form a signal $\vx(z_0, z')$ received by the deep network for SSL training. 

\textbf{Data Augmentation}. Given a sample $\vx=\vx(z_0,z')$, data augmentation involves \emph{resampling} all $z_\mu$ (which are $z'$ in Fig.~\ref{fig:hltm}), while fixing the root $z_0$. This models augmentations as changing part configurations while keeping object identity $z_0$ fixed. 

\begin{figure}[t]
    \centering
    \includegraphics[width=0.5\textwidth]{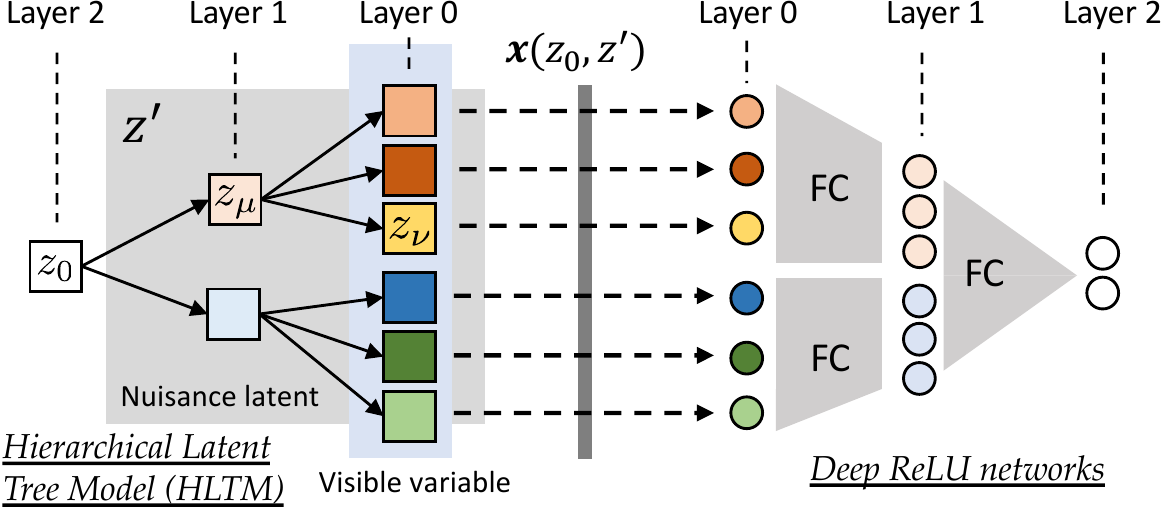}
    \vspace{-0.3in}
    \caption{\small Hierarchical Latent Tree Models. A latent variable $z_\mu$, and its corresponding nodes $\nodes{\mu}$ in multi-layer ReLU side, covers a subset of input $\vx$, resembling local receptive fields in ConvNet.}
    \label{fig:hltm}
    \vspace{-0.2in}
\end{figure}

\textbf{The neural network.} We now consider the multi-layer ReLU network that learns from data generated from \hltm{} (right hand side of Fig.~\ref{fig:hltm}). For simplicity let $L=2$. The neural network has a set of input neurons that are in one to one correspondence with the pixels or visible variables $z_\nu$ that arise at the leaves of the \hltm{}, where $l=0$. For any given object $z_0$ at layer $l = 2$, and its associated parts states $z_\mu$ at layer $l=1$, and visible feature values $z_\nu$ at layer $l=0$, the input neurons of the neural network receive {\it only} the visible feature values $z_\nu$ as real analog inputs.  Thus the neural network \emph{does not have direct access to} the latent variables $z_0$ and $z_\mu$ that generate these visible variables.

\textbf{The objective}. Under this setting, one key question is: \emph{what do the hidden units of the neural network learn?} In particular, can the network learn some hidden unit $j$ whose activation $f_j$ correlates well with the value of a latent variable $z_\mu$? Or more precisely, does $\ee{f_j|z_\mu}$ correlate strongly with $z_\mu$ even if $j$ \emph{never receives any direct supervision} from $z_\mu$ during SSL training? In this paper, we make a first attempt to address this question in a simplified setting.

\subsection{Symmetric Binary \hltm{} (SB-\hltm{})}
\label{sec:sb-hltm}
To ease the analysis, we consider a simpler version of \hltm{}: symmetric binary \hltm{}. At layer $l$, we have latent \emph{binary} variables $\{z_\mu\}$, where $\mu\in\latents{l}$ indexes different latent variables and each $z_\mu\in \{0,1\}$. The topmost latent variable is $z_0$. Following the tree structure, for $\mu\in \latents{l}$ and $\nu_1,\nu_2\in\latents{l-1}$, conditional independence holds: $\pr(z_{\nu_1},z_{\nu_2}|z_\mu)\!\!=\!\!\pr(z_{\nu_1}|z_\mu) \pr(z_{\nu_2}|z_\mu)$. For $\pr(z_\nu|z_\mu)$, we assume it is \emph{symmetric}: for $\mu\in\latents{l}$ and $\nu\in\latents{l-1}$:
\begin{equation}
    \pr(z_\nu\!\!=\!\!1|z_\mu\!\!=\!\!1)= \pr(z_\nu\!\!=\!\!0|z_\mu\!\!=\!\!0)=(1+\rho_{\mu\nu})/2
\end{equation}
where the \emph{polarity} $\rho_{\mu\nu} \in [-1,1]$ measures how informative $z_\mu$ is. If $\rho_{\mu\nu} = \pm 1$ then there is no stochasticity in the top-down generation process. If  $\rho_{\mu\nu}=0$, then there is no information in the downstream latents and the posterior of $z_0$ given the observation $\vx$ can only be uniform. See Appendix for more general cases. 

The final sample $\vx$ is a collection of all visible leaf variables (Fig.~\ref{fig:hltm}), and thus depends on all latent variables.  
Corresponding to the hierarchical tree model, each neural network node $j\in\nodes{l}$ maps to a unique $\mu = \mu(j) \in \latents{l}$. Let $\nodes{\mu}$ be all nodes that map to $\mu$. While in the pixel level, there is a 1-1 correspondence between the children $\nu$ of a subpart $\mu$ and the pixel, in the hidden layer, more than one neuron could correspond to $z_\mu$ and thus $|N_\mu|>1$, which is a form of \emph{over-parameterization}. We further let $\ch{\mu}$ denote the subset of nodes that provide input to nodes in $\nodes{\mu}$. For $j\in \nodes{\mu}$, its activation $f_j$ only depends on the value of $z_\mu$ and its descendant latent variables, through input $\vx$. Define $v_j(z_\mu) := \eee{z}{f_j|z_\mu}$ as the expected activation conditioned on $z_\mu$. See Tbl.~\ref{tab:notation-tree} for a summary of symbols.

\subsubsection{Lucky nodes at initialization}
\label{sec:lucky-node-main-text}
We mainly study the following question: for a latent variable $z_\mu$ at some intermediate layer, is there a node $j$ in the deep ReLU network at the corresponding layer that correlates strongly with $z_\mu$? In binary \hltm{}, this is means asking whether the \emph{selectivity} $s_j := (v_j(1) - v_j(0)) / 2$ is high after training.
If $|s_j|$ is large (or \emph{highly selective}), then the node $j$ changes its behavior drastically for $z_\mu = 0\ \mathrm{or}\ 1$, and thus $j$ is highly correlated with the latent variable $z_\mu$. We show this arises over training in two steps. 

First, we prove that given sufficient \emph{over-parameterization} ($|\nodes{\mu}| \gg 1$), even at initialization, without any training, we can find some lucky nodes with weak  selectivity:
\begin{theorem}[Theorem Sketch, Lucky node at initialization for SB-\hltm{}]
\label{thm:lucky-node}
When random weight initialization and $|\nodes{\mu}| = \mathcal{O}(c\cdot e^{c/2}\log 1 / \eta)$, with probability at least $1 -\eta$, there exists at least one node $j \in\nodes{\mu}$ so that the \emph{pre-activation gap} $\tilde v_j(1) - \tilde v_j(0) = 2\vw_j^\t\vamu > 0$ and its selectivity $|s_j| \ge \phi(\rho^2_{\mu\nu}, \{s_k\}_{k\in \ch{\mu}}, c)$.
\end{theorem}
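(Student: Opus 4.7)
The argument naturally splits into three parts: deriving the announced formula for the pre-activation gap, running an anti-concentration argument over $|\nodes{\mu}|$ independently initialized nodes, and then propagating a pre-activation gap through the ReLU to a post-activation selectivity.

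First I would verify the identity $\tilde v_j(1) - \tilde v_j(0) = 2\vw_j^\t \vamu$. Since $\tilde f_j = \sum_{k\in\ch{\mu}} w_{jk} f_k$, taking conditional expectations gives $\tilde v_j(z_\mu) = \sum_k w_{jk}\, \eee{z}{f_k \mid z_\mu}$. Using the tree-structured conditional independence together with the SB-\hltm{} symmetry $\pr(z_\nu\!\!=\!\!1 \mid z_\mu\!\!=\!\!1) - \pr(z_\nu\!\!=\!\!1 \mid z_\mu\!\!=\!\!0) = \rho_{\mu\nu}$, the contribution of child $k$ to the gap becomes $\rho_{\mu\nu(k)}(v_k(1)-v_k(0)) = 2\rho_{\mu\nu(k)} s_k$, which is exactly the $k$-th entry of $\vamu$. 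Summing over $k$ gives the advertised inner-product form.

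Next I would lower-bound the probability that a single Gaussian-initialized weight vector satisfies $\vw_j^\t \vamu \ge t\,\sigma_w \|\vamu\|$ for a chosen threshold $t$ tied to the parameter $c$. Because $\vw_j$ is zero-mean with independent coordinates, $\vw_j^\t \vamu$ is Gaussian with variance $\sigma_w^2\|\vamu\|^2$, so Gaussian anti-concentration gives a positive lower bound $q(t)$ on the probability of exceeding the threshold. The nodes in $\nodes{\mu}$ have independently drawn weights, so the probability that \emph{no} node succeeds is at most $(1-q(t))^{|\nodes{\mu}|}$. Setting this $\le \eta$ yields $|\nodes{\mu}| = \Omega(\log(1/\eta)/q(t))$, and routine Gaussian tail estimates turn the prefactor into the $c\cdot e^{c/2}$ scaling in the statement. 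This simultaneously forces $\vw_j^\t \vamu > 0$, which is exactly the sign condition on the pre-activation gap.

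The last step is to upgrade a positive pre-activation gap into a selectivity bound $|s_j|\ge\phi(\rho^2_{\mu\nu},\{s_k\}_{k\in\ch{\mu}},c)$. Here $s_j = \tfrac{1}{2}(\eee{z}{\psi(\tilde f_j)\mid z_\mu\!\!=\!\!1} - \eee{z}{\psi(\tilde f_j)\mid z_\mu\!\!=\!\!0})$, so I would approximate $\tilde f_j\mid z_\mu$ as Gaussian with mean $\tilde v_j(z_\mu)$ and conditional variance $\tilde\sigma_j^2(z_\mu)$ controlled by $\|\vw_j\|$ and the second moments of the children $\{f_k\}$. Using the closed form $\mathbb{E}[\psi(m+\sigma Z)] = \sigma\varphi(m/\sigma) + m\,\Phi(m/\sigma)$ and its monotonicity in $m$, a positive mean gap $\tilde v_j(1)-\tilde v_j(0)$ produces a strictly positive selectivity whose size is governed by the signal-to-noise ratio $c \sim \|\vamu\|^2/\tilde\sigma_j^2$ captured in step two.

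\paragraph{Main obstacle.}
The subtlest piece is quantitatively controlling the conditional variance $\tilde\sigma_j^2(z_\mu)$, which enters both the Gaussian anti-concentration bound and the ReLU transfer factor. It depends on second moments of the child activations which, unlike their first-moment discrepancies $s_k$, are not directly parameterized in the statement, and it is precisely the ratio $\|\vamu\|^2/\tilde\sigma_j^2$ that plays the role of $c$ in the final bound. Getting a clean expression likely requires either restricting to a normalization regime in which $\|\vw_j\|^2$ is calibrated against the children's variances (which is where the $c\,e^{c/2}$ scaling enters the over-parameterization rate), or bounding the child second moments in terms of the polarities $\rho_{\mu\nu}$ via the tree recursion. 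A related subtlety is that the anti-concentration argument must enforce both a positive sign and a magnitude on $\vw_j^\t\vamu$ simultaneously so that the post-ReLU selectivity cannot be wiped out in either direction.
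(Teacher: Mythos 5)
Your first step (deriving $\tilde v_j(1)-\tilde v_j(0)=2\vw_j^\t\vamu$ from the tree's conditional independence and the symmetric transition) matches the paper exactly, and your general three-part structure is the right shape. But the middle step has a genuine gap that your own ``main obstacle'' paragraph gestures at without resolving correctly. You propose a one-dimensional anti-concentration bound on the single scalar $\vw_j^\t\vamu$, asking only that it be positive and not too small. This controls the \emph{gap} $\tilde v_j^+ - \tilde v_j^-$ but says nothing about where $\tilde v_j^+$ and $\tilde v_j^-$ sit individually, since $\tilde v_j^\pm = \vw_j^\t(\bar\vv_{\ch{\mu}} \pm \vamu)$ also depends on the orthogonal projection $\vw_j^\t\bar\vv_{\ch{\mu}}$, which remains essentially a free Gaussian even after you condition on $\vw_j^\t\vamu$ being large. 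If both pre-activations land well below zero, ReLU saturates and the post-activation selectivity can be wiped out despite a large positive pre-activation gap; your ReLU transfer formula $\mathbb{E}[\psi(m+\sigma Z)] = \sigma\varphi(m/\sigma) + m\Phi(m/\sigma)$ makes this explicit, since the resulting $s_j$ depends on the absolute positions $m^\pm$, not just their difference. The paper handles this by lower-bounding a \emph{joint} event on the pair $(\tilde v_j^+,\tilde v_j^-)$ — namely $\tilde v_j^+ \geq \sqrt{c}\,\sigma_w\|\vu^+_{\ch{\mu}}\|/\sqrt{|\ch{\mu}|}$ \emph{and} $\tilde v_j^- < 0$ — using a bivariate Mills-ratio bound (Lemma~\ref{lemma:2d-gaussian-lower-bound}), where the correlation $\gamma$ between the two coordinates enters the exponent and produces the $c\,e^{c/2}$ scaling. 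That joint event is what makes the post-ReLU computation clean: $\tilde v_j^- < 0$ forces $\psi(\tilde v_j^-)=0$, so the paper can bound $v_j^-$ purely by the conditional fluctuation via sharpened Jensen (Lemma~\ref{lemma:sharpened-jensen}), and $\tilde v_j^+$ large gives a lower bound on $v_j^+$ by ordinary Jensen.

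Two further mismatches worth noting. You assume Gaussian weights; the paper uses uniform initialization and invokes the CLT over $|\ch{\mu}|$ to approximate the 2D marginal of $(\tilde v_j^+,\tilde v_j^-)$ — harmless in spirit, but it is the bivariate Gaussian that is being approximated, not just $\vw_j^\t\vamu$. And your route through the exact ReLU Gaussian moment formula for $\tilde f_j\mid z_\mu$ is a genuinely different (and stronger-assumption) alternative to the paper's Jensen-based bounds: the paper only needs a variance bound $\var_z[\tilde f_j\mid z_\mu]$, which it expands by the law of total variance into a child-variance term $\sigma_l^2$ and a parent-conditional binomial term $s_k^2(1-\rho^2_{\mu\nu})$, whereas your formula additionally assumes conditional Gaussianity of $\tilde f_j\mid z_\mu$. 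Your instinct that the conditional variance is the central quantitative input to step three is right; your framing of it as the main obstacle is a little off, because the real missing ingredient is the second constraint $\tilde v_j^- < 0$, not the variance control itself.
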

See Appendix for detailed theorem description and proof. $\vamu$ is defined in Fig.~\ref{fig:hltm-further-notation}(c) and $\phi$ is a weak threshold that increases monotonically w.r.t. all its arguments. This means that higher polarity, more selectivity in the lower layer and more over-parameterization (larger $|\nodes{\mu}|$) all boost weak initial selectivity of a lucky node.  

\begin{figure}
    \centering
    \includegraphics[width=0.5\textwidth]{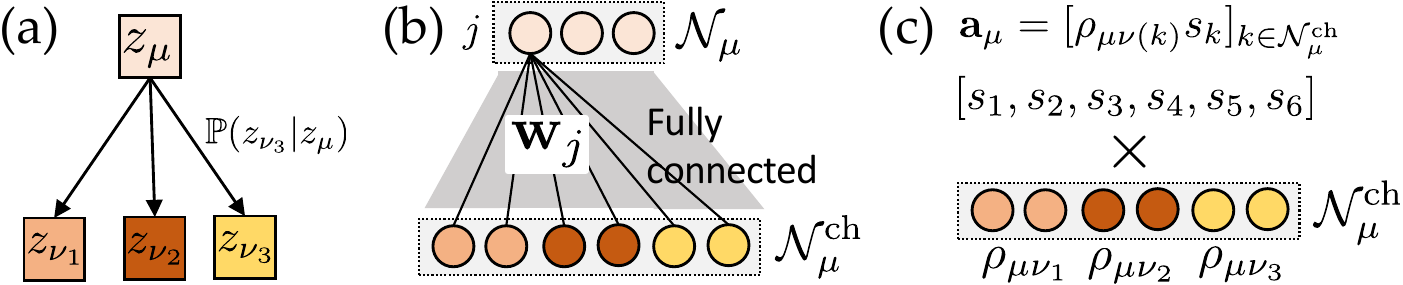}
    \vspace{-0.3in}
    \caption{\small Notation used in Theorem~\ref{thm:lucky-node} and Theorem~\ref{thm:cov-operator}. \textbf{(a)} Latent variable structure. \textbf{(b)} A fully connected part of \hltm{}. Conceptually, after SSL training, nodes (in circle) should realize the latent variables (in square) of the same color, while they never receive direct supervision from them.
    $\vw_j$ is a weight vector that connect top node $j\in \nodes{\mu}$ to all nodes in $\ch{\mu}$. For this FC part, we can also compute a covariance operator $\covop_\mu$ and Jacobian $J_\mu$. \textbf{(c)} $\vamu := [\rho_{\mu\nu(k)}s_k]_{k\in \ch{\mu}}$ is element-wise product between selectivity and polarity of all child nodes of $z_\mu$. Its length is $|\ch{\mu}|$.}
    \label{fig:hltm-further-notation}
\end{figure}

\subsubsection{Training with constant Jacobian}
\label{sec:hltm-training}
Second, we show training strengthens this weak initial selectivity. We compute covariance operator $\covop_\mu =\var_{z_0}[\bar K_\mu(z_0)]$ at different fully-connected part (Fig.~\ref{fig:hltm-further-notation}(b)), indexed by latent variable $z_\mu$. Here $\bar K_\mu(z_0) = \eee{z'}{\vf_{\ch{\mu}} \otimes J_\mu^\t|z_0}$. Here we assume $J_\mu$ is a constant matrix and mainly check the term $\eee{z'}{\vf_{\ch{\mu}}|z_0}$, which turns out to have a nice close form. 
\begin{theorem}[Activation covariance in SB-\hltm]
\label{thm:cov-operator}
$\var_{z_0}[\eee{z'}{\vf_{\ch{\mu}}|z_0}] = o_\mu\vamu\vamu^\t$. Here $o_\mu := \rho^2_{0\mu} (1-\rho_0^2)$.
\end{theorem}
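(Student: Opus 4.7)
}
The plan is to reduce the vector-valued variance to an entry-wise calculation by exploiting the tree's conditional independence, then express each entry in closed form using the symmetric-binary-channel structure, and finally take a two-point variance over $z_0$. Let $X_k(z_0) := \eee{z'}{f_k \mid z_0}$ for each child $k \in \ch{\mu}$, so that we must compute $\var_{z_0}[X]$ where $X = [X_k]_{k\in\ch\mu}$. By the tree Markov property, given the latent $z_{\nu(k)}$ at the node $k$'s layer, the activation $f_k$ is conditionally independent of $z_0$, so by the tower rule
\[
X_k(z_0) \;=\; \sum_{z_{\nu(k)} \in \{0,1\}} \pr(z_{\nu(k)} \mid z_0)\, v_k(z_{\nu(k)}),
\]
using the definition $v_k(z_{\nu(k)}) = \eee{z}{f_k \mid z_{\nu(k)}}$.

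Next, I would use the fact that a composition of symmetric binary channels is again a symmetric binary channel whose polarity is the product of the individual polarities. Concretely, marginalizing through $z_\mu$ and applying the symmetry assumption twice gives
\[
\pr(z_{\nu(k)}=1 \mid z_0=1) \;=\; \tfrac{1+\rho_{0\mu}}{2}\cdot \tfrac{1+\rho_{\mu\nu(k)}}{2} + \tfrac{1-\rho_{0\mu}}{2}\cdot \tfrac{1-\rho_{\mu\nu(k)}}{2} \;=\; \tfrac{1+\rho_{0\mu}\rho_{\mu\nu(k)}}{2},
\]
and likewise $\pr(z_{\nu(k)}=0\mid z_0=0)=(1+\rho_{0\mu}\rho_{\mu\nu(k)})/2$. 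Writing $\bar v_k := \tfrac{1}{2}(v_k(1)+v_k(0))$ and recalling $s_k := \tfrac12(v_k(1)-v_k(0))$, this immediately yields the compact form $X_k(z_0=1) = \bar v_k + \rho_{0\mu}\rho_{\mu\nu(k)} s_k$ and $X_k(z_0=0) = \bar v_k - \rho_{0\mu}\rho_{\mu\nu(k)} s_k$. Introducing the shorthand $\delta_k := \rho_{0\mu}\rho_{\mu\nu(k)} s_k$, which is exactly the $k$-th component of $\rho_{0\mu}\vamu$, this says $X_k(z_0) = \bar v_k + (2z_0-1)\delta_k$ (identifying $z_0\in\{0,1\}$ with $\pm 1$ in the obvious way).

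Finally, I would compute the two-point variance under $\pr(z_0=1) = (1+\rho_0)/2$. Since the random variable $2z_0-1$ has mean $\rho_0$ and variance $1-\rho_0^2$, and the dependence of $X_k$ on $z_0$ factors through the single scalar $2z_0-1$, we obtain for every pair $k,l\in\ch\mu$
\[
\Cov_{z_0}(X_k,X_l) \;=\; (1-\rho_0^2)\,\delta_k \delta_l \;=\; \rho_{0\mu}^2(1-\rho_0^2)\,(\vamu)_k (\vamu)_l.
\]
Assembling this into matrix form gives exactly $\var_{z_0}[\eee{z'}{\vf_{\ch{\mu}}\mid z_0}] = o_\mu\, \vamu\vamu^\t$ with $o_\mu = \rho_{0\mu}^2(1-\rho_0^2)$, as claimed.

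There is no serious obstacle here; the main content is the polarity-product identity for composed symmetric binary channels together with the observation that, once $X_k(z_0)$ is affine in the single bit $2z_0-1$, the covariance matrix is automatically rank-one. The only care needed is in correctly invoking the tree Markov property to reduce $\eee{z'}{f_k\mid z_0}$ to a single two-hop marginal (so that intermediate latent variables at other branches are harmlessly averaged out), and in tracking signs when moving between the $\{0,1\}$ and $\{\pm 1\}$ parameterizations of the binary latents.
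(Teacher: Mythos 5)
Your proof is correct and the underlying calculation is the same as the paper's. The difference is in how you get there: the paper first proves a general lemma for categorical-latent TR-\hltm{} (Lemma~\ref{lemma:act-cov-tr-hltm} in the appendix), expressing the covariance as $V_{0,\ch{\mu}}^\t (P_0 - P_0\vone\vone^\t P_0) V_{0,\ch{\mu}}$ via a transition-matrix decomposition $P_{\mu\nu} = \frac{1}{m_\nu}\vone\vone^\t + C_{\mu\nu}$, and only then specializes to the binary case where $C_{\mu\nu}=\tfrac12\rho_{\mu\nu}\vq\vq^\t$ and $P_0 - P_0\vone\vone^\t P_0 = \tfrac14(1-\rho_0^2)\vq\vq^\t$. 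You instead work entry-wise directly in the symmetric binary case: you identify each $X_k(z_0)$ as an affine function $\bar v_k + (2z_0-1)\delta_k$ of the single Bernoulli bit and observe that this forces the covariance matrix to be rank one with coefficient $\var[2z_0-1]=1-\rho_0^2$. Your two-hop computation of $\pr(z_\nu=1\mid z_0=1) = \tfrac{1+\rho_{0\mu}\rho_{\mu\nu}}{2}$ is the scalar form of the paper's $C(\rho_{0\mu})C(\rho_{\mu\nu})=C(\rho_{0\mu}\rho_{\mu\nu})$ closure-under-multiplication lemma, implicitly taking $\rho_{0\mu}$ to already be the composite polarity along the tree path, which is how the paper defines it. Your route is shorter and more transparent for the binary setting; the paper's matrix decomposition buys generality to categorical latents, which it uses nowhere else in the main text. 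Both are valid.
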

Here $\rho_{0\mu}$ is the polarity between $z_0$ and $z_\mu$, which might be far apart in hierarchy. A simple computation (See Lemma and associated remarks in Appendix) shows that $\rho_{0\mu} = \prod_{0,\ldots,\alpha,\beta,\ldots,\mu}\rho_{\alpha\beta}$ is a product of consequent polarities in the tree hierarchy. 


Theorem~\ref{thm:cov-operator} suggests when $\rho_{0\mu}$ and $\|\vamu\|$ are large, the covariance $\covop_\mu = o_\mu\vamu\vamu^\t \otimes J_\mu^\t J_\mu$ has large magnitude and training is faster. Note that $\|\vamu\|$ is large when the children are highly selective (large $|s_k|$) and/or the magnitude of the polarity $|\rho_{\mu\nu}|$ is large (i.e., the top-down generation process is more deterministic). 

Note that if $\max_{\alpha\beta} |\rho_{\alpha\beta}| < 1$, then $\lim_{L\rightarrow+\infty} \rho_{0\mu}\rightarrow 0$, i.e., polarity $\rho_{0\mu}$ vanishes for very deep latent tree models due to mixing of the Markov Chain. In this case, $P_{0\mu}$ becomes uniform, making $\covop_\mu$ small. Thus training in SSL is faster at the top layers where the covariance operators have larger magnitude.  

If we further assume $J_\mu^\t J_\mu = I$, then after the gradient update, for the ``lucky'' node $j$ we have:
\begin{eqnarray}
\vspacenoarxiv{-0.05in}
\vamu^\t\vw_j(t + 1) \!\!&\!=\!&\!\! \vamu^\t\left[I + \alpha o_\mu\vamu\vamu^\t\right]\vw_j(t) \\
\!\!&\!=\!&\!\! (1+\alpha o_\mu\| \vamu\|^2_2) \vamu^\t\vw_j(t) >  \vamu^\t\vw_j(t) > 0 \nonumber
\end{eqnarray}
which means that the pre-activation gap $\tilde v_j(1) - \tilde v_j(0) = 2\vw_j^\t\vamu$ grows over time and the latent variable $z_\mu$ is \emph{learned} (instantiated as $f_j$) during training, even if the network is never supervised with its true value. While in practice $J_\mu$ changes over time, here we give a simple demonstration and leave detailed analysis for future work. 

In Sec.~\ref{sec:experiment}, as predicted by our theory, the intermediate layers of deep ReLU networks do learn the latent variables of the \hltm\ (see Tbl.~\ref{tbl:hltm} below and Appendix). 

\vspacenoarxiv{-0.10in}
\section{Experiments}
\vspacenoarxiv{-0.10in}
\label{sec:experiment}
We test our theoretical findings through experiments on CIFAR-10~\cite{krizhevsky2009learning} and STL-10~\cite{coates2011analysis}. We use a simplified linear evaluation protocol: the linear classifier is trained on frozen representations computed \emph{without} data augmentation. This reuses pre-computed representations and is more efficient. We use ResNet-18 as the backbone and all experiments are repeated 5 times for mean and std. Please check detailed setup in Appendix.

\textbf{Verification of Theorem~\ref{thm:contrast-simclr-pairwise}}. One question is whether the residue term actually plays a major role in the gradient update. We verify the dominant role of the covariance operator covariance operator over the residue term, by designing a specific weighted $\ell_2$ loss function ($\lnce^{\tau,\exact{}}$ in Eqn.~\ref{eq:exact-loss}) that yields the identical covariance operator as in Theorem~\ref{thm:contrast-simclr-pairwise} but has no residue term. Tbl.~\ref{tbl:exact-versus-lnce} shows the performance is comparable with a normal InfoNCE loss. Furthermore, if we add noise ($=\eta \cdot \mathrm{XavierInit}(W_l)$) to the gradient update rule, the performance improves slightly. 

\begin{table}[t]
\centering
\small
\setlength{\tabcolsep}{1pt}
\caption{Comparison between $L^{\tau,\exact}_{\lnce{}}$ and $\lnce^\tau$. Top-1 accuracy with linear evaluation protocol. $\tau=0.5$.\label{tbl:exact-versus-lnce}}
\begin{adjustbox}{width=1\linewidth}
\begin{tabular}{l||c|c|c}
         & 100 epochs & 300 epochs & 500 epochs \\ 
\hline
&\multicolumn{3}{c}{\emph{CIFAR-10}} \\
\hline 
$\lnce^{\tau,\exact}$ & $83.84\pm 0.18$ & $87.49 \pm 0.32$  &  $87.65\pm 0.34$ \\
$\lnce^{\tau,\exact}$ ($\eta=0.01$) & $84.04\pm 0.16$ & $\mathbf{88.23 \pm 0.09}$  & $\mathbf{88.82\pm 0.25}$ \\
$\lnce^\tau$ & $\mathbf{84.20\pm 0.09}$ & $87.57\pm 0.31$ & $87.81\pm 0.37$ \\
\hline
&\multicolumn{3}{c}{\emph{STL-10}} \\
\hline 
$\lnce^{\tau,\exact}$ & $78.62\pm 0.25$ & $82.57\pm 0.18$ & $83.59\pm 0.14$\\ 
$\lnce^{\tau,\exact}$ ($\eta=0.01$) & $78.27\pm 0.25$ & $82.33 \pm 0.24$  &  $83.72\pm 0.16$ \\
$\lnce^\tau$ & $\mathbf{78.82\pm 0.10}$ & $\mathbf{82.68\pm 0.20}$ & $\mathbf{83.82\pm 0.11}$ \\ \hline
\end{tabular}
\end{adjustbox}
\vspace{-0.2in}
\end{table}
%

\textbf{Extended contrastive loss function.} When $\frac{\partial L}{\partial r_+} + \frac{\partial L}{\partial r_-}{=}\beta{\neq}0$, the covariance operator can still be derived (Sec.~\ref{sec:more-general-loss}) and remains PSD when $\beta{<}0$.
As shown in Tbl.~\ref{tab:negative-beta}, we find that (1) $\beta{<}0$ performs better at first 100 epochs but converges to similar performance after 500 epochs, suggesting that $\beta{<}0$ might accelerate training, (2) $\beta{>}0$ worsens performance. We report a similar observation on ImageNet~\cite{imagenet_cvpr09}, where our default SimCLR implementation achieves 64.6\% top-1 accuracy with 60-epoch training; setting $\beta = -0.5$ yields 64.8\%; and setting $\beta{>}0$ hurts the performance.

\begin{table}[t]
    \centering
    \small
    \setlength{\tabcolsep}{3pt}
    \caption{Extended contrastive loss function with $\beta \neq 0$ (Sec.~\ref{sec:more-general-loss}).\label{tab:negative-beta}}
    \begin{tabular}{c||c|c|c}
                & 100 epochs & 300 epochs & 500 epochs \\ 
    \hline
    $\beta$       &   \multicolumn{3}{c}{\emph{CIFAR-10}} \\
    \hline
    $-0.5$    & $85.17\pm 0.36$ & $88.00\pm 0.29$ & $88.14\pm 0.27$ \\
    $+0.2$    & $82.87\pm 0.08$  &  $87.16\pm 0.15$  & $87.29\pm 0.07$ \\ 
    \hline
    $\beta$       &   \multicolumn{3}{c}{\emph{STL-10}} \\
    \hline
    $-0.5$     & $79.72\pm 0.09$ & $82.80\pm 0.23$ & $83.75\pm 0.16$ \\
    $+0.2$      & $77.48\pm 0.29$  &  $82.15\pm 0.02$  & $83.33\pm 0.19$ \\
    \hline
    \end{tabular}
    \vspace{-0.2in}
\end{table}

\textbf{Hierarchical Latent Tree Model (\hltm)}. We implement \hltm{} and check whether the intermediate layers of deep ReLU networks learn the corresponding latent variables at the same layer. The degree of learning is measured by the normalized correlations between the ground truth latent variable $z_\mu$ and its best corresponding node $j\in \nodes{\mu}$.  Tbl.~\ref{tbl:factors-in-bn-simclr} indicates this measure increases with over-parameterization and learning, consistent with our analysis (Sec.~\ref{sec:sb-hltm}). More experiments in Appendix.

\begin{table}[h!]
\centering
\small
\label{tbl:hltm}
\caption{\small Normalized Correlation between the topmost latent variable (most difficult to learn) in SB-\hltm{} and topmost nodes in deep ReLU networks ($L = 5$) trained with SimCLR with NCE loss. With more over-parameterization, correlations are higher with lower std on 10 trials at both init and end of training, }
\label{tbl:factors-in-bn-simclr}
\setlength{\tabcolsep}{1pt}
\begin{tabular}{c||c|c||c|c}
    \multirow{2}{*}{$\rho_{\mu\nu}$} & \multicolumn{2}{c||}{$|\nodes{\mu}|=2$} & \multicolumn{2}{c}{$|\nodes{\mu}|=10$} \\
      & Initial & Converged & Initial & Converged \\
    \hline
    $\sim \mathrm{U}[0.7,1]$ & $0.35\pm 0.20$ & $0.62\pm 0.29$ & $0.62\pm  0.10$ & $0.85\pm  0.05$ \\ 
    $\sim \mathrm{U}[0.8,1]$ & $0.48\pm 0.23$ & $0.72\pm 0.31$ & $0.75\pm  0.08$ & $0.91\pm  0.03$ \\
    $\sim \mathrm{U}[0.9,1]$ & $0.66\pm 0.28$ & $0.80\pm 0.29$ & $0.88\pm  0.05$ & $0.96\pm  0.01$
\end{tabular}
\vspace{-0.2in}
\end{table}

\section{Conclusion and Future Works}

In this paper, we propose a novel theoretical framework to study self-supervised learning (SSL) paradigms that consist of dual deep ReLU networks. We analytically show that the weight update at each intermediate layer is governed by a covariance operator, a PSD matrix that amplifies weight directions that align with variations across data points which survive averages over augmentations. We show how the operator interacts with multiple generative models that generate the input data distribution, including a simple 1D model with circular translation and hierarchical latent tree models. Experiments support our theoretical findings. 

To our best knowledge, our work is the first to open the blackbox of deep ReLU neural networks to bridge contrastive learning, (hierarchical) generative models, augmentation procedures, and the emergence of features and representations. We hope this work opens new opportunities and perspectives for the research community. 

\bibliographystyle{icml2021}
\bibliography{references}

\clearpage 
\appendix

\setcounter{lemma}{0}
\setcounter{corollary}{0}
\setcounter{theorem}{0}

\onecolumn


\begin{figure}[h!]
    \centering
    \includegraphics[width=\textwidth]{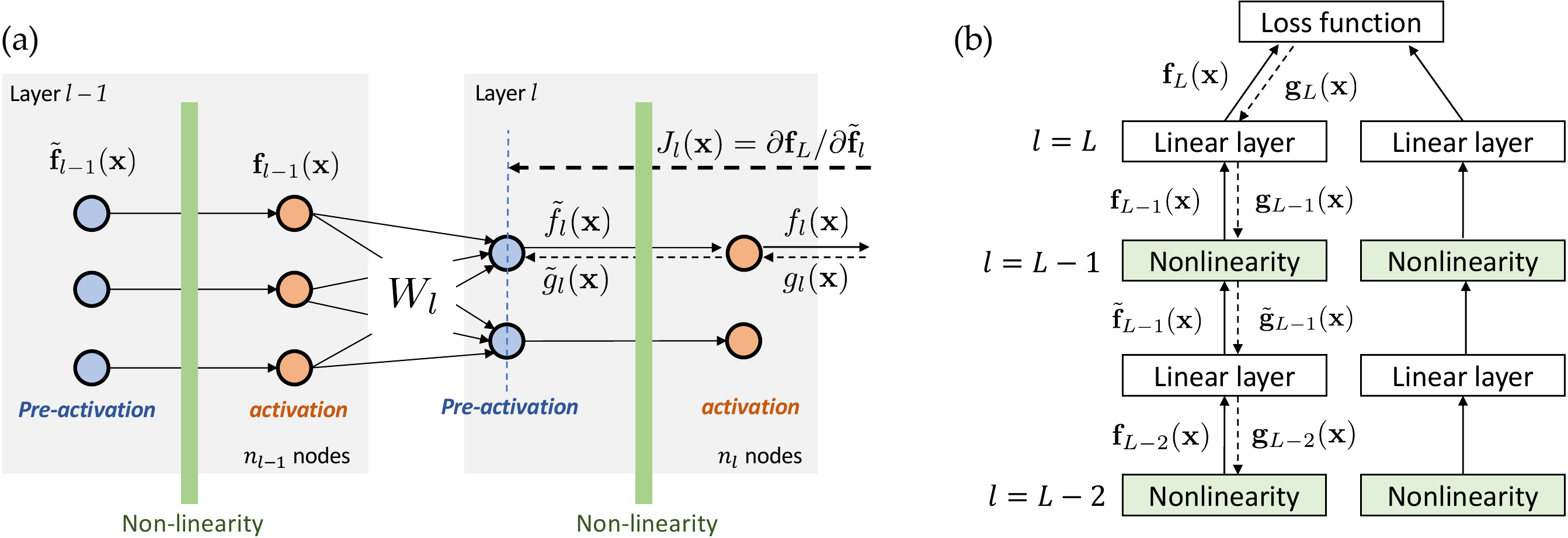}
    \caption{Notation and basic setting. \textbf{(a)} Definition of activation $\vf_l$, pre-activation $\tilde \vf_l$, backpropagated gradient before nonlinearity $\vg_l$, backpropagated gradient after nonlinearity $\tilde\vg_l$, and Jacobian $J_l(\vx) := \partial \vf_L / \partial \tilde\vf_l$. \textbf{(b)} Dual network structure.}
    \label{fig:appendix-basic-setting}
\end{figure}

\section{Background and Basic Setting (Section~\ref{sec:background})}
\subsection{Lemmas}

\def\f{\mathrm{f}}
\def\b{\mathrm{b}}
\def\n{\mathrm{n}}

\begin{definition}[\revnoun{}]
\label{def:reversibility}
A layer $l$ is \emph{\rev{}} if there is a $G_l(\vx;\cW) \in \rr^{n_l\times n_{l-1}}$ so that the \emph{pre-activation} at layer $l$ satisfies $\tilde \vf_l(\vx;\cW) = G_l(\vx;\cW)\tilde\vf_{l-1}(\vx;\cW)$ and backpropagated gradient \emph{after nonlinearity} $\tilde \vg_{l-1} = G^\t_l(\vx;\cW) Q^\t_l(\vx;\cW)\tilde\vg_l$ for some matrix $Q_l(\vx;\cW)\in \rr^{n_l\times n_l}$. A network is \emph{\rev{}} if all layers are.
\end{definition}
Note that many different kinds of layers have this reversible property, including linear layers (MLP and Conv) and (leaky) ReLU nonlinearity. For linear layers, at layer $l$, we have:
\begin{equation}
G_l(\vx;\cW) = W_l, \quad\quad Q_l(\vx;\cW) \equiv I_{n_l\times n_l}
\end{equation}
For multi-layer ReLU network, for each layer $l$, we have:
\begin{equation}
G_l(\vx;\cW) = W_l D_{l-1}(\vx;\cW), \quad\quad Q_l(\vx;\cW) \equiv I_{n_l\times n_l} \label{eq:multi-layer-relu}
\end{equation} 
where $D_{l-1} \in \rr^{n_{l-1}\times n_{l-1}}$ is a binary diagonal matrix that encodes the gating of each neuron at layer $l-1$. The gating $D_{l-1}(\vx;\cW)$ depends on the current input $\vx$ and current weight $\cW$. 

In addition to ReLU, other activation function also satisfies this condition, including linear, LeakyReLU and monomial activations. For example, for power activation $\psi(x) = x^p$ where $p > 1$, we have:
\begin{equation}
    G_l(\vx; \cW) = W_l \diag^{p-1}(\tilde\vf_{l-1}), \quad\quad Q_l(\vx;\cW) \equiv p I_{n_l\times n_l} \label{eq:power-activation}
\end{equation}

\textbf{Remark}. Note that the \revnoun{} is not the same as invertible. Specifically, \revnoun{} only requires the transfer function of a backpropagation gradient is a transpose of the forward function. 

\begin{lemma}[Recursive Gradient Update (Extension to Lemma 1 in~\cite{tian2019student}]
\label{lemma:extension-multiplicative}
Let forward and backward \emph{transition} matrix $V^\f_L(\vx) = V^\b_L(\vx) = I_{n_L\times n_L}$, and define recursively:
\begin{eqnarray}
    V^\f_{l-1}(\vx) &:=& V^\f_l(\vx) G_l(\vx)\in \rr^{n_l\times n_{l-1}} \\
    V^\b_{l-1}(\vx) &:=& V^\b_l(\vx) Q_l(\vx) G_l(\vx)\in \rr^{n_l\times n_{l-1}}
\end{eqnarray}
If the network is \rev{} (Def.~\ref{def:reversibility}), then minimizing the $\ell_2$ objective:
\begin{equation}
    r(\cW_1) := \frac{1}{2}\|\vf_L(\vx_1;\cW_1) - \vf_L(\vx_2;\cW_2)\|^2_2 
\end{equation}
with respect to weight matrix $W_l$ at layer $l$ yields the following gradient at layer $l$:
\begin{equation}
    \tilde\vg_l = V_l^{\b\t}(\vx_1;\cW_1)\left[V^\f_l(\vx_1;\cW_1)\tilde\vf_l(\vx_1;\cW_1) - V^\f_l(\vx_2;\cW_2)\tilde\vf_l(\vx_2;\cW_2)\right]
\end{equation}
\end{lemma}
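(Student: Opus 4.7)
The plan is a clean induction/unfolding argument that alternates between the forward and backward recursions guaranteed by the reversibility property (Def.~\ref{def:reversibility}). The key point is that reversibility gives us a matrix identity for the forward pass, $\tilde\vf_l = G_l(\vx;\cW)\tilde\vf_{l-1}$, and an almost-transposed identity for the backward pass, $\tilde\vg_{l-1} = G_l^\t Q_l^\t \tilde\vg_l$. These line up exactly with the defining recurrences of $V^\f_{l-1} = V^\f_l G_l$ and $V^\b_{l-1} = V^\b_l Q_l G_l$, so everything should telescope.

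First I would unfold the forward pass. Starting from $\tilde\vf_L = \tilde\vf_L$ (trivially $V^\f_L \tilde\vf_L$) and applying the reversibility identity once, $\tilde\vf_L = G_L \tilde\vf_{L-1} = V^\f_{L-1}\tilde\vf_{L-1}$. A straightforward induction then gives $\tilde\vf_L(\vx;\cW) = V^\f_l(\vx;\cW)\,\tilde\vf_l(\vx;\cW)$ for every layer $l$. I would apply this to both branches of the dual network, obtaining expressions for $\tilde\vf_L(\vx_1;\cW_1)$ and $\tilde\vf_L(\vx_2;\cW_2)$ in terms of $V^\f_l$ evaluated on the appropriate input/weight pair.

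Second, I would compute $\tilde\vg_L$ from the $\ell_2$ loss. Since there is no nonlinearity at the top layer, $\vf_L = \tilde\vf_L$, so differentiating $r(\cW_1) = \tfrac12\|\vf_L(\vx_1;\cW_1)-\vf_L(\vx_2;\cW_2)\|^2$ with respect to $\tilde\vf_L(\vx_1;\cW_1)$ yields $\tilde\vg_L = \tilde\vf_L(\vx_1;\cW_1) - \tilde\vf_L(\vx_2;\cW_2)$. Combining with the forward unfolding of the previous step already recovers the bracketed expression in the statement.

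Third, I would unfold the backward pass along the $\cW_1$ network (the only one through which gradient flows). Induction on $l$ using the reversibility identity $\tilde\vg_{l-1} = G_l^\t Q_l^\t \tilde\vg_l$, together with the recurrence $V^\b_{l-1} = V^\b_l Q_l G_l$, gives $\tilde\vg_l = V^{\b\t}_l(\vx_1;\cW_1)\,\tilde\vg_L$. Splicing in the expression for $\tilde\vg_L$ from the second step produces the claimed formula.

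The one subtlety worth flagging, rather than a genuine obstacle, is bookkeeping: the forward factor must be evaluated on both $(\vx_1,\cW_1)$ and $(\vx_2,\cW_2)$, whereas the backward factor is evaluated only on $(\vx_1,\cW_1)$, because the loss is differentiated with respect to $\cW_1$ and the second network's activations enter only through the residual $\tilde\vg_L$. Once this asymmetry is handled carefully in the induction, and one verifies that the orders of multiplication in $(V^\b_{l-1})^\t = G_l^\t Q_l^\t (V^\b_l)^\t$ match the backpropagation recursion, the argument closes without further work.
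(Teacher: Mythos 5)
Your proof is correct and takes essentially the same approach as the paper's: both telescope the forward relation $\tilde\vf_L = V^\f_l \tilde\vf_l$ and the backward relation $\tilde\vg_l = V^{\b\t}_l\tilde\vg_L$ using the reversibility identities and the defining recurrences of $V^\f$, $V^\b$, the only difference being that you run two separate inductions and compose them at the end whereas the paper carries the full expression through a single induction.
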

\begin{proof}
We prove by induction. Note that our definition of $W_l$ is the transpose of $W_l$ defined in~\cite{tian2019student}. Also our $\vg_l(\vx)$ is the gradient \emph{before} nonlinearity, while~\cite{tian2019student} uses the same symbol for the gradient after nonlinearity. 

For notation brievity, we let $\tilde\vf_l(\vx_1) := \tilde\vf_l(\vx_1;\cW_l)$ and $G_l(\vx_1) := G_l(\vx_1;\cW_l)$. Similar for $\vx_2$ and $W_2$.

When $l = L$, by the property of $\ell_2$-loss and the fact that $\vf_L = \tilde \vf_L$ (no nonlinearity in the top-most layer), we know that $\vg_L = \tilde\vf_L(\vx_1;\cW_1) - \tilde\vf_L(\vx_2;\cW_2)$, by setting $V^\f_L(\vx_1) = V^\f_L(\vx_2) = V^\b_L(\vx_1) = V^\b_L(\vx_2) = I$, the condition holds. Now suppose for layer $l$, we have:
\begin{eqnarray}
\tilde\vg_l = V_l^{\b\t}(\vx_1)\left[V^\f_l(\vx_1)\tilde\vf_l(\vx_1) - V^\f_l(\vx_2)\tilde\vf_l(\vx_2)\right]
\end{eqnarray}

Then:
\begin{eqnarray}
\tilde\vg_{l-1} &=& G^\t_l(\vx_1) Q^\t_l(\vx_1) \tilde\vg_l \\
&=& \underbrace{G^\t_l(\vx_1) Q^\t_l(\vx_1) V_l^{\b\t}(\vx_1)}_{V_{l-1}^{\b\t}(\vx_1)} \cdot \left[V^\f_l(\vx_1)\tilde\vf_l(\vx_1) - V^\f_l(\vx_2)\tilde\vf_l(\vx_2) \right] \\
&=& V_{l-1}^{\b\t}(\vx_1)\left[\underbrace{ V^\f_l(\vx_1) G_l(\vx_1)}_{V^\f_{l-1}(\vx_1)}\tilde\vf_{l-1}(\vx_1) - \underbrace{V^\f_l(\vx_2)G_l(\vx_2)}_{V^\f_{l-1}(\vx_2)}\tilde\vf_{l-1}(\vx_2)\right] \\
&=& V_{l-1}^{\b\t}(\vx_1)\left[V^\f_{l-1}(\vx_1)\tilde\vf_{l-1}(\vx_1) - V^\f_{l-1}(\vx_2)\tilde\vf_{l-1}(\vx_2)\right]
\end{eqnarray}
\end{proof}
\textbf{Remark on Deep ReLU networks}. Note that for multi-layered ReLU network, $G_l(\vx) = D_l(\vx) W_l$, $Q_l(\vx) \equiv I$ for each ReLU+Linear layer. Therefore, for each layer $l$, we have $V_l^\f(\vx) = V_l^\b(\vx)$ and we can just use $V_l(\vx) := V_l^\f(\vx) = V_l^\b(\vx)$ to represent both. If we set $\vx_1=\vx_2=\vx$, $\cW_1 = \cW$, $\cW_2 = \cW^*$ (teacher weights), then we go back to the original Lemma~1 in~\cite{tian2019student}.

\textbf{Remark on $\ell_2$-normalization in the topmost layer}. For $\ell_2$-normalized objective function on Deep ReLU networks:
\begin{equation}
    r_{\n}(\cW_1) := \frac{1}{2}\Bigg\|\frac{\vf_L(\vx_1;\cW_1)}{\|\vf_L(\vx_1;\cW_1)\|_2} - \frac{\vf_L(\vx_2;\cW_2)}{\|\vf_L(\vx_2;\cW_2)\|_2}\Bigg\|^2_2, \label{eq:loss-l2-normalization}
\end{equation}
it is equivalent to add a top-most $\ell_2$-normalization layer $\vf_{L+1} := \frac{\vf_L}{\|\vf_L\|_2}$, we have $G_{L+1} := \frac{1}{\|\vf_L\|_2} I_{n_L\times n_L}$ and due to the following identity (here $\hat\vy := \vy / \|\vy\|_2$): 
\begin{equation}
    \frac{\partial \hat\vy}{\partial \vy} = \frac{1}{\|\vy\|_2}\left(I - \hat\vy \hat\vy^\t\right)  
\end{equation}
Therefore we have $\partial \vf_{L+1} / \partial \vf_L = (I_{n_L\times n_L}-\vf_{L+1}\vf_{L+1}^\t)G_{L+1}$ and we could set $Q_{L+1} := I-\vf_{L+1}\vf_{L+1}^\t$, which is a projection matrix (and is also a positive semi-definite matrix). Let $\ell_2$-normalized transition matrix $V_l^\n(\vx) :=  \frac{1}{\|\vf_L(\vx)\|_2} V_l(\vx)$. Applying Lemma~\ref{lemma:extension-multiplicative} and we have for $1\le l \le L$:
\begin{equation}
    \tilde\vg_l = V_l^{\n\t}(\vx_1;\cW_1) Q_{L+1}(\vx_1;\cW_1)\left[V^\n_l(\vx_1;\cW_1)\tilde\vf_l(\vx_1;\cW_1) - V^\n_l(\vx_2;\cW_2)\tilde\vf_l(\vx_2;\cW_2)\right]
\end{equation}

\textbf{Remark on ResNet.} Note that the same structure holds for blocks of ResNet with ReLU activation.

\subsection{Theorem~\ref{thm:l2-two-tower}}
Now we prove Theorem~\ref{thm:l2-two-tower}. Note that for deep ReLU networks, $Q_l$ is a simple identity matrix and thus:

\begin{equation}
J_l(\vx) := \frac{\partial \vf_L}{\partial \tilde \vf_l} = V_l(\vx) := V_l^\f(\vx) = V_l^\b(\vx) \label{eq:j-v}
\end{equation}

\begin{theorem}[Squared $\ell_2$ Gradient for dual deep reversible networks]
The gradient $g_{W_l}$ of the squared loss $r$ with respect to $W_l \in \rr^{n_l \times n_{l-1}}$ for a single input pair $\{\vx_1, \vx_2\}$ is: 
\begin{equation}
    g_{W_l} = \vec\left(\partial r/\partial W_{1,l}\right) = K_{1,l}\left[K^\t_{1,l}\vec(W_{1,l}) - K^\t_{2,l}\vec(W_{2,l})\right].
\end{equation}
Here $K_l(\vx;\cW) := \vf_{l-1}(\vx;\cW) \otimes J^\t_l(\vx;\cW)$, $K_{1,l} := K_l(\vx_1;\cW_1)$ and $K_{2,l} := K_l(\vx_2;\cW_2)$.
\end{theorem}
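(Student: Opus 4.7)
My plan is to reduce the claim to standard backpropagation combined with the multiplicative structure given by Lemma~\ref{lemma:extension-multiplicative}, and then to vectorise using the mixed-product property of the Kronecker product. Because $W_l$ sits in a ReLU network, the general reversibility setup simplifies dramatically: $Q_l \equiv I$, so the forward and backward transition matrices coincide, and by Eqn.~\ref{eq:j-v} they both equal the Jacobian $J_l(\vx;\cW)=\partial \vf_L/\partial \tilde\vf_l$. This is what will let me convert the abstract $V_l$'s produced by Lemma~\ref{lemma:extension-multiplicative} into the $J_l$'s appearing in $K_l$.

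First, I would specialise Lemma~\ref{lemma:extension-multiplicative} to deep ReLU networks, obtaining the post-nonlinearity backpropagated gradient
\begin{equation}
\tilde\vg_{1,l} \;=\; J_{1,l}^{\t}\bigl[\,J_{1,l}W_{1,l}\vf_{1,l-1}\;-\;J_{2,l}W_{2,l}\vf_{2,l-1}\bigr],
\end{equation}
using $\tilde\vf_l = W_l\vf_{l-1}$. Next I would invoke the elementary identity for a linear layer,
$\partial r/\partial W_{1,l} = \tilde\vg_{1,l}\,\vf_{1,l-1}^{\t}$, which recovers the matrix form stated in the excerpt immediately after Theorem~\ref{thm:l2-two-tower}. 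At this stage the claim has been reduced to a purely algebraic manipulation.

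The third step is to vectorise. Using $\vec(\vu\vv^{\t})=\vv\otimes\vu$, I get $g_{W_l}=\vec(\tilde\vg_{1,l}\vf_{1,l-1}^{\t})=\vf_{1,l-1}\otimes \tilde\vg_{1,l}$. Treating $\tilde\vg_{1,l}=J_{1,l}^{\t}\vh$ with $\vh:=J_{1,l}W_{1,l}\vf_{1,l-1}-J_{2,l}W_{2,l}\vf_{2,l-1}$, the mixed-product property $(A\otimes B)(C\otimes D)=AC\otimes BD$ with the factorisations $\vf_{1,l-1}=\vf_{1,l-1}\otimes 1$ and $\vh=1\otimes\vh$ gives $\vf_{1,l-1}\otimes(J_{1,l}^{\t}\vh)=(\vf_{1,l-1}\otimes J_{1,l}^{\t})\vh = K_{1,l}\vh$. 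Symmetrically, for each branch $i\in\{1,2\}$ I would rewrite $J_{i,l}W_{i,l}\vf_{i,l-1}$ by applying $\vec(AB)=(B^{\t}\otimes I)\vec(A)$ to $W_{i,l}\vf_{i,l-1}=(\vf_{i,l-1}^{\t}\otimes I_{n_l})\vec(W_{i,l})$ and then pulling $J_{i,l}$ through the Kronecker product again, yielding $J_{i,l}W_{i,l}\vf_{i,l-1}=(\vf_{i,l-1}^{\t}\otimes J_{i,l})\vec(W_{i,l})=K_{i,l}^{\t}\vec(W_{i,l})$. Substituting into $\vh$ and then into $K_{1,l}\vh$ gives the claimed identity.

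The main obstacle, as usual with this flavour of result, is not conceptual but notational: keeping the roles of $\cW_1$ versus $\cW_2$ straight (the theorem differentiates w.r.t.\ $W_{1,l}$ only, so only $\cW_1$'s branch contributes the outer $K_{1,l}$ factor) and ensuring that each invocation of the mixed-product identity matches Kronecker dimensions correctly. One subtle point worth double-checking is the orientation convention for $\vec$ (column-stacking versus row-stacking), because this dictates whether the $\vf_{l-1}$ factor in $K_l=\vf_{l-1}\otimes J_l^{\t}$ ends up on the left or the right of the tensor product; I would verify both ends of the computation under the same convention, i.e.\ $\vec(\vu\vv^{\t})=\vv\otimes\vu$ throughout.
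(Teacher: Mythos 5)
Your proposal is correct and follows essentially the same path as the paper's own proof: specialise Lemma~\ref{lemma:extension-multiplicative} (via $Q_l\equiv I$, $J_l = V_l$) to get $\tilde\vg_{1,l}=J_{1,l}^\t[J_{1,l}W_{1,l}\vf_{1,l-1}-J_{2,l}W_{2,l}\vf_{2,l-1}]$, multiply by $\vf_{1,l-1}^\t$ for the weight gradient, then vectorise with Kronecker identities and the mixed-product property to expose $K_{i,l}$. The only difference is cosmetic — the paper applies $\vec(AXB)=(B^\t\otimes A)\vec(X)$ directly to the two rank-one matrix terms and then factors, while you extract the outer $K_{1,l}$ first via $\vec(\vu\vv^\t)=\vv\otimes\vu$ and then rewrite the inner vector $\vh$ term by term; both are the same algebra.
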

\begin{proof}
We consider a more general case where the two towers have different parameters, namely $\cW_1$ and $\cW_2$. Applying Lemma~\ref{lemma:extension-multiplicative} for the branch with input $\vx_1$ at the linear layer $l$, and using Eqn.~\ref{eq:j-v} we have:
\begin{equation}
    \tilde \vg_{1,l} = J_{1,l}^\t[J_{1,l}W_{1,l}\vf_{1,l-1} - J_{2,l}W_{2,l}\vf_{2,l-1}] 
\end{equation}
where $\vf_{1,l-1} := \vf_{l-1}(\vx_1;\cW_1)$ is the activation of layer $l-1$ just below the linear layer at tower 1 (similar for other symbols), and $\tilde \vg_{1,l}$ is the back-propagated gradient \emph{after} the nonlinearity. 

In this case, the gradient (and the weight update, according to gradient descent) of the weight $W_l$ between layer $l$ and layer $l-1$ is: 
\begin{eqnarray}
    \frac{\partial r}{\partial W_{1,l}} &=& \tilde\vg_{1,l}\vf^\t_{1,l-1} \\
    &=& J^\t_{1,l} J_{1,l}W_{1,l}\vf_{1,l-1}\vf^\t_{1,l-1} - J^\t_{1,l} J_{2,l}W_{2,l}\vf_{2,l-1}\vf^\t_{1,l-1}
\end{eqnarray}
Using $\vec(AXB) = (B^\t\otimes A)\vec(X)$ (where $\otimes$ is the Kronecker product), we have:
\begin{equation}
    \vec\left(\frac{\partial r}{\partial W_{1,l}}\right) = \left(\vf_{1,l-1}\vf^\t_{1,l-1} \otimes J^\t_{1,l} J_{1,l}\right)\vec(W_{1,l}) - \left(\vf_{1,l-1}\vf^\t_{2,l-1} \otimes J^\t_{1,l} J_{2,l}\right)\vec(W_{2,l})
\end{equation}
Let 
\begin{equation}
    K_l(\vx; W) := \vf_{l-1}(\vx;\cW)\otimes J_l^\t(\vx;\cW) \in \rr^{n_l n_{l-1} \times n_L}    
\end{equation}
Note that $K_l(\vx ;\cW)$ is a function of the current weight $W$, which includes weights at all layers. By the mixed-product property of Kronecker product $(A\otimes B)(C\otimes D) = AC\otimes BD$, we have:
\begin{eqnarray}
    \vec\left(\frac{\partial r}{\partial W_{1,l}}\right) &=& K_l(\vx_1)K_l(\vx_1)^\t\vec(W_{1,l}) - K_l(\vx_1)K_l(\vx_2)^\t\vec(W_{2,l}) \\
    &=& K_l(\vx_1)\left[K_l(\vx_1)^\t\vec(W_{1,l}) - K_l(\vx_2)^\t\vec(W_{2,l})\right]
\end{eqnarray}
where $K_l(\vx_1) = K_l(\vx_1; \cW_1)$ and $K_l(\vx_2) = K_l(\vx_2; \cW_2)$.

In SimCLR case, we have $\cW_1 = \cW_2 = \cW$ so 
\begin{equation}
    \vec\left(\frac{\partial r}{\partial W_l}\right) = K_l(\vx_1)\left[K_l(\vx_1) - K_l(\vx_2)\right]^\t\vec(W_l)
\end{equation}
\end{proof}
\textbf{Remark for $\ell_2$ normalization}. Note that in the presence of $\ell_2$ normalization (Eqn.~\ref{eq:loss-l2-normalization}), with similar derivation, we get: 
\begin{equation}
    \vec\left(\frac{\partial r_\n}{\partial W_l}\right) = K^\n_l(\vx_1)P^\perp_{\vf_L(\vx_1)}\left[K^\n_l(\vx_1) - K^\n_l(\vx_2)\right]^\t\vec(W_l)
\end{equation}
where $K^\n_l(\vx) := K_l(\vx) / \|\vf_L(\vx)\|_2$ and $P^\perp_{\vf_L(\vx_1)}$ is a projection matrix that project the gradient to the orthogonal complementary subspace of $\vf_L(\vx_1)$:
\begin{equation}
    P^\perp_{\vv} := I_{n_L\times n_L} - \frac{\vv}{\|\vv\|_2} \frac{\vv^\t}{\|\vv\|_2}
\end{equation}

\section{Analysis of SimCLR using Teacher-Student Setting (Section~\ref{sec:simclr})}
\label{sec:appendix-analysis}
\subsection{Theorem~\ref{thm:contrastive-loss}}
\begin{theorem}[Common Property of Contrastive Losses]
For loss functions $L \in \{\lsimple{}, \ltriplet^\tau{}, \lnce^\tau\}$, we have $\frac{\partial L}{\partial r_+} > 0$, $\frac{\partial L}{\partial r_{k-}} < 0$ for $1 \le k \le H$ and $\frac{\partial L}{\partial r_+} + \sum_{k=1}^H \frac{\partial L}{\partial r_{k-}} = 0$. 
\end{theorem}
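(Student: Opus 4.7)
The plan is to verify the three claimed properties by direct differentiation for each of the three loss functions, which is essentially a routine check. For each loss $L$, I would compute $\partial L/\partial r_+$ and $\partial L/\partial r_{k-}$ explicitly, show each has the correct sign, and observe that in each case the sum telescopes to zero either by symmetry of the definition or by the softmax structure.

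First, for $\lsimple = r_+ - r_-$ (with $H=1$), the derivatives are simply $+1$ and $-1$, so all three claims are immediate. Second, for $\ltriplet^\tau = \tau \log(1 + e^{(r_+ - r_- + r_0)/\tau})$ (also $H=1$), I would apply the chain rule and note that the logistic derivative gives
\[
\frac{\partial L}{\partial r_+} = \sigma\!\left(\tfrac{r_+ - r_- + r_0}{\tau}\right), \qquad \frac{\partial L}{\partial r_-} = -\sigma\!\left(\tfrac{r_+ - r_- + r_0}{\tau}\right),
\]
where $\sigma$ is the logistic function. Positivity, negativity, and the zero-sum property are then immediate. Third, for $\lnce^\tau$, I would rewrite it as $L = r_+/\tau + \log S$ with $S := e^{-r_+/\tau} + \sum_{k=1}^{H} e^{-r_{k-}/\tau}$, then differentiate:
\[
\frac{\partial L}{\partial r_+} = \frac{1}{\tau}\left(1 - \frac{e^{-r_+/\tau}}{S}\right) = \frac{1}{\tau}\cdot\frac{\sum_k e^{-r_{k-}/\tau}}{S}, \qquad \frac{\partial L}{\partial r_{k-}} = -\frac{1}{\tau}\cdot\frac{e^{-r_{k-}/\tau}}{S}.
\]
The signs are manifest since $S>0$ and all exponentials are positive, and the zero-sum identity follows by adding the numerators of the negative-pair derivatives, which reconstructs exactly $\sum_k e^{-r_{k-}/\tau}$ and cancels the positive-pair term.

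There is no real obstacle here; the statement is a compendium of elementary identities tailored so that the downstream arguments (in particular the covariance-operator derivations of Theorem~\ref{thm:contrast-simclr-simple} and Theorem~\ref{thm:contrast-simclr-pairwise}) can treat $\partial L/\partial r_+$ and $\partial L/\partial r_{k-}$ as weights of opposite sign that balance to zero. The only mild subtlety is bookkeeping in the InfoNCE case (ensuring both the $\log S$ term and the explicit $r_+/\tau$ term are accounted for), but this resolves cleanly once $L$ is split as above. I would therefore present the proof as three short calculations, one per loss, closing with the remark that the $\lsimple$ and $\ltriplet^\tau$ cases are the $H=1$ specializations while $\lnce^\tau$ accommodates arbitrary $H$.
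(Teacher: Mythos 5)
Your proof is correct and follows essentially the same route as the paper: direct differentiation of each loss, sign checks, and algebraic cancellation for the zero-sum identity (the paper even calls the $\lsimple$ and $\ltriplet^\tau$ cases "obvious" and writes out only the InfoNCE derivatives, which match yours). Your rewriting of $\lnce^\tau$ as $r_+/\tau + \log S$ is a slightly cleaner bookkeeping device but not a genuinely different argument.
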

\begin{proof}
For $\lsimple$ and $\ltriplet$ the derivation is obvious. For $\lnce{}$, we have:
\begin{eqnarray}
\frac{\partial L}{\partial r_+} &=& \frac{1}{\tau}\left(1 - \frac{e^{-r_+/\tau}}{e^{-r_+/\tau} + \sum_{k'=1}^H e^{-r_{k'-}/\tau}}\right) > 0 \\
\frac{\partial L}{\partial r_{k-}} &=& - \frac{1}{\tau} \left(\frac{e^{-r_{k-}/\tau}}{e^{-r_+/\tau} + \sum_{k'=1}^H e^{-r_{k'-}/\tau}}\right) < 0, \quad\quad k = 1,\ldots, H
\end{eqnarray}
and obviously we have:
\begin{equation}
    \frac{\partial L}{\partial r_+} + \sum_{k=1}^H \frac{\partial L}{\partial r_{k-}} = 0 \label{eq:balanced-pn}
\end{equation}
\end{proof}

\def\cX{\mathcal{X}}

\subsection{The Covariance Operator under different loss functions}
\begin{lemma}
\label{lemma:cov-grad}
For a loss function $L$ that satisfies Theorem~\ref{thm:contrastive-loss}, with a batch of size one with samples $\cX := \{\vx_1$, $\vx_+, \vx_{1-}, \vx_{2-}, \ldots, \vx_{H-}\}$, where $\vx_1, \vx_+ \sim p_\aug(\cdot|\vx)$ are augmentation from the same sample $\vx$, and $\vx_{k-} \sim p_\aug(\cdot|\vx'_k)$ are augmentations from independent samples $\vx'_k \sim p(\cdot)$. We have:
\begin{equation}
    \vec(g_{W_l}) = K_l(\vx_1)\sum_{k=1}^H \left[\frac{\partial L}{\partial r_{k-}}\Bigg|_\cX \cdot (K_l(\vx_+) - K_l(\vx_{k-}))^\t\right]\vec(W_l)
\end{equation}
\end{lemma}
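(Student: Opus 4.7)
The plan is to apply the chain rule to the loss $L(r_+, \{r_{k-}\}_{k=1}^H)$ and reduce each partial gradient via Theorem~\ref{thm:l2-two-tower}, then exploit the balance condition from Theorem~\ref{thm:contrastive-loss} to cancel the contribution of $K_l(\vx_1)$ on the right side. First I would write
\begin{equation}
g_{W_l} = \frac{\partial L}{\partial r_+}\frac{\partial r_+}{\partial W_l} + \sum_{k=1}^H \frac{\partial L}{\partial r_{k-}}\frac{\partial r_{k-}}{\partial W_l},
\end{equation}
since the only dependence of $L$ on $W_l$ is through the squared distances $r_+$ and $\{r_{k-}\}$.

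Next, because $\cW_1=\cW_2=\cW$ in SimCLR and each $r$ is a squared $\ell_2$ distance between two copies of the same network evaluated at different inputs, I would apply Theorem~\ref{thm:l2-two-tower} directly to obtain $\vec(\partial r_+/\partial W_l) = K_l(\vx_1)[K_l(\vx_1)-K_l(\vx_+)]^\t \vec(W_l)$ and analogously $\vec(\partial r_{k-}/\partial W_l) = K_l(\vx_1)[K_l(\vx_1)-K_l(\vx_{k-})]^\t \vec(W_l)$ for each $k$. Factoring $K_l(\vx_1)$ on the left and $\vec(W_l)$ on the right yields
\begin{equation}
\vec(g_{W_l}) = K_l(\vx_1)\Bigl[\tfrac{\partial L}{\partial r_+}(K_l(\vx_1)-K_l(\vx_+))^\t + \sum_{k=1}^H \tfrac{\partial L}{\partial r_{k-}}(K_l(\vx_1)-K_l(\vx_{k-}))^\t\Bigr]\vec(W_l).
\end{equation}

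The key step is then to substitute $\frac{\partial L}{\partial r_+} = -\sum_{k=1}^H \frac{\partial L}{\partial r_{k-}}$ from Theorem~\ref{thm:contrastive-loss}. The two occurrences of $K_l(\vx_1)$ inside the bracket combine to a zero coefficient, while the remaining terms collect into $\sum_{k=1}^H \frac{\partial L}{\partial r_{k-}}\bigl[K_l(\vx_+)-K_l(\vx_{k-})\bigr]^\t$, which gives exactly the claimed formula. I do not anticipate any obstacle beyond careful bookkeeping of signs: the argument is purely algebraic once Theorems~\ref{thm:l2-two-tower} and \ref{thm:contrastive-loss} are invoked, and the cancellation of $K_l(\vx_1)$ on the right side is the conceptual content that makes the subsequent covariance-operator interpretation possible.
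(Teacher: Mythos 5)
Your proof is correct and follows essentially the same route as the paper's: apply the chain rule to $L(r_+,\{r_{k-}\})$, reduce each $\partial r/\partial W_l$ via Theorem~\ref{thm:l2-two-tower}, then use the balance identity $\frac{\partial L}{\partial r_+}=-\sum_k\frac{\partial L}{\partial r_{k-}}$ to cancel the $K_l(\vx_1)K_l^\t(\vx_1)$ contribution and regroup. The bookkeeping is handled correctly and no step is missing.
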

\begin{proof}
First we have:
\begin{equation}
    \vec(g_{W_l}) = \frac{\partial L}{\partial W_l} = \frac{\partial L}{\partial r_+}\frac{\partial r_+}{\partial W_l} + \sum_{k=1}^H\frac{\partial L}{\partial r_{k-}}\frac{\partial r_{k-}}{\partial W_l} \label{eq:contrast-weight}
\end{equation}

Then we compute each terms. Using Theorem~\ref{thm:l2-two-tower}, we know that:
\begin{eqnarray}
    \frac{\partial r_+}{\partial W_l} &=& K_l(\vx_1)(K_l(\vx_1) - K_l(\vx_+))^\t \vec(W_l)\\
    \frac{\partial r_{k-}}{\partial W_l} &=& K_l(\vx_1)(K_l(\vx_1) - K_l(\vx_{k-}))^\t \vec(W_l), \quad k = 1,\ldots,n
\end{eqnarray}
Since Eqn.~\ref{eq:balanced-pn} holds, $K_l(\vx_1)K_l^\t(\vx_1)$ will be cancelled out and we have:
\begin{equation}
    \vec(g_{W_l}) = K_l(\vx_1)\sum_{k=1}^H \left[\frac{\partial L}{\partial r_{k-}} (K_l(\vx_+) - K_l(\vx_{k-}))^\t\right]\vec(W_l)
\end{equation}
\end{proof}
\textbf{Remark}. In $\ell_2$-normalized loss function, similarly we have:
\begin{equation}
    \vec(g_{W_l}) = K^\n_l(\vx_1)P^\perp_{\vf(\vx_1)} \sum_{k=1}^H \left[\frac{\partial L}{\partial r_{k-}} (K^\n_l(\vx_+) - K^\n_l(\vx_{k-}))^\t\right]\vec(W_l)
\end{equation}

\subsection{Theorem~\ref{thm:contrast-simclr-simple}}
\begin{theorem}[Covariance Operator for $\lsimple$]
With large batch limit, $W_l$'s update under $\lsimple{}$ is $W_l(t+1) = W_l(t) + \alpha \Delta W_l(t)$ ($\alpha$ is the learning rate), where 
\begin{equation}
\vec(\Delta W_l(t)) = \covop_l^\simple(\cW) \vec(W_l(t)). \label{eq:covariance-operator}
\end{equation}
Here $\covop_l^\simple(\cW)\!\!:=\!\!\var_{\vx\sim p(\cdot)}[\bar K_l(\vx;\cW)] \in \rr^{n_ln_{l-1}\times n_ln_{l-1}}$ is the \emph{covariance operator} for $\lsimple{}$, $\bar K_l(\vx;\cW) := \eee{\vx' \sim p_\aug(\cdot|\vx)}{K_l(\vx';\cW)}$ is the expected connection under the augmentation distribution, conditioned on datapoint $\vx$.
\end{theorem}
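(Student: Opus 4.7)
My plan is to specialize the generic gradient formula from Lemma~\ref{lemma:cov-grad} (equivalently, a direct combination of Theorem~\ref{thm:l2-two-tower} and Theorem~\ref{thm:contrastive-loss}) to $\lsimple$, then take the expectation over the sampling procedure and recognize the result as a covariance. Concretely, for $\lsimple = r_+ - r_-$ we have $H=1$, $\partial L/\partial r_+ = 1$ and $\partial L/\partial r_- = -1$, so the cancellation in Theorem~\ref{thm:contrastive-loss} kills the $K_l(\vx_1)K_l(\vx_1)^\t$ term and leaves
\begin{equation}
\vec(g_{W_l}) \;=\; -\,K_l(\vx_1)\bigl[K_l(\vx_+) - K_l(\vx_-)\bigr]^\t \vec(W_l),
\end{equation}
so that under gradient descent $\vec(\Delta W_l) = K_l(\vx_1)[K_l(\vx_+)-K_l(\vx_-)]^\t \vec(W_l)$.

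Next I would take the expectation over the batch in the large-batch limit. The sampling procedure dictates the key independence structure: there is a single data point $\vx \sim p(\cdot)$ with $\vx_1,\vx_+ \sim p_\aug(\cdot\mid \vx)$ drawn conditionally independently, and an independent second data point $\vx' \sim p(\cdot)$ with $\vx_- \sim p_\aug(\cdot\mid \vx')$. Applying tower law / conditional independence to the two cross terms gives
\begin{align}
\E\bigl[K_l(\vx_1)K_l(\vx_+)^\t\bigr] &= \E_{\vx}\bigl[\bar K_l(\vx)\bar K_l(\vx)^\t\bigr], \\
\E\bigl[K_l(\vx_1)K_l(\vx_-)^\t\bigr] &= \E_{\vx}[\bar K_l(\vx)]\,\E_{\vx'}[\bar K_l(\vx')]^\t,
\end{align}
where $\bar K_l(\vx) := \E_{\vx''\sim p_\aug(\cdot\mid \vx)}[K_l(\vx'')]$.

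Subtracting these two expressions yields exactly the second-moment-minus-mean-outer-product form of a covariance,
\begin{equation}
\E[\vec(\Delta W_l)] \;=\; \Bigl(\E_{\vx}\bigl[\bar K_l(\vx)\bar K_l(\vx)^\t\bigr] - \E_{\vx}[\bar K_l(\vx)]\,\E_{\vx}[\bar K_l(\vx)]^\t\Bigr)\vec(W_l) \;=\; \var_{\vx \sim p(\cdot)}[\bar K_l(\vx)]\,\vec(W_l),
\end{equation}
which is the claimed identity $\vec(\Delta W_l) = \covop_l^\simple(\cW)\,\vec(W_l)$.

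I do not anticipate a genuinely hard step here; the theorem is almost a bookkeeping corollary of Theorem~\ref{thm:l2-two-tower} plus the cancellation $\partial L/\partial r_+ + \partial L/\partial r_- = 0$ from Theorem~\ref{thm:contrastive-loss}. The only delicate point is being careful about which random variables are independent: one must not confuse the two augmentations $\vx_1,\vx_+$ of the same underlying $\vx$ (which produce the second-moment term and hence the ``variance'' part of the covariance) with the independently sampled $\vx_-$ whose parent $\vx'$ is drawn independently from $p$ (which produces the product-of-means term). Getting this conditional-independence accounting right is what makes the expected cross-term collapse into a genuine covariance rather than merely a second moment, and it is also what makes $\covop_l^\simple$ manifestly PSD.
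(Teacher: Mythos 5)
Your proof is correct and follows essentially the same route as the paper's: specialize Lemma~\ref{lemma:cov-grad} to $\lsimple$ (yielding $\vec(g_{W_l}) = -K_l(\vx_1)[K_l(\vx_+)-K_l(\vx_-)]^\t\vec(W_l)$ after the cancellation of the $K_l(\vx_1)K_l(\vx_1)^\t$ term), then take the large-batch expectation using the conditional-independence structure of the sampling to collapse the positive-pair cross term into a second moment and the negative-pair cross term into a product of means, giving $\var_\vx[\bar K_l(\vx)]$. The only cosmetic difference is that you frame the two expectation identities as applications of the tower law, whereas the paper states them directly; the substance is identical.
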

\begin{proof}
For $\lsimple := r_+ - r_-$, we have $H = 1$ and $\frac{\partial L}{\partial r_{-}} \equiv -1$. Therefore using Lemma~\ref{lemma:cov-grad}, we have: 
\begin{equation}
    \vec(g_{W_l}) = -K_l(\vx_1)\left[ K_l^\t(\vx_+) -  K^\t_l(\vx_{-})\right]\vec(W_l)
\end{equation}
Taking large batch limits, we know that $\ee{K_l(\vx_1)K^\t_l(\vx_+)} = \eee{\vx}{\bar K_l(\vx) \bar K^\t_l(\vx)}$ since $\vx_1, \vx_+ \sim p_\aug(\cdot|\vx)$ are all augmented data points from a common sample $\vx$. On the other hand, $\ee{K_l(\vx_1)K^\t_l(\vx_{k-})} = \eee{\vx}{\bar K_l(\vx)}\eee{\vx}{\bar K^\t_l(\vx)}$ since $\vx_1\sim p_\aug(\cdot|\vx)$ and $\vx_{k-}\sim p_\aug(\cdot|\vx'_k)$ are generated from independent samples $\vx$ and $\vx'_k$ and independent data augmentation. Therefore, 
\begin{eqnarray}
    \vec(g_{W_l}) &=& -\Big\{\eee{\vx}{\bar K_l(\vx) \bar K^\t_l(\vx)} - \eee{\vx}{\bar K_l(\vx)}\eee{\vx}{\bar K^\t_l(\vx)}\Big\}\vec(W_l) \\
    &=& -\var_{\vx}\left[\bar K_l(\vx)\right]\vec(W_l)
\end{eqnarray}
The conclusion follows since gradient descent is used and $\Delta W_l = - g_{W_l}$.
\end{proof}
\textbf{Remark}. In $\ell_2$-normalized loss function, similarly we have:
\begin{equation}
    \vec(g_{W_l}) = -\Cov_{\vx}\left[\overline{K^{\n}_l(\vx)P^\perp_{\vf(\vx)}}, \bar K^\n_l(\vx)\right]\vec(W_l)
\end{equation}
Note that it is no longer symmetric. We leave detailed discussion to the future work. 

\def\noaug{\mathrm{noaug}}

\subsection{Theorem~\ref{thm:contrast-simclr-pairwise}}
\begin{theorem}[Covariance Operator for $\ltriplet^\tau{}$ and $\lnce^\tau$ ($H = 1$, single negative pair)]
Let $r := \frac{1}{2}\|\vf_L(\vx) - \vf_L(\vx')\|_2^2$. The covariance operator $\covop_l(\cW)$ has the following form:
\begin{equation}
\covop_l(\cW) = \frac{1}{2}\var^\xi_{\vx,\vx'\sim p(\cdot)}\left[\bar K_l(\vx) - \bar K_l(\vx')\right] + \theta \label{eq:weighted-cov-op}
\end{equation}
where the pairwise weight $\xi(r)$ takes the following form for different loss functions:
\begin{equation}
   \xi(r) = \left\{
   \begin{array}{cc}
   1 & L = \lsimple \\
    \frac{e^{-(r-r_0)/\tau}}{1 + e^{-(r-r_0)/\tau}} & L = \ltriplet^\tau \\
    \frac{1}{\tau} \frac{e^{-r/\tau}}{1 + e^{-r/\tau}} & L = \lnce^\tau{}
   \end{array}
   \right. \label{eq:xi}
\end{equation}
and $\theta\!:=\!\mathcal{O}(\eee{\vx,\vx'}{\sqrt{r(\vx,\vx')}\sigma_\aug(\vx)}\!+\!\eee{\vx}{\sigma^2_\aug(\vx)})$ is the residue term. $\sigma^2_\aug(\vx) := \tr\var_{\vx''\sim p_\aug(\cdot|\vx)}[\vf_L(\vx'')]$ and $\mathrm{tr}$ is the trace of a matrix. For $\lsimple$, $\theta\equiv 0$.  
\end{theorem}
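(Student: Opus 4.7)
The plan reduces Theorem~\ref{thm:contrast-simclr-pairwise} to (i) a scalar first-order approximation of the loss-derivative $\partial L/\partial r_-$, evaluated at the \emph{augmented} distances $(r_+,r_-)$, by its value at the ``clean'' pair $(r_+{=}0,\,r_-{=}r(\vx,\vx'))$, and (ii) a conditional-independence computation that replaces $K_l$ at each augmented sample by the augmentation-averaged $\bar K_l$, followed by a symmetrization in $(\vx,\vx')$. The starting point is Lemma~\ref{lemma:cov-grad} specialized to $H=1$, which in the large-batch limit yields
\begin{equation}
\mathbb{E}[\vec(\Delta W_l)] \;=\; -\,\mathbb{E}\!\left[K_l(\vx_1)\,\tfrac{\partial L}{\partial r_-}\bigl(K_l(\vx_+)-K_l(\vx_-)\bigr)^{\!\t}\right]\!\vec(W_l).
\end{equation}

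For step (i), direct differentiation of $\ltriplet^\tau$ and $\lnce^\tau$ shows that inserting $r_+\to 0$ and $r_-\to r(\vx,\vx')$ into $-\partial L/\partial r_-$ recovers exactly the $\xi(r)$ of Eqn.~\ref{eq:xi}. Hence a first-order Taylor expansion around that point yields $-\partial L/\partial r_-=\xi(r)+R$, where $R$ is linear in $r_+$ and in $r_--r(\vx,\vx')$ with coefficients uniformly $\mathcal{O}(1/\tau)$ (for $\lsimple$, $\xi\equiv 1$ and $R\equiv 0$, recovering Theorem~\ref{thm:contrast-simclr-simple}). Writing $\vf_L(\vx_1)=\vf_L(\vx)+\epsilon_1$ and $\vf_L(\vx_-)=\vf_L(\vx')+\epsilon_2$ with $\mathbb{E}[\epsilon_i\mid\vx]=0$, one gets $\mathbb{E}[r_+\mid\vx]=\sigma^2_{\aug}(\vx)$, and Cauchy--Schwarz on the cross term $(\vf_L(\vx)-\vf_L(\vx'))^{\!\t}(\epsilon_1-\epsilon_2)$ gives $\mathbb{E}[\,|r_--r(\vx,\vx')|\mid\vx,\vx']=\mathcal{O}(\sqrt{r(\vx,\vx')}\,\sigma_{\aug}+\sigma^2_{\aug})$, exactly matching the stated order of $\theta$.

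For step (ii), the dominant piece is $\mathbb{E}[\xi(r)\,K_l(\vx_1)(K_l(\vx_+)-K_l(\vx_-))^{\!\t}]$. Conditioning on the unaugmented $(\vx,\vx')$ and using that $\vx_1,\vx_+$ are conditionally i.i.d.\ from $p_\aug(\cdot|\vx)$ while $\vx_-\sim p_\aug(\cdot|\vx')$ is independent, the inner conditional expectation equals $\bar K_l(\vx)\bigl(\bar K_l(\vx)-\bar K_l(\vx')\bigr)^{\!\t}$, with $\xi(r)$ pulled outside because it depends only on $(\vx,\vx')$. Since $\vx,\vx'$ are i.i.d.\ and $\xi(r(\vx,\vx'))$ is symmetric under swap, I then symmetrize the outer expectation over $\vx\leftrightarrow\vx'$ and use $\mathbb{E}_{\vx,\vx'}[\bar K_l(\vx)-\bar K_l(\vx')]=0$ to rewrite it as $\tfrac12\,\var^\xi_{\vx,\vx'}\bigl[\bar K_l(\vx)-\bar K_l(\vx')\bigr]$, which is exactly the leading term of the claim.

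The hardest step is bounding the $R$-contribution to $\theta$ in operator norm. The subtlety is that $R$ depends on the augmented samples, so it does \emph{not} factor out of the surrounding $\mathbb{E}[K_l(\vx_1)(K_l(\vx_+)-K_l(\vx_-))^{\!\t}]$. The plan is to pull out a uniform bound on $\|K_l\|=\|\vf_{l-1}\|\,\|J_l\|$ and apply Cauchy--Schwarz to separate the scalar $R$ from the matrix factor, reducing the residue to a scalar expectation of $|R|$; this is then controlled by the $r_+$ and $|r_--r|$ estimates of step (i), yielding $\theta=\mathcal{O}(\mathbb{E}[\sqrt{r(\vx,\vx')}\,\sigma_{\aug}(\vx)]+\mathbb{E}[\sigma^2_{\aug}(\vx)])$ exactly as stated.
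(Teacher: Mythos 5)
Your proposal follows essentially the same route as the paper: linearize $-\partial L/\partial r_-$ at the unaugmented pair to extract $\xi(r)$, pull out the augmentation-averaged connection $\bar K_l$ via conditional independence, symmetrize in $(\vx,\vx')$ to obtain $\tfrac12\var^\xi_{\vx,\vx'}[\bar K_l(\vx)-\bar K_l(\vx')]$, and control the Taylor remainder by Cauchy--Schwarz against a uniform bound $M_K=\max_\vx\|K_l(\vx)\|$. One small slip: the claim $\mathbb{E}[\epsilon_i\mid\vx]=0$ is false in general, since $\mathbb{E}_{\vx_1\sim p_\aug(\cdot|\vx)}[\vf_L(\vx_1)]$ need not equal $\vf_L(\vx)$; the paper's own derivation therefore carries an additional augmentation-bias term $c_0(\vx):=\|\vf_L(\vx)-\mathbb{E}_{\vx''\sim p_\aug(\cdot|\vx)}[\vf_L(\vx'')]\|$ alongside $\sigma_\aug(\vx)$, which is absorbed only at the level of the final $\mathcal{O}$ statement of $\theta$, and for $\lnce^\tau$ the Taylor coefficients are $\mathcal{O}(1/\tau^2)$ rather than $\mathcal{O}(1/\tau)$ --- neither affects the theorem's conclusion but both should be tracked in a careful write-up.
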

\begin{proof}
When $\partial L / \partial r_{k-}$ is no longer constant, we consider its expansion with respect to un-augmented data point $\cX_0 = \{\vx, \vx'_1, \ldots, \vx'_k\}$. Here $\cX = \{\vx_1, \vx_+, \vx_{1-}, \ldots, \vx_{H-}\}$ is one data sample that includes both positive and negative pairs. Note that $\vx_1, \vx_+ \sim p_\aug(\cdot|\vx)$ and $\vx_{k-}\sim p_\aug(\cdot|\vx'_k)$ for $1\le k\le H$.
\begin{equation}
\frac{\partial L}{\partial r_{k-}}\Bigg|_\cX = \frac{\partial L}{\partial r_{k-}}\Bigg|_{\cX_0} + \epsilon 
\end{equation}
where $\epsilon$ is a bounded quantity for $\ltriplet{}$ ($|\epsilon| \le 1$) and $\lnce{}$ ($|\epsilon| \le 2 / \tau$). 

We consider $H = 1$ where there is only a single negative pair (and $r_-$). In this case $\cX_0 = \{\vx,\vx'\}$. Let 
\begin{equation}
r:= \frac12\|\vf_L(\vx) - \vf_L(\vx')\|^2_2
\end{equation}
and 
\begin{equation}
\xi(\vx,\vx') := - \frac{\partial L}{\partial r_{-}}\big|_{\cX_0}
\end{equation}

Note that for $\ltriplet{}$, it is not differentiable, so we could use its soft version: $\ltriplet^\tau(r_+, r_-) = \tau \log(1 + e^{(r_+ - r_- + r_0) / \tau})$. It is easy to see that $\lim_{\tau \rightarrow 0} \ltriplet^\tau(r_+, r_-) \rightarrow \max(r_+ - r_- + r_0, 0)$. 

For the two losses:
\begin{itemize}
    \item For $\ltriplet^\tau$, we have 
    \begin{equation}
        \xi(\vx,\vx') = \xi(r) = \frac{e^{-r/\tau}}{e^{-r_0/\tau} + e^{-r/\tau}}. \label{eq:xi-triplet}
    \end{equation}
    \item For $\lnce^\tau$, we have 
    \begin{equation}
        \xi(\vx,\vx') = \xi(r) = \frac{1}{\tau}\frac{e^{-r/\tau}}{1 + e^{-r/\tau}} \label{eq:xi-nce}.
    \end{equation} 
\end{itemize}
Note that for $\ltriplet^\tau$ we have $\lim_{\tau\rightarrow 0} \xi(r) = \mathbb{I}(r \le r_0)$.  for $\lnce{}$, since it is differentiable, by Taylor expansion we have $\epsilon =  
O(\|\vx_1-\vx\|_2,\|\vx_+-\vx\|_2, \|\vx_- - \vx'\|_2)$, which will be used later. 

\textbf{The constant term $\xi$ with respect to data augmentation.} In the following, we first consider the term $\xi$, which only depends on un-augmented data points $\cX_0$. From Lemma~\ref{lemma:cov-grad}, we now have a term in the gradient:
\begin{equation}
    \vg_l(\cX) := -K_l(\vx_1)\left[ K_l^\t(\vx_+) -  K^\t_l(\vx_{-})\right]\xi(\vx, \vx')\vec(W_l)
\end{equation}
Under the large batch limit, taking expectation with respect to data augmentation $p_\aug$ and notice that all augmentations are done independently, given un-augmented data $\vx$ and $\vx'$, we have:
\begin{equation}
    \vg_l(\vx, \vx') := \eee{p_\aug}{\vg_l(\cX)} = -\bar K_l(\vx)\left[ \bar K_l^\t(\vx) - \bar K^\t_l(\vx')\right]\xi(\vx, \vx')\vec(W_l)
\end{equation}
Symmetrically, if we swap $\vx$ and $\vx'$ since both are sampled from the same distribution $p(\cdot)$, we have:
\begin{equation}
    \vg_l(\vx', \vx) =  -\bar K_l(\vx')\left[ \bar K_l^\t(\vx') - \bar K^\t_l(\vx)\right]\xi(\vx', \vx)\vec(W_l)
\end{equation}
since $\xi(\vx', \vx)$ only depends on the squared $\ell_2$ distance $r$ (Eqn.~\ref{eq:xi-triplet} and Eqn.~\ref{eq:xi-nce}), we have $\xi(\vx', \vx) = \xi(\vx,\vx') = \xi(r)$ and thus:
\begin{eqnarray}
    \vg_l(\vx, \vx') + \vg_l(\vx', \vx) &=& -\left[\bar K_l(\vx)\bar K_l^\t(\vx) - \bar K_l(\vx)\bar K^\t_l(\vx') + \bar K_l(\vx')\bar K_l^\t(\vx') - \bar K_l(\vx')\bar K^\t_l(\vx)\right]\xi(r)\vec(W_l) \nonumber \\
    &=& - \xi(r) (\bar K_l(\vx) - \bar K_l(\vx'))(\bar K_l(\vx) - \bar K_l(\vx'))^\t\vec(W_l) \label{eq:symmetry-trick}
\end{eqnarray}
Therefore, we have:
\begin{eqnarray}
    \eee{\vx,\vx'\sim p}{\vg_l(\vx,\vx')} &=& -\frac{1}{2}\eee{\vx,\vx'\sim p}{\xi(r) (\bar K_l(\vx) - \bar K_l(\vx'))(\bar K_l(\vx) - \bar K_l(\vx'))^\t}\vec(W_l) \\
    &=& -\frac{1}{2}\var^\xi_{\vx,\vx'\sim p}\left[\bar K_l(\vx) - \bar K_l(\vx')\right]\vec(W_l)
\end{eqnarray}

\begin{figure}
    \centering
    \includegraphics[width=0.8\textwidth]{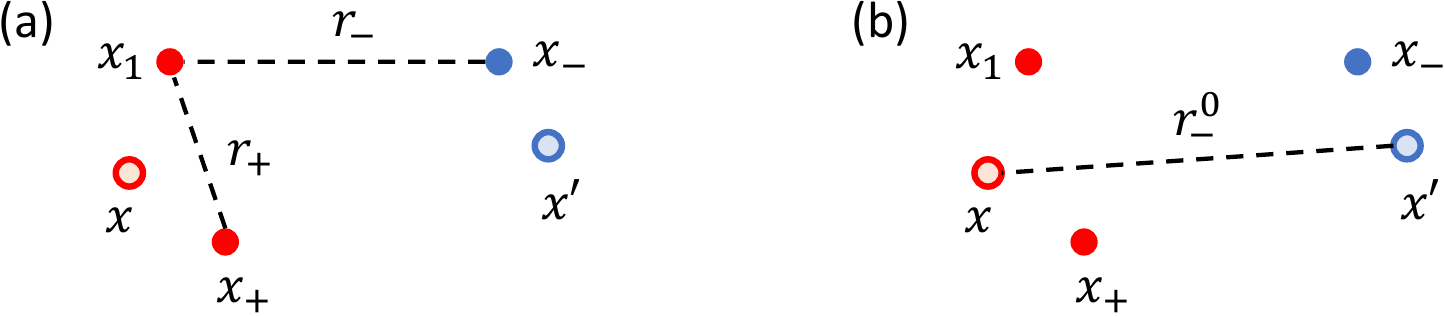}
    \caption{\small Notations used in Theorem~\ref{thm:contrast-simclr-pairwise}. \textbf{(a)} Un-augmented data point $\vx, \vx'\sim p(\cdot)$ and the augmented data points $\vx_1, \vx_+ \sim p_\aug(\cdot|\vx)$ (in red) and $\vx_- \sim p_\aug(\cdot|\vx')$ (in blue). The squared distance $r_+ := \frac12\|\vf_L(\vx_1) - \vf_L(\vx_+)\|_2^2$ and $r_- := \frac12\|\vf_L(\vx_1) - \vf_L(\vx_-)\|_2^2$. \textbf{(b)} $r_-^0 := \frac12\|\vf_L(\vx) - \vf_L(\vx')\|_2^2$ used in bounding the residue term $\theta$.}
    \label{fig:proof-theorem4}
\end{figure}

\textbf{Bound the error}. For $\lnce{}$, let 
\begin{equation}
F := -\frac{\partial L}{\partial r_-} = \frac{1}{\tau} \left(\frac{e^{-r_-/\tau}}{e^{-r_+/\tau} +  e^{-r_-/\tau}}\right)  > 0
\end{equation}
then it is clear that $0 < F < 1 / \tau$. We can compute its partial derivatives:
\begin{equation}
\frac{\partial F}{\partial r_+}= -F(1 / \tau - F),\quad\quad \frac{\partial F}{\partial r_-}= F(1 / \tau - F) 
\end{equation}
Note that $|F(1/ \tau - F)| < 1 / \tau^2$ is always bounded. 

Note that $F$ is a differentiable function with respect to $r_+$ and $r_-$. We do a Taylor expansion of $F$ on the two variables $(r_+,r_-)$, using intermediate value theorem, we have:
\begin{equation}
    \epsilon := -F\Big|_{\cX} + F\Big|_{\cX_0} = -\frac{\partial F}{\partial r_+}\Big|_{\{\tilde r_+, \tilde r_-\}} (r_+ - r_+^0) - \frac{\partial F}{\partial r_-} \Big|_{\{\tilde r_+, \tilde r_-\}} (r_- - r^0_-) 
\end{equation}
for derivatives evaluated at some point $\{\tilde r_+, \tilde r_-\}$ at the line connecting $(\vx, \vx, \vx')$ and $(\vx_1, \vx_+, \vx_-)$. $r_+^0$ and $r_-^0$ are squared $\ell^2$ distances  evaluated at $(\vx, \vx, \vx')$, therefore, $r_+^0 \equiv 0$ and $r^0_- = \frac12 \|\vf(\vx) - \vf(\vx')\|_2^2$ (note that here we just use $\vf := \vf_L$ for brevity). 

Therefore, we have $r_+ - r^0_+ = \frac12 \|\vf(\vx_1) - \vf(\vx_+)\|_2^2$ and we have:
\begin{eqnarray}
    \eee{p_\aug}{\left|\frac{\partial F}{\partial r_+}\Big|_{\{\tilde r_+, \tilde r_-\}} (r_+ - r_+^0)\right|} &\le& \frac{1}{\tau^2} \cdot \frac12 \int \|\vf(\vx_1) - \vf(\vx_+)\|_2^2 p_\aug(\vx_1|\vx) p_\aug(\vx_+|\vx)\dd\vx_1\dd\vx_+ \nonumber \\
    &=& \frac{1}{\tau^2} \left\{\eee{\vx' \sim p_\aug(\cdot|\vx)}{\|\vf(\vx')\|^2} - \|\eee{\vx' \sim p_\aug(\cdot|\vx)}{\vf(\vx')}\|^2\right\} \\ 
    &=& \frac{1}{\tau^2} \tr \var_\aug[\vf|\vx]
\end{eqnarray}
where $\tr\var_\aug[\vf|\vx] := \tr\var_{\vx' \sim p_\aug(\cdot|\vx)}[\vf(\vx')]$ is a scalar. 

Similarly, for $r_- - r^0_-$ we want to break the term into groups of terms, each is a difference \emph{within} data augmentation. Using that 
\begin{equation}
\|\va + \Delta\va - (\vb + \Delta\vb)\|_2^2 - \|\va-\vb\|_2^2 = 2(\Delta\va - \Delta\vb)^\t(\va-\vb) + \|\Delta\va-\Delta\vb\|_2^2
\end{equation}
we have (here $\va := \vf(\vx)$, $\vb := \vf(\vx')$, $\Delta\va := \vf(\vx_1) - \vf(\vx)$ and $\Delta\vb := \vf(\vx_-) - \vf(\vx')$):
\begin{eqnarray}
    r_- - r^0_- &=& \frac12\left[\|\vf(\vx_1) - \vf(\vx_-)\|_2^2 - \|\vf(\vx) - \vf(\vx')\|_2^2\right] \\
    &=& \left[\vf(\vx) - \vf(\vx')\right]^\t \left[(\vf(\vx_1) - \vf(\vx)) - (\vf(\vx_-) - \vf(\vx'))\right] \nonumber \\
    &+& \frac12\|(\vf(\vx_1) - \vf(\vx)) - (\vf(\vx_-) - \vf(\vx'))\|^2 
\end{eqnarray}
Using $\frac{1}{2}\|\va-\vb\|_2^2 \le  \|\va\|_2^2 + \|\vb\|_2^2$ and Eqn.~\ref{eq:e-abs-value}, we have the following (Here $c_0(\vx) := \|\vf(\vx) - \eee{\vx'\sim p_\aug(\cdot|\vx)}{\vf(\vx')}\|$:
\begin{eqnarray}
    & & \eee{p_\aug}{\left|\frac{\partial F}{\partial r_-}\Big|_{\{\tilde r_+, \tilde r_-\}} (r_- - r_-^0)\right|} \\ &\le& \frac{1}{\tau^2} \left\{ \|\vf(\vx)-\vf(\vx')\|\left(\sqrt{\tr\var_\aug[\vf|\vx]} + \sqrt{\tr\var_\aug[\vf|\vx']} + c_0(\vx) + c_0(\vx')\right) + \tr\var_\aug[\vf|\vx] + \tr\var_\aug[\vf|\vx'] \right\} \nonumber
\end{eqnarray}

Let $M_K := \max_\vx \|K_l(\vx)\|$ so finally we have: 
\begin{eqnarray}
\theta &=& |\eee{\vx,\vx',\aug}{\epsilon K_l(\vx_1)(K^\t_l(\vx_+) - K^\t_l(\vx_-))}| \nonumber \\
&\le& \eee{\vx,\vx',\aug}{\left|\epsilon K_l(\vx_1)(K^\t_l(\vx_+) - K^\t_l(\vx_-))\right|} \nonumber \\ 
&\le &\frac{2M^2_K}{\tau^2} \left\{2\eee{\vx,\vx'\sim p(\cdot)}{\|\vf(\vx)-\vf(\vx')\|\left(\sqrt{\tr\var_\aug[\vf|\vx]}+c_0(\vx)\right)} + 3\tr\eee{\vx}{\var_\aug[\vf|\vx]}\right\} \label{eq:final-nce-bound} 
\end{eqnarray}
Note that if there is no augmentation (i.e., $p_\aug(\vx_1|\vx) = \delta(\vx_1 - \vx)$), then $c_0(\vx) \equiv 0$, $\var_\aug[\vf|\vx] \equiv 0$ and the error (Eqn.~\ref{eq:final-nce-bound}) is also zero. A small range of augmentation yields tight bound.  

For $\ltriplet^\tau$, the derivation is similar. The only difference is that we have $1 / \tau$ rather than $1 / \tau^2$ in Eqn.~\ref{eq:final-nce-bound}. Note that this didn't change the order of the bound since $\xi(r)$ (and thus the covariance operator) has one less $1 / \tau$ as well. We could also see that for hard loss $\ltriplet$, since $\tau \rightarrow 0$ this bound will be very loose. We leave a more tight bound as the future work.  
\end{proof}

\textbf{Remarks for $H > 1$.} Note that for $H > 1$, $\lnce{}$ has multiple negative pairs and $\partial L / \partial r_{k-} = e^{-r_{k-} / \tau} / Z(\cX)$ where $Z(\cX) := e^{-r_+/\tau} + \sum_{k=1}^H e^{-r_{k-}/\tau}$. While the nominator $e^{-r_{k-} / \tau}$ still only depends on the distance between $\vx_1$ and $\vx_{k-}$ (which is good), the normalization constant $Z(\cX)$ depends on $H + 1$ distance pairs simultaneously. This leads to 

\begin{equation}
    \xi_k = \frac{\partial L}{\partial r_{k-}}\Big|_{\cX_0} = \frac{e^{-\|\vx - \vx'_k\|^2_2 / \tau}}{1 + \sum_{k=1}^H e^{-\|\vx - \vx'_k\|_2^2 / \tau}}
\end{equation}
which causes issues with the symmetry trick (Eqn.~\ref{eq:symmetry-trick}), because the denominator involves many negative pairs at the same time.  

However, if we think given one pair of distinct data point $(\vx, \vx')$, the normalized constant $Z$ averaged over data augmentation is approximately constant due to homogeneity of the dataset and data augmentation, then Eqn.~\ref{eq:symmetry-trick} can still be applied and similar conclusion follows. 
\begin{corollary}
SimCLR under $\lsimple^\beta := (1 + \beta) r_+ - r_-$ has the gradient update rule as follows at layer $l$:
\begin{equation}
    \vec(\Delta W_l) = \op_l\vec(W_l) = (-\beta \opintra_l + \opinter_l)\vec(W_l)
\end{equation}
where $\opintra_l$ and $\opinter_l$ are intra-augmentation and inter-augmentation covariance operators at layer $l$:
\begin{eqnarray}
    \opintra_l &:=& \eee{\vx\sim p(\cdot)}{\var_{\vx'\sim p_\aug(\cdot|\vx)}[K_l(\vx')]} \\
    \opinter_l &:=& \var_{\vx\sim p(\cdot)}\left[\bar K_l(\vx)\right] = \var_{\vx\sim p(\cdot)}\left[\eee{\vx'\sim p_\aug(\cdot|\vx)}{K_l(\vx')}\right]
\end{eqnarray}
\end{corollary}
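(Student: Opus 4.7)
My plan is to imitate the proof of Theorem~\ref{thm:contrast-simclr-simple}, but keep careful track of the extra $\beta$ coefficient on $r_+$, since this breaks the cancellation that previously eliminated the ``diagonal'' term $K_l(\vx_1) K_l^\t(\vx_1)$. Starting from the single-sample gradient formula from Theorem~\ref{thm:l2-two-tower} applied separately to $r_+$ and $r_-$, I would write
\[
\vec(g_{W_l}) = (1+\beta)\,K_l(\vx_1)[K_l(\vx_1) - K_l(\vx_+)]^\t \vec(W_l) - K_l(\vx_1)[K_l(\vx_1) - K_l(\vx_-)]^\t \vec(W_l),
\]
which after regrouping becomes $K_l(\vx_1)\bigl[\beta K_l(\vx_1) - (1+\beta) K_l(\vx_+) + K_l(\vx_-)\bigr]^\t \vec(W_l)$.

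Next I would take the large-batch expectation over the sampling process $\vx,\vx'\sim p(\cdot)$ independently, with $\vx_1,\vx_+\sim p_\aug(\cdot|\vx)$ (conditionally independent given $\vx$) and $\vx_-\sim p_\aug(\cdot|\vx')$. Using $\bar K_l(\vx):=\eee{\vx''|\vx}{K_l(\vx'')}$, three expectations need to be evaluated: $\ee{K_l(\vx_1)K_l^\t(\vx_+)} = \eee{\vx}{\bar K_l(\vx)\bar K_l^\t(\vx)}$ by conditional independence, $\ee{K_l(\vx_1)K_l^\t(\vx_-)} = \eee{\vx}{\bar K_l(\vx)}\,\eee{\vx'}{\bar K_l(\vx')}^\t$ by full independence, and $\ee{K_l(\vx_1)K_l^\t(\vx_1)} = \eee{\vx}{\var_{\vx_1|\vx}[K_l(\vx_1)]} + \eee{\vx}{\bar K_l(\vx)\bar K_l^\t(\vx)}$ by the law of total variance applied conditionally on $\vx$. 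The first term in this last decomposition is precisely $\opintra_l$, and the difference of the second and third terms is precisely $\opinter_l=\var_{\vx}[\bar K_l(\vx)]$.

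Substituting and collecting terms, the $\beta\,\ee{K_l(\vx_1)K_l^\t(\vx_1)}$ piece contributes $\beta(\opintra_l + B)$ where $B:=\eee{\vx}{\bar K_l\bar K_l^\t}$, while $-(1+\beta)B + C$ (with $C:=\ee{\bar K_l}\ee{\bar K_l}^\t$) contributes $-B + C - \beta B$. The two $\beta B$ terms cancel and $B-C = \opinter_l$ remains, yielding $\ee{\vec(g_{W_l})} = (\beta\opintra_l - \opinter_l)\vec(W_l)$. Since gradient descent sets $\Delta W_l = -g_{W_l}$, this gives the claimed identity $\vec(\Delta W_l) = (-\beta\opintra_l + \opinter_l)\vec(W_l)$.

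There is no serious obstacle here; the proof is essentially bookkeeping once one observes that the asymmetry introduced by the $1+\beta$ coefficient prevents the $K_l(\vx_1)K_l^\t(\vx_1)$ term from cancelling, and that the law of total variance cleanly splits this term into $\opintra_l$ plus a piece that combines with the positive-pair and negative-pair terms to form $\opinter_l$. The only mildly delicate step is verifying the conditional independence justifying $\ee{K_l(\vx_1)K_l^\t(\vx_+)} = \eee{\vx}{\bar K_l\bar K_l^\t}$, which follows because $\vx_1$ and $\vx_+$ are two independent augmentations drawn from $p_\aug(\cdot|\vx)$ for the \emph{same} $\vx$, exactly as in Theorem~\ref{thm:contrast-simclr-simple}.
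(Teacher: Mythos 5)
Your proof is correct and follows essentially the same approach as the paper: both compute $\ee{g_{W_l}}$ by taking expectations under the augmentation and data distributions, with the key observation that the $(1+\beta)$ coefficient breaks the cancellation of $K_l(\vx_1)K_l^\t(\vx_1)$, and the resulting excess term decomposes into $\opintra_l$ plus a piece that recombines into $\opinter_l$. The only organizational difference is that the paper splits $\lsimple^\beta = (r_+ - r_-) + \beta r_+$ and reuses Theorem~\ref{thm:contrast-simclr-simple} for the first summand while directly computing $\eee{\vx_1,\vx_+}{\partial r_+/\partial W_l} = \opintra_l$ for the second, whereas you expand the full single-sample gradient up front and regroup after applying the conditional second-moment decomposition of $\ee{K_l(\vx_1)K_l^\t(\vx_1)}$; these are the same computation, presented in a different order.
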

\begin{proof}
Note that the second part $\var_{\vx\sim p(\cdot)}\left[\bar K_l(\vx)\right]$ (which corresponds to $r_+ - r_-$) has already proven in Theorem 3 of the main paper. By gradient descent, we have:
\begin{equation}
    \vec(g_{W_l}) = \frac{\partial L}{\partial W_l} = \frac{\partial L}{\partial r_+}\frac{\partial r_+}{\partial W_l} + \frac{\partial L}{\partial r_{-}}\frac{\partial r_{-}}{\partial W_l} \label{eq:contrast-weight}
\end{equation}
and by Lemma~\ref{lemma:cov-grad}:
\begin{equation}
    \frac{\partial r_+}{\partial W_l} = K_l(\vx_1)(K_l(\vx_1) - K_l(\vx_+))^\t \vec(W_l)
\end{equation}
If we take expectation conditioned on $\vx$, the common sample that generates both $\vx_1$ and $\vx_+$ with $p_\aug(\cdot|\vx)$, the first term becomes $\eee{\vx'\sim p_\aug}{K_l K_l^\t}$ and the second term becomes $\eee{\vx'\sim p_\aug}{K_l}\eee{\vx'\sim p_\aug}{K^\t_l}$. so we have:
\begin{equation}
    \eee{\vx_1,\vx_+}{\frac{\partial r_+}{\partial W_l}} = \eee{\vx\sim p(\cdot)}{\var_{\vx'\sim p_\aug(\cdot|\vx)}[K_l(\vx')]}\vec(W_l)
\end{equation}
The conclusion follows since gradient descent is used and $\Delta W_l = - g_{W_l}$.
\end{proof}

\def\diag{\mathrm{diag}}

\section{Hierarchical Latent Tree Models (Section~\ref{sec:motivatehltm})}
\subsection{Lemmas}
\begin{lemma}[Variance Squashing]
\label{lemma:var-squashing}
Suppose a function $\phi: \rr\mapsto \rr$ is $L$-Lipschitz continuous: $|\phi(x) - \phi(y)| \le L |x - y|$, then for $x \sim p(\cdot)$, we have:
\begin{equation}
    \var_p[\phi(x)] \le L^2 \var_p[x]
\end{equation}
\end{lemma}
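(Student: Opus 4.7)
The plan is to exploit the standard symmetrization identity that rewrites a variance as half the expected squared difference of two independent copies, and then apply the Lipschitz bound pointwise before taking the expectation.

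First I would let $x, y \sim p$ be independent, and recall the well-known identity
\begin{equation}
\var_p[\phi(x)] = \tfrac{1}{2}\,\mathbb{E}_{x,y \sim p}\!\left[(\phi(x) - \phi(y))^2\right],
\end{equation}
which follows from expanding the square and using $\mathbb{E}[\phi(x)\phi(y)] = (\mathbb{E}[\phi(x)])^2$ by independence. The analogous identity $\var_p[x] = \tfrac{1}{2}\,\mathbb{E}_{x,y}[(x-y)^2]$ will be applied at the end.

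Next, the $L$-Lipschitz hypothesis gives $(\phi(x) - \phi(y))^2 \le L^2 (x-y)^2$ pointwise for all realizations of $x$ and $y$. Taking expectation over the joint law of $(x,y)$ (which is just $p \otimes p$) and combining with the two symmetrization identities yields
\begin{equation}
\var_p[\phi(x)] = \tfrac{1}{2}\,\mathbb{E}_{x,y}[(\phi(x)-\phi(y))^2] \le \tfrac{L^2}{2}\,\mathbb{E}_{x,y}[(x-y)^2] = L^2\, \var_p[x],
\end{equation}
which is the claim.

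There is essentially no obstacle: the only technicality is the implicit assumption that $\var_p[x] < \infty$, in which case the right-hand side is finite and the Lipschitz bound on $\phi$ automatically ensures $\var_p[\phi(x)] < \infty$ as well, so all expectations are well-defined and the inequality is meaningful. If $\var_p[x] = \infty$ the statement is vacuous.
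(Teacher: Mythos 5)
Your proof is correct and matches the paper's argument exactly: both use the symmetrization identity $\var_p[\phi(x)] = \tfrac{1}{2}\,\mathbb{E}_{x,y}[(\phi(x)-\phi(y))^2]$ for independent $x,y\sim p$, apply the pointwise Lipschitz bound, and then use the analogous identity for $\var_p[x]$. Nothing to add.
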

\begin{proof}
Suppose $x, y\sim p(\cdot)$ are independent samples and $\mu_\phi := \ee{\phi(x)}$. Note that $\var[\phi(x)]$ can be written as the following:
\begin{eqnarray}
    \ee{|\phi(x)-\phi(y)|^2} &=& \frac{1}{2}\ee{|(\phi(x)- \mu_\phi) - (\phi(y) - \mu_\phi)|^2} \nonumber \\
    &=& \ee{|\phi(x)- \mu_\phi|^2} + \ee{|\phi(y)- \mu_\phi|^2} - 2\ee{(\phi(x)- \mu_\phi)(\phi(y)- \mu_\phi)} \nonumber \\
    &=& 2\var_p[\phi(x)] 
\end{eqnarray}
Therefore we have:
\begin{equation}
   \var_p[\phi(x)] = \frac{1}{2}\ee{|\phi(x)-\phi(y)|^2} \le \frac{L^2}{2}\ee{|x-y|^2} = L^2\var_p[x] \label{eq:var-squashing}
\end{equation}
\end{proof}

\begin{lemma}[Sharpened Jensen's inequality~\cite{liao2018sharpening}]
If function $\phi$ is twice differentiable, and $x\sim p(\cdot)$, then we have:
\begin{equation}
    \frac{1}{2}\var[x]\inf\phi'' \le \ee{\phi(x)} - \phi(\ee{x}) \le \frac{1}{2}\var[x]\sup\phi'' 
\end{equation}
\end{lemma}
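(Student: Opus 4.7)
The plan is to prove Lemma 2 (Sharpened Jensen's inequality) by a second-order Taylor expansion about the mean $\mu := \ee{x}$, take expectations, and use the mean value theorem in the Lagrange form of the remainder to bound the second-order term by $\inf \phi''$ and $\sup \phi''$.

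Concretely, first I would invoke Taylor's theorem with Lagrange remainder: for every realization $x$ of the random variable, there exists some intermediate point $\xi = \xi(x)$ lying between $x$ and $\mu$ such that
\begin{equation}
\phi(x) = \phi(\mu) + \phi'(\mu)(x-\mu) + \tfrac{1}{2}\phi''(\xi(x))(x-\mu)^2.
\end{equation}
Second, I would take expectations on both sides with respect to $p$. The first-order term vanishes because $\ee{x-\mu} = 0$, leaving
\begin{equation}
\ee{\phi(x)} - \phi(\mu) = \tfrac{1}{2}\ee{\phi''(\xi(x))(x-\mu)^2}.
\end{equation}
Third, since $(x-\mu)^2 \ge 0$ pointwise and $\inf \phi'' \le \phi''(\xi(x)) \le \sup\phi''$ for every realization, multiplying through by $(x-\mu)^2$ preserves the inequality; taking expectation and using $\ee{(x-\mu)^2} = \var[x]$ gives both the lower and upper bounds claimed in the statement.

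The argument is short and essentially a standard textbook computation, so there is no serious technical obstacle. The only mild subtlety is measurability: one needs $\xi(x)$ to be chosen in a measurable way so that $\phi''(\xi(x))(x-\mu)^2$ is integrable; this is automatic if one simply defines the pointwise remainder $R(x) := \phi(x) - \phi(\mu) - \phi'(\mu)(x-\mu)$ directly and observes that by the mean value theorem $R(x) = \tfrac{1}{2}\phi''(\xi(x))(x-\mu)^2$ for some $\xi(x)$, so that $R(x)/(x-\mu)^2 \in [\inf \phi''/2,\, \sup \phi''/2]$ whenever $x \neq \mu$. Then integrating the two-sided bound $\tfrac{1}{2}(\inf\phi'')(x-\mu)^2 \le R(x) \le \tfrac{1}{2}(\sup\phi'')(x-\mu)^2$ yields the result. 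Degenerate cases (e.g.\ $\var[x] = 0$, in which case both sides collapse to zero, or $\sup\phi'' = +\infty$, in which the upper bound is vacuous) are handled trivially.
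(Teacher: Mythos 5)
Your proof is correct. Note that the paper does not actually prove this lemma — it simply states it with a citation to \cite{liao2018sharpening} — and your argument via Taylor's theorem with the Lagrange (mean-value) form of the second-order remainder, together with the measurability fix of working directly with $R(x) := \phi(x) - \phi(\mu) - \phi'(\mu)(x-\mu)$ and the pointwise bound $\tfrac12(\inf\phi'')(x-\mu)^2 \le R(x) \le \tfrac12(\sup\phi'')(x-\mu)^2$, is the standard derivation and essentially the one used in the cited reference.
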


\begin{lemma}[Sharpened Jensen's inequality for ReLU activation]
\label{lemma:sharpened-jensen}
For ReLU activation $\psi(x) := \max(x, 0)$ and $x\sim p(\cdot)$, we have:
\begin{equation}
0 \le \ee{\psi(x)} - \psi(\ee{x}) \le \sqrt{\var_p[x]} 
\end{equation}
\end{lemma}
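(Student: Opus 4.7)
The plan is to establish the two inequalities separately: the lower bound is immediate from convexity of $\psi$, and the upper bound follows by combining the $1$-Lipschitz continuity of ReLU with the $L^1$-$L^2$ Cauchy--Schwarz bound. There is no real obstacle here; the only subtlety is that the classical sharpened Jensen's inequality (the preceding lemma) needs $\phi$ twice differentiable with a bounded second derivative, which ReLU is not, so we cannot invoke it directly and must give a short self-contained argument.

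For the lower bound, since $\psi(x) = \max(x,0)$ is convex, Jensen's inequality gives $\psi(\ee{x}) \le \ee{\psi(x)}$, i.e.\ $\ee{\psi(x)} - \psi(\ee{x}) \ge 0$.

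For the upper bound, let $\mu := \ee{x}$. The key observation is that $\psi$ is $1$-Lipschitz: $|\psi(a) - \psi(b)| \le |a - b|$ for all $a,b \in \rr$ (this is immediate from $\psi(a) = (a + |a|)/2$, or by checking slopes on the two linear pieces). Hence
\begin{equation}
\ee{\psi(x)} - \psi(\mu) = \ee{\psi(x) - \psi(\mu)} \le \ee{|\psi(x) - \psi(\mu)|} \le \ee{|x - \mu|}.
\end{equation}
Then I would apply Cauchy--Schwarz (equivalently, the $L^1 \le L^2$ norm inequality on a probability space), $\ee{|x - \mu|} \le \sqrt{\ee{(x-\mu)^2}} = \sqrt{\var_p[x]}$, which yields the claimed upper bound.

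Combining the two inequalities gives the lemma. The argument is essentially a special case of Lemma~\ref{lemma:var-squashing} with $L = 1$ (for a one-sided version), and is the natural replacement for the preceding sharpened Jensen inequality when the activation lacks the required smoothness. I expect no hidden difficulty.
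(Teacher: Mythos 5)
Your proof is correct and matches the paper's argument essentially line for line: Jensen for the lower bound, then $\psi(x) - \psi(\mu) \le |x - \mu|$ (which you justify via $1$-Lipschitz continuity, the paper just asserts it for ReLU) followed by Cauchy--Schwarz for the upper bound. No differences worth noting.
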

\begin{proof}
Since $\psi$ is a convex function, by Jensen's inequality we have $\ee{\psi(x)} - \psi(\ee{x}) \ge 0$. For the other side, let $\mu := \eee{p}{x}$ and we have (note that for ReLU, $\psi(x) - \psi(\mu) \le |x - \mu|$):
\begin{eqnarray}
  \ee{\psi(x)} - \psi(\ee{x}) &=& \int (\psi(x) - \psi(\mu)) p(x)\dd x \\
  &\le& \int |x - \mu|p(x)\dd x
\end{eqnarray}
Note that for the expectation of absolute value $|x-\mu|$, we have:
\begin{equation}
  \int |x - \mu|p(x)\dd x \le \left(\int |x - \mu|^2 p(x)\dd x\right)^{1/2} \left(\int p(x)\dd x\right)^{1/2} = \sqrt{\var_p[x]} \label{eq:e-abs-value}
\end{equation}
where the last inequality is due to Cauchy-Schwarz. 
\end{proof}

\begin{table}[]
    \centering
    \small
    \setlength{\tabcolsep}{1.8pt}
    \begin{tabular}{l|l|l|l}
        \textbf{Symbol} & \textbf{Definition} & \textbf{Size} & \textbf{Description}  \\
        \hline
        $\latents{l}$ & & & The set of all latent variables at layer $l$ of the generative model. \\
        $\nodes{l}$ & & & The set of all neurons at layer $l$ of the neural network. \\
        $\nodes{\mu}$ & &  & The set of neurons that corresponds to $z_\mu$. \\
        $\ch{\mu}$ & $\bigcup_{\nu \in \mathrm{ch}(\mu)} \nodes{\nu}$ &  & The set of neurons that corresponds to children of latent $z_\mu$. \\
        $\cardinal\mu$ & & & Number of possible categorical values taken by $z_\mu \in \{0,\dots,m_\mu-1\}$. \\
        \hline
        $\vzero_\mu$, $\vone_\mu$ & & $\cardinal\mu$ & All-one and all-zero vectors. \\
        $P_{\mu\nu}$ & $[\pr(z_\nu|z_\mu)]$ & $\cardinal\mu\times \cardinal\nu$ & The top-down transition probability from $z_\mu$ to $z_\nu$. \\ 
        $\rho_{\mu\nu}$ & $2 \pr(z_\nu\!\!=\!\!1|z_\mu\!\!=\!\!1) - 1$ & scalar in $[-1,1]$ & Polarity of the transitional probability in the binary case. \\
        $P_0$ & $\diag[\pr(z_0)]$ & $\cardinal0\times \cardinal0$ & The diagonal matrix of probability of $z_0$ taking different values. \\ 
        $v_j(z_\mu)$ & $\eee{z}{f_j|z_\mu}$ & scalar & Expectation of activation $f_j$ given $z_\mu$ ($z_\mu$'s descendants are marginalized). \\
        $\vv_j$ & $[v_j(z_\mu)]$ & $\cardinal\mu$ & Vector form of $v_j(z_\mu)$. \\
        $\vf_\mu$, $\vf_{\ch{\mu}}$ & $[f_j]_{j\in\nodes{\mu}}$, $[f_k]_{k\in\ch{\mu}}$ & $|\nodes{\mu}|$, $|\ch{\mu}|$ & Activations for all nodes $j \in \nodes{\mu}$ and for the children of $\nodes{\mu}$ \\ 
        $\vv_{0k}$, $V_{0,\ch{\mu}}$ & $[\eee{z}{f_k|z_0}]$, $[\vv_{0k}]_{k\in\ch{\mu}}$ & $\cardinal0$, $\cardinal0\times |\ch{\mu}|$ & Expected activation conditioned on $z_0$ \\
        $s_k$ & $\frac{1}{2}(v_k(1) - v_k(0))$ & scalar & Discrepancy of node $k$ w.r.t its latent variable $z_{\nu(k)}$. \\
        $\vamu$ & $[\rho_{\mu\nu(k)}s_k]_{k\in\ch{\mu}}$ & $|\ch{\mu}|$ & Child selectivity vector in the binary case. 
    \end{tabular}
    \caption{Extended notation in \hltm{}.}
    \label{tab:notation-tree-extended}
\end{table}

\begin{figure}
    \centering
    \includegraphics[width=\textwidth]{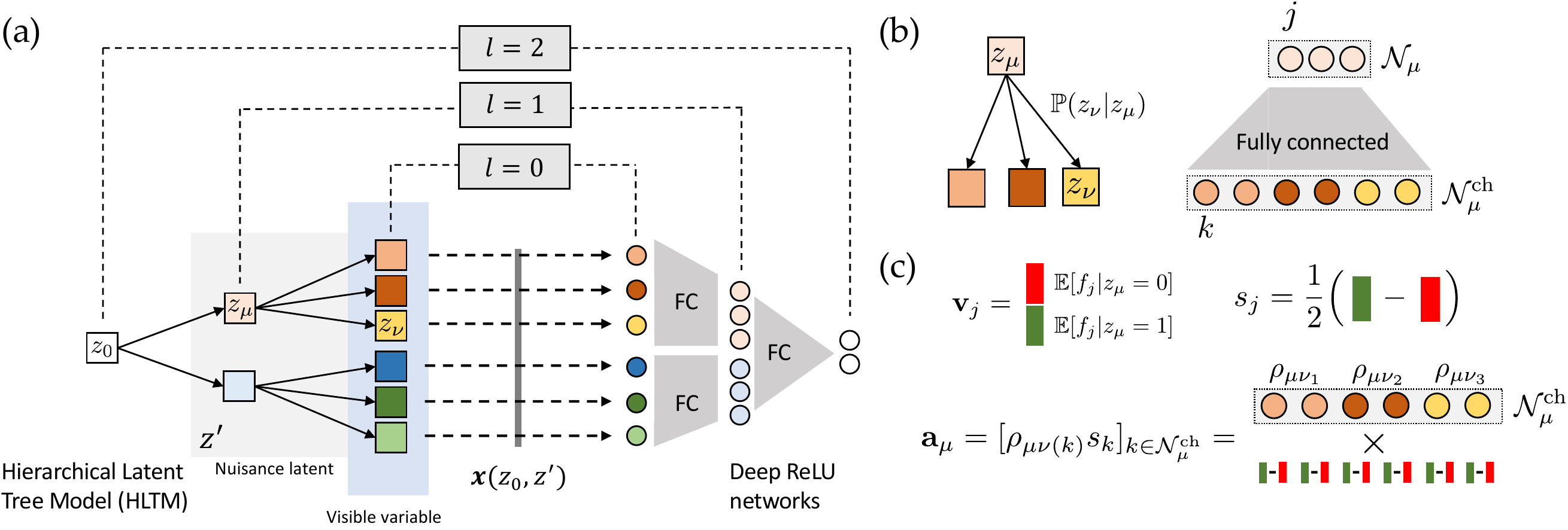}
    \caption{\textbf{(a)} The Hierarchical Latent Tree Model (HLTM). \textbf{(b)} Correspondence between latent variables and nodes in the intermediate layer of neural networks. \textbf{(c)} Definition of $\vv_j$, $s_j$ and $\vamu$ in Table.~\ref{tab:notation-tree-extended}.}
    \label{fig:hltm-appendix}
\end{figure}

\subsection{Taxonomy of \hltm{}}
For convenience, we define the following symbols for $k\in\ch{\mu}$ (note that $|\ch{\mu}|$ is the number of the children of the node set $\nodes{\mu}$): 
\begin{eqnarray}
    \vv_{\mu k} &:=&\eee{z}{f_k|z_\mu} = P_{\mu\nu(k)} \vv_k \in\rr^{\cardinal\mu} \\
    V_{\mu,\ch{\mu}} &:=& [\vv_{\mu k}]_{k\in\ch{\mu}}\\ 
    \tilde\vv_j &:=& \left[\eee{z}{\tilde f_j|z_\mu}\right] = V_{\mu,\ch{\mu}}\vw_j \in \rr^{\cardinal\mu}  
\end{eqnarray}

\begin{figure}
    \centering
    \includegraphics[width=0.5\textwidth]{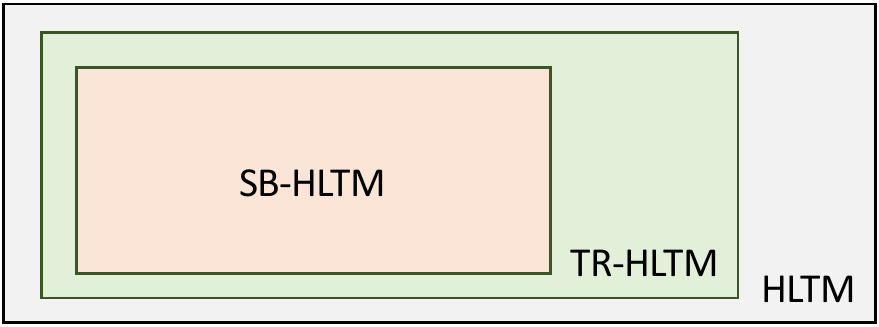}
    \caption{Taxonomy of \hltm{}. TR-\hltm{} is defined in Def.~\ref{definition:trans-hltm} that allows categorical latent variables but requires their transitional probability satisfies certain conditions. SB-\hltm{} (defined in Sec.~\ref{sec:sb-hltm}) is TR-\hltm{} if all latent variables are binary (See Lemma~\ref{lemma:tr-sb-relation}).} 
    \label{fig:taxonomy-of=hltm}
\end{figure}

\begin{definition}[Transition-Regularized \hltm{} (abbr. TR-\hltm{})]
\label{definition:trans-hltm}
For $\mu\in \latents{l}$ and $\nu\in\latents{l-1}$, the transitional probability matrix $P_{\mu\nu} := [\pr(z_\nu|z_\mu)]$ has decomposition $P_{\mu\nu} = \frac{1}{\cardinal\nu}\vone_{\mu}\vone^\t_\nu + C_{\mu\nu}$ where $C_{\mu\nu}\vone_\nu = \vzero_\mu$ and $\vone^\t_\mu C_{\mu\nu} = \vzero_\nu$.  
\end{definition}
Note that $C_{\mu\nu}\vone = \vzero$ is obvious due to the property of conditional probability. The real condition is $\vone^\t_\mu C_{\mu\nu} = \vzero_\nu$. If $\cardinal\mu = \cardinal\nu$, then $P_{\mu\nu}$ is a square matrix and Def.~\ref{definition:trans-hltm} is equivalent to $P_{\mu\nu}$ is double-stochastic. The Def.~\ref{definition:trans-hltm} makes computation of $P_{\mu\nu}$ easy for any $z_\mu$ and $z_\nu$. 

As we will see in the remark of Lemma~\ref{lemma:probability-transition}, symmetric binary \hltm{} (SB-\hltm{}) naturally satisfies  Def.~\ref{definition:trans-hltm} and is a TR-\hltm{}. Conversely, TR-\hltm{} allows categorical latent variables and is a super set of SB-\hltm{}. See Fig.~\ref{fig:taxonomy-of=hltm}.

\begin{lemma}[Property of TR-\hltm{}]
\label{lemma:probability-transition}
For TR-\hltm{} (Def.~\ref{definition:trans-hltm}), for $\mu\in\latents{l}$, $\nu\in\latents{1-1}$ and $\alpha\in\latents{l-2}$, we have: 
\begin{equation}
P_{\mu\alpha} = P_{\mu\nu} P_{\nu\alpha} = \frac{1}{\cardinal\alpha}\vone_{\mu}\vone^\t_\alpha+ C_{\mu\nu}C_{\nu\alpha}
\end{equation}
In general, for any $\mu\in \nodes{l_1}$ and $\alpha\in\nodes{l_2}$ with $l_1 > l_2$, we have:
\begin{equation}
P_{\mu\alpha} = \frac{1}{\cardinal\alpha}\vone_{\mu}\vone^\t_\alpha+ \prod_{\mu,\ldots,\xi,\zeta,\ldots,\alpha} C_{\xi\zeta}
\end{equation}
\end{lemma}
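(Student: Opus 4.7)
The plan is to establish the factorization $P_{\mu\alpha} = P_{\mu\nu}P_{\nu\alpha}$ from the tree-Markov structure, and then use the TR-\hltm{} decomposition together with the two orthogonality conditions $C_{\mu\nu}\vone_\nu = \vzero_\mu$ and $\vone_\mu^\t C_{\mu\nu} = \vzero_\nu$ to eliminate the cross-terms in a direct block multiplication. The general formula then follows by induction on path length in the tree.

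First I would verify $P_{\mu\alpha} = P_{\mu\nu}P_{\nu\alpha}$. Because $\alpha \in \latents{l-2}$ sits two layers below $\mu$ via the intermediate $\nu\in\latents{l-1}$ in the generative tree, the path from $\mu$ to $\alpha$ passes through $\nu$, and the Markov property of the tree gives $\pr(z_\alpha|z_\mu,z_\nu) = \pr(z_\alpha|z_\nu)$. Summing over $z_\nu$ by total probability,
\begin{equation}
\pr(z_\alpha|z_\mu) = \sum_{z_\nu}\pr(z_\alpha|z_\nu)\pr(z_\nu|z_\mu),
\end{equation}
which in matrix form is exactly $P_{\mu\alpha}=P_{\mu\nu}P_{\nu\alpha}$.

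Next I would substitute the TR-\hltm{} decompositions $P_{\mu\nu}=\frac{1}{m_\nu}\vone_\mu\vone_\nu^\t + C_{\mu\nu}$ and $P_{\nu\alpha}=\frac{1}{m_\alpha}\vone_\nu\vone_\alpha^\t + C_{\nu\alpha}$ and expand the product into four terms. Using $\vone_\nu^\t\vone_\nu = m_\nu$, the rank-one/rank-one piece collapses to $\frac{1}{m_\alpha}\vone_\mu\vone_\alpha^\t$. The two cross-terms vanish: one contains the factor $\vone_\nu^\t C_{\nu\alpha} = \vzero_\alpha^\t$ and the other contains $C_{\mu\nu}\vone_\nu = \vzero_\mu$, both by Def.~\ref{definition:trans-hltm}. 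What remains is $C_{\mu\nu}C_{\nu\alpha}$, giving the claimed two-step formula.

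For the general statement, I would induct on the path length from $\mu$ down to $\alpha$. The hypothesis is that for any $\alpha$ strictly below $\mu$ along a tree path $\mu,\ldots,\xi,\zeta,\ldots,\alpha$, one has $P_{\mu\alpha} = \frac{1}{m_\alpha}\vone_\mu\vone_\alpha^\t + \prod_{\mu,\ldots,\xi,\zeta,\ldots,\alpha}C_{\xi\zeta}$. To extend by one further layer to some $\beta$ with $P_{\alpha\beta} = \frac{1}{m_\beta}\vone_\alpha\vone_\beta^\t + C_{\alpha\beta}$, I again expand $P_{\mu\beta}=P_{\mu\alpha}P_{\alpha\beta}$ into four pieces: the rank-one/rank-one piece yields $\frac{1}{m_\beta}\vone_\mu\vone_\beta^\t$, the mixed piece $\bigl(\prod C_{\xi\zeta}\bigr)\vone_\alpha\vone_\beta^\t/m_\beta$ vanishes because the last factor in the product is some $C_{\cdot\alpha}$ satisfying $C_{\cdot\alpha}\vone_\alpha = \vzero$, and the other mixed piece $\vone_\mu\vone_\alpha^\t C_{\alpha\beta}/m_\alpha$ vanishes because $\vone_\alpha^\t C_{\alpha\beta} = \vzero_\beta^\t$. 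The surviving term extends the product by one more factor $C_{\alpha\beta}$, completing the induction.

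The main obstacle is essentially bookkeeping rather than mathematical: keeping straight which orthogonality condition ($\vone^\t C = \vzero$ vs.\ $C\vone = \vzero$) kills which cross-term at each step, and recognizing that both conditions in Def.~\ref{definition:trans-hltm} are genuinely needed (the first for row-stochasticity, which is automatic, the second for the double-stochastic-like structure that makes the $\vone\vone^\t$ component a true ``DC mode'' orthogonal to the fluctuation part). Once this is set up, the calculation is mechanical.
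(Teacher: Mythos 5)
Your proof is correct and follows essentially the same route as the paper: block-expand $P_{\mu\nu}P_{\nu\alpha}$ under the decomposition of Def.~\ref{definition:trans-hltm}, use $\vone_\nu^\t\vone_\nu = m_\nu$, $C_{\mu\nu}\vone_\nu = \vzero_\mu$, and $\vone_\nu^\t C_{\nu\alpha} = \vzero_\alpha^\t$ to collapse or kill the four terms, then iterate. You simply make explicit two things the paper leaves terse — the Markov-chain justification of $P_{\mu\alpha}=P_{\mu\nu}P_{\nu\alpha}$ and the induction for arbitrary path length — and your closing observation about which orthogonality condition is ``free'' (row-stochasticity) versus which is the real constraint matches the paper's own remark verbatim in spirit.
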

\begin{proof}
By Def.~\ref{definition:trans-hltm}, we have
\begin{eqnarray}
P_{\mu\alpha} &=& P_{\mu\nu} P_{\nu\alpha} \\ 
&=& \left(\frac{1}{\cardinal\nu}\vone_{\mu}\vone^\t_\nu + C_{\mu\nu}\right)\left(\frac{1}{\cardinal\alpha}\vone_{\nu}\vone^\t_\alpha + C_{\nu\alpha} \right)
\end{eqnarray}
since $\vone_\nu^\t\vone_\nu = \cardinal\nu$, $C_{\mu\nu}\vone_\nu = \vzero_\nu$ and $\vone^\t_\nu C_{\nu\alpha} = \vzero_\alpha$, the conclusion follows. 
\end{proof}

\def\sel#1#2{s_{#1#2}}
\def\tsel#1#2{{\tilde s}_{#1#2}}

\begin{lemma}
\label{lemma:tr-sb-relation}
TR-\hltm{} with all latent variables being binary (or binary TR-\hltm{}) is equivalent to SB-\hltm{}.
\end{lemma}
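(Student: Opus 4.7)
The plan is to prove the equivalence by characterizing the structure that the TR decomposition forces on a $2\times 2$ transition matrix, then matching it against the symmetric parameterization of SB-\hltm{}. The key observation is that the two conditions $C_{\mu\nu}\vone_\nu = \vzero_\mu$ and $\vone_\mu^\t C_{\mu\nu}=\vzero_\nu$ together imply that $P_{\mu\nu} = \frac{1}{m_\nu}\vone_\mu\vone_\nu^\t + C_{\mu\nu}$ is \emph{doubly stochastic}: since $\vone_\mu^\t \left(\tfrac{1}{m_\nu}\vone_\mu\vone_\nu^\t\right) = \vone_\nu^\t$ (when $m_\mu=m_\nu$), one gets $\vone_\mu^\t P_{\mu\nu} = \vone_\nu^\t$. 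This is the structural fact that drives everything.

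For the forward direction (binary TR-\hltm{} $\Rightarrow$ SB-\hltm{}), I would specialize to $m_\mu = m_\nu = 2$ and note that any $2\times 2$ row-stochastic matrix is doubly stochastic if and only if its diagonal entries coincide and its off-diagonal entries coincide, so it takes the form
\begin{equation}
P_{\mu\nu} = \begin{pmatrix} a & 1-a \\ 1-a & a \end{pmatrix}, \qquad a\in[0,1].
\end{equation}
Setting $a = (1+\rho_{\mu\nu})/2$ with $\rho_{\mu\nu}\in[-1,1]$ recovers exactly the symmetry requirement $\pr(z_\nu\!=\!1|z_\mu\!=\!1) = \pr(z_\nu\!=\!0|z_\mu\!=\!0)$ in the definition of SB-\hltm{} from Sec.~\ref{sec:sb-hltm}. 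Since the binary \hltm{} is otherwise unconstrained (tree structure, root prior, etc.\ are shared in both definitions), the generative model coincides with SB-\hltm{}.

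For the reverse direction (SB-\hltm{} $\Rightarrow$ binary TR-\hltm{}), I would verify the decomposition directly. Writing $P_{\mu\nu}$ as above with $a=(1+\rho_{\mu\nu})/2$, a one-line computation gives
\begin{equation}
C_{\mu\nu} := P_{\mu\nu} - \tfrac{1}{2}\vone_\mu\vone_\nu^\t = \tfrac{\rho_{\mu\nu}}{2}\begin{pmatrix} 1 & -1 \\ -1 & 1\end{pmatrix},
\end{equation}
which evidently satisfies $C_{\mu\nu}\vone_\nu = \vzero_\mu$ and $\vone_\mu^\t C_{\mu\nu} = \vzero_\nu$. Hence SB-\hltm{} falls inside TR-\hltm{}, and the two classes are equal when all latents are binary.

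There is no real obstacle here: the whole lemma reduces to the fact that a doubly stochastic $2\times 2$ matrix is parameterized by one scalar, which is precisely the polarity $\rho_{\mu\nu}$. The only subtlety to flag is that the bijection $a \leftrightarrow \rho_{\mu\nu}$ is the natural one $\rho_{\mu\nu} = 2a-1$, so the allowed range $\rho_{\mu\nu}\in[-1,1]$ matches $a\in[0,1]$ without any loss.
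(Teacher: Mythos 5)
Your proof is correct and takes essentially the same approach as the paper: direct verification in both directions, with the forward direction reduced to the observation that the TR conditions force $P_{\mu\nu}$ to be doubly stochastic (a point the paper itself already notes right after Definition~\ref{definition:trans-hltm}), and the reverse direction verified by exhibiting $C_{\mu\nu} = \tfrac{\rho_{\mu\nu}}{2}\vq\vq^\t$ with $\vq=[-1,1]^\t$. The only cosmetic difference is that you invoke double stochasticity explicitly where the paper counts degrees of freedom ($4$ linear constraints, $1$ redundant, $1$ free parameter $=\rho_{\mu\nu}$), but these are the same computation phrased two ways.
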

\begin{proof}
\textbf{SB-\hltm{} is TR-\hltm{}}. For symmetric binary \hltm{} (SB-\hltm{}) defined in Sec.~\ref{sec:sb-hltm}, we could define $C_{\mu\nu}$ as the following (here $\vq := [-1,1]^\t$): 
\begin{equation}
    C_{\mu\nu} = C_{\mu\nu}(\rho_{\mu\nu}) =  \frac{1}{2}\left[
    \begin{array}{cc}
        \rho_{\mu\nu} & - \rho_{\mu\nu} \\ 
        - \rho_{\mu\nu} & \rho_{\mu\nu} 
    \end{array}
    \right] = \frac{1}{2}\rho_{\mu\nu} \vq\vq^\t 
\end{equation}
And it is easy to verify this satifies the definition of TR-\hltm{} (Def.~\ref{definition:trans-hltm}). 

\textbf{Binary TR-\hltm{} is SB-\hltm{}}. To see this, note that $\vone_2^\t C_{\mu\nu} = \vzero_2$ and $C_{\mu\nu} \vone_2 = \vzero_2$ provides 4 linear constraints (1 redundant), leaving 1 free parameter, which is the polarity $\rho_{\mu\nu}$. It is easy to verify that $\rho_{\mu\nu} \in [-1,1]$ in order to keep $0 \le P_{\mu\nu} \le 1$ a probabilistic measure. Moreover, since $\vq^\t\vq = 2$, the parameterization is close under multiplication:
\begin{equation}
    C(\rho_{\mu\nu})C(\rho_{\nu\alpha}) = \frac{1}{4}\vq\vq^\t\vq\vq^\t \rho_{\mu\nu}\rho_{\nu\alpha} = \frac{1}{2}\vq\vq^\t\rho_{\mu\nu}\rho_{\nu\alpha} = C(\rho_{\mu\nu}\rho_{\nu\alpha}) \label{eq:close-multiplication}
\end{equation}
\end{proof}

\def\vzero{\mathbf{0}}

\subsection{SB-\hltm{} (Sec.~\ref{sec:sb-hltm})}
Since we are dealing with binary case, we define the following for convenience:
\begin{eqnarray}
    v^+_k &:=& v_k(1) \\
    v^-_k &:=& v_k(0) \\
    \bar v_k &:=& \frac{1}{2}(v^+_k + v^-_k) =  \frac{1}{2}\left(\eee{z}{f_k | z_\nu = 1} + \eee{z}{f_k | z_\nu = 0}\right)\\
    \bar\vv_{\ch{\mu}} &:=& [\bar v_k]_{k\in \ch{\mu}} \in \rr^{|\ch{\mu}|} \\
    s_k &:=& \frac{1}{2}(v^+_k - v^-_k) = \frac{1}{2}\left(\eee{z}{f_k | z_\nu = 1} - \eee{z}{f_k | z_\nu = 0}\right) \\
    \vs_{\ch{\mu}} &:=& [s_k]_{k\in \ch{\mu}} \in \rr^{|\ch{\mu}|} \\
    \vamu &:=& [\rho_{\mu\nu(k)}s_k]_{k\in \ch{\mu}}
\end{eqnarray}

\begin{lemma}[Lower-bound of tail distribution of 2D Gaussian]
\label{lemma:2d-gaussian-lower-bound}
For standardized Gaussian variable $y_+$ and $y_-$ with (here $\gamma \ge 0$): 
\begin{equation}
    \vy := \left[
    \begin{array}{c}
        y_+ \\
        y_- 
    \end{array} 
    \right] \sim \mathcal{N}\left(\vzero, \Sigma_\vy\right),\quad \mathrm{where} \ \ \Sigma_\vy :=
     \left[
     \begin{array}{cc}
        1 & -\gamma \\
        -\gamma & 1 
     \end{array} 
     \right]
\end{equation}
Then we have lower-bound for the probability:
\begin{equation}
    \pr\left(\vy \ge \vy_0 := \left[
    \begin{array}{c}
        \sqrt{c}\\
        0
    \end{array}
    \right]\right) \ge \frac{1}{2\pi} \exp\left(-\frac{c}{2}\right) (1-\gamma^2)^{1/2} \hat R(c, \gamma)
\end{equation}
where $\hat R(c,\gamma)$ is defined in Eqn.~\ref{eq:residue}. 
\end{lemma}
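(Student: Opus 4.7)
The plan is to reduce the bivariate probability to a one-dimensional integral through conditioning, then apply the standard Mills' ratio lower bound for the univariate Gaussian tail, and finally factor out $e^{-c/2}\sqrt{1-\gamma^2}$ by completing the square. First I would write the joint density explicitly as
\[
p(y_+, y_-) = \frac{1}{2\pi\sqrt{1-\gamma^2}}\exp\!\left(-\frac{y_+^2 + 2\gamma y_+ y_- + y_-^2}{2(1-\gamma^2)}\right),
\]
and make the change of variable $u = y_+ - \sqrt{c}$, $v = y_-$. Expanding $(u+\sqrt{c})^2$ and completing the square in $u$, the exponent decomposes as $-(u+\gamma v + \sqrt{c})^2/(2(1-\gamma^2)) - v^2/2$, which cleanly separates the $u$-dependence.

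Next I would perform the inner Gaussian integral in $u$ from $0$ to $\infty$, which, after the substitution $w = (u+\gamma v + \sqrt{c})/\sqrt{1-\gamma^2}$, produces a factor $\sqrt{2\pi(1-\gamma^2)}\,[1-\Phi((\gamma v+\sqrt{c})/\sqrt{1-\gamma^2})]$. This reduces the original probability to
\[
\Pr(\vy \ge \vy_0) = \frac{1}{\sqrt{2\pi}}\int_0^\infty e^{-v^2/2}\left[1 - \Phi\!\left(\frac{\gamma v + \sqrt{c}}{\sqrt{1-\gamma^2}}\right)\right]dv.
\]
This identity is just the well-known marginal-plus-conditional decomposition $y_- \mid y_+ \sim \mathcal{N}(-\gamma y_+, 1-\gamma^2)$ written after translating $y_+$ by $\sqrt{c}$; deriving it this way fixes all normalizing constants explicitly.

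Then I would apply the standard Mills' ratio lower bound $1-\Phi(x) \ge \frac{x}{x^2+1}\,\phi(x)$ valid for $x \ge 0$ (which holds because $\gamma \ge 0$ and $v \ge 0$ ensure $x = (\gamma v + \sqrt{c})/\sqrt{1-\gamma^2} \ge 0$). Substituting $\phi$ gives an integrand containing $\exp(-(\gamma v + \sqrt{c})^2/(2(1-\gamma^2)))$. The combined exponent $-v^2/2 - (\gamma v + \sqrt{c})^2/(2(1-\gamma^2))$ can be rewritten, by completing the square in $v$, as $-\frac{(v+\gamma\sqrt{c})^2}{2(1-\gamma^2)} - \frac{c}{2}$. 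This is the key algebraic step that extracts the target $e^{-c/2}$ prefactor, while the leftover integral in $v$ (carrying the rational Mills factor and the shifted Gaussian $\exp(-(v+\gamma\sqrt{c})^2/(2(1-\gamma^2)))$) defines $\hat R(c,\gamma)$ up to constants.

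The main obstacle is matching the exact normalizing constants in the final bound: the $\frac{1}{2\pi}$, the $\sqrt{1-\gamma^2}$, and the particular form of $\hat R(c,\gamma)$ referenced in Eqn.~\ref{eq:residue} must all come out correctly from combining the $\frac{1}{\sqrt{2\pi}}$ prefactor of the marginal integral with the $\frac{1}{\sqrt{2\pi}}$ inside Mills' bound and the $\sqrt{1-\gamma^2}$ produced by the substitution in $\Phi$. Checking the $\gamma = 0$ edge case (which should recover $\tfrac{1}{2}\Pr(y_+ \ge \sqrt{c})$) provides a useful sanity check before certifying the general formula.
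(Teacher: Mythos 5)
Your proposed route is mathematically sound as far as it goes, but it does not prove the lemma as stated because it cannot reproduce the particular function $\hat R(c,\gamma)$ that the lemma references. In the paper, $\hat R(c,\gamma)$ is \emph{defined} in Eqn.~\ref{eq:residue} as
\[
\hat R(c, \gamma) := \max\left[ r(\kappa_1)\,r(\kappa_2 + \gamma e(\kappa_1)),\ \ r(\kappa_2)\,r(\kappa_1 + \gamma e(\kappa_2))\right],
\]
with $r(\cdot)$ the univariate Mills ratio, $e(y) := 1/r(y)-y$, $\kappa_1 = \sqrt{c}/\sqrt{1-\gamma^2}$, $\kappa_2 = \gamma\kappa_1$. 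The paper proves the lemma in one step by invoking Steck's (1979) lower bound for the \emph{bivariate} Mills ratio $R(\vy_0,M_\vy) := 2\pi|\Sigma_\vy|^{1/2}\,\pr(\vy\ge\vy_0)\exp(\vy_0^\t M_\vy\vy_0/2)$, which directly yields $(1-\gamma^2)\hat R(c,\gamma)$; plugging in $|\Sigma_\vy|^{1/2}=\sqrt{1-\gamma^2}$ and $\vy_0^\t M_\vy\vy_0 = c$ gives the stated form. There is no re-derivation from scratch.

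Your plan instead conditions on $y_-$ to write
\[
\pr(\vy\ge\vy_0)=\tfrac{1}{\sqrt{2\pi}}\int_0^\infty e^{-v^2/2}\Bigl[1-\Phi\bigl(\tfrac{\gamma v+\sqrt c}{\sqrt{1-\gamma^2}}\bigr)\Bigr]\,dv,
\]
applies $1-\Phi(x)\ge \tfrac{x}{x^2+1}\phi(x)$ inside the integral, and completes the square to pull out $e^{-c/2}\sqrt{1-\gamma^2}$. All of this algebra is correct, and the $\gamma$-sign and $\gamma\ge0$ conditions do ensure the Mills inequality is applied to a nonnegative argument. But the resulting factor is an integral of a rational-times-Gaussian kernel, not the max-of-products-of-Mills-ratios expression in Eqn.~\ref{eq:residue}. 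Your own $\gamma=0$ sanity check actually exposes the gap rather than confirming the formula: at $\gamma=0$ the lemma's bound is $\tfrac{1}{2\sqrt{2\pi}}e^{-c/2}\,r(\sqrt c)$, which is exactly $\tfrac12\pr(y_+\ge\sqrt c)$ (an equality), whereas your chain gives $\tfrac{1}{2\sqrt{2\pi}}e^{-c/2}\cdot\tfrac{\sqrt c}{c+1}$, which is strictly smaller since $r(\sqrt c)>\tfrac{\sqrt c}{c+1}$. So your argument yields \emph{a} valid lower bound, but a strictly weaker one, and not of the form the lemma claims. The sentence ``the leftover integral $\ldots$ defines $\hat R(c,\gamma)$ up to constants'' is where the proof breaks: $\hat R(c,\gamma)$ is already pinned down by Eqn.~\ref{eq:residue}, and your leftover integral is not that quantity. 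To fix this you would either need to cite the Steck bivariate Mills bound as the paper does, or rewrite the lemma to assert your weaker (but self-contained) bound and propagate the change through Theorem~\ref{thm:lottery-ticket-gen-model}.
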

\begin{proof}
\begin{figure}
    \centering
    \includegraphics[width=0.5\textwidth]{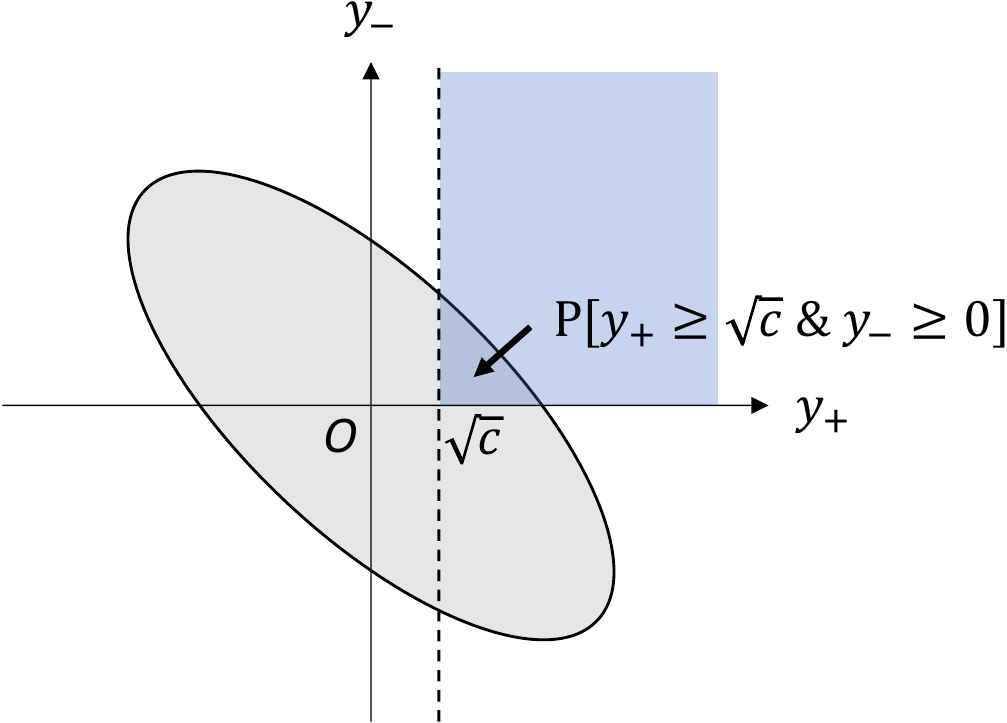}
    \caption{Joint distribution of $\vy = [y_+, y_-]^\t$ and the region we want to lower-bound.}
    \label{fig:lucky-node-2d-gaussian}
\end{figure}
First we compute the precision matrix $M_\vy := \Sigma_\vy^{-1}$:
\begin{equation}
    M_\vy = \frac{1}{1-\gamma^2}      
    \left[
     \begin{array}{cc}
        1 & \gamma \\
        \gamma & 1 
     \end{array} 
    \right]
\end{equation}
and we can apply the lower-bound of the Mills ratio (The equation below Eqn.~5 in~\cite{steck1979lower}) for standarized 2D Gaussian distribution (note that ``$\rho$'' in~\cite{steck1979lower} is $-\gamma$ here): 
\begin{eqnarray}
    R(\vy_0, M_\vy) &:=& 2\pi |\Sigma_\vy|^{1/2}\pr\left(\vy \ge \vy_0\right)\exp\left(\frac{\vy_0^\t M_\vy \vy_0}{2}\right) \\
    &\ge& (1-\gamma^2) \max\left[ r(\kappa_1)r(\kappa_2 + \gamma e(\kappa_1)), r(\kappa_2)r(\kappa_1 + \gamma e(\kappa_2) )\right]  
\end{eqnarray}
where $r(y) > 0$ is the 1D Mills ratio and other terms defined as follows:
\begin{eqnarray}
    r(y) &:=& \int_0^{+\infty} e^{-yt - \frac{t^2}{2}} \dd t \\
    e(y) &:=& \frac{1}{r(y)} - y \\
    \kappa_1 &:=& (1 - \gamma^2)^{-1/2} (\vy_0[1] + \gamma \vy_0[2]) = (1 - \gamma^2)^{-1/2} \sqrt{c}  \\
    \kappa_2 &:=& (1 - \gamma^2)^{-1/2} (\vy_0[2] + \gamma \vy_0[1]) = (1 - \gamma^2)^{-1/2} \gamma \sqrt{c} = \gamma \kappa_1
\end{eqnarray}
Note that for $r(y)$, we have:
\begin{equation}
    r(y) = \int_0^{+\infty} e^{-yt - \frac{t^2}{2}} \dd t = e^{\frac{y^2}{2}} \int_y^{+\infty} e^{-\frac{t^2}{2}} \dd t = \sqrt{2\pi} e^{\frac{y^2}{2}}(1 - r(y)) \label{eq:relation-to-gaussian-cdf}
\end{equation}
where $r(y)$ is the Cumulative Probability Distribution (CDF) for standard Gaussian distribution.  

For simplicity, we can define:
\begin{equation}
    \hat R(c, \gamma) := \max\left[ r(\kappa_1)r(\kappa_2 + \gamma e(\kappa_1)), r(\kappa_2)r(\kappa_1 + \gamma e(\kappa_2) )\right] \label{eq:residue}
\end{equation}

Plug them in, we get:
\begin{equation}
    \pr\left(\vy \ge \vy_0 := \left[
    \begin{array}{c}
        \sqrt{c}\\
        0
    \end{array}
    \right]\right) \ge \frac{1}{2\pi} \mathrm{det}^{1/2}(M_{\vy}) \exp\left(-\frac{1}{2}\vy_0^\t M_\vy \vy_0\right) (1-\gamma^2) \hat R(c, \gamma)
\end{equation}
Note that $\vy_0^\t M_\vy \vy_0 = c$ and $\mathrm{det}^{1/2}(M_{\vy}) = (1 - \gamma^2)^{-1/2}$. Therefore, we have:
\begin{equation}
    \pr\left(\vy \ge \vy_0 := \left[
    \begin{array}{c}
        \sqrt{c}\\
        0
    \end{array}
    \right]\right) \ge \frac{1}{2\pi} \exp\left(-\frac{c}{2}\right) (1-\gamma^2)^{1/2} \hat R(c, \gamma)
\end{equation}
and the conclusion follows.
\end{proof}

\textbf{Remarks.} Obviously, $r(y)$ is a decreasing function. Following Theorem 2~\cite{soranzo2014very}, we have for $y > 0$:
\begin{equation}
    \frac{2}{\sqrt{y^2+4}+y} < r(y) < \frac{4}{\sqrt{y^2+8}+3y} \label{eq:estimation-m}
\end{equation}
and we could estimate the bound of $\hat R(c, \gamma)$ accordingly. Using Eqn.~\ref{eq:relation-to-gaussian-cdf} to estimate the bound would be very difficult numerically since $1 - r(y)$ becomes very close to zero once $y$ becomes slightly large (e.g., when $y \ge 10$).

Note that from Theorem 1 in~\cite{soranzo2014very}, we have a coarser bound for $r(y)$ and $e(y)$ (the bound will blow-up when $y\rightarrow 0$):
\begin{equation}
    \frac{y}{1+y^2} < r(y) < \frac{1}{y}, \quad 0 < e(y) < \frac{1}{y}
\end{equation}
\begin{figure}
    \centering
    \includegraphics[width=0.5\textwidth]{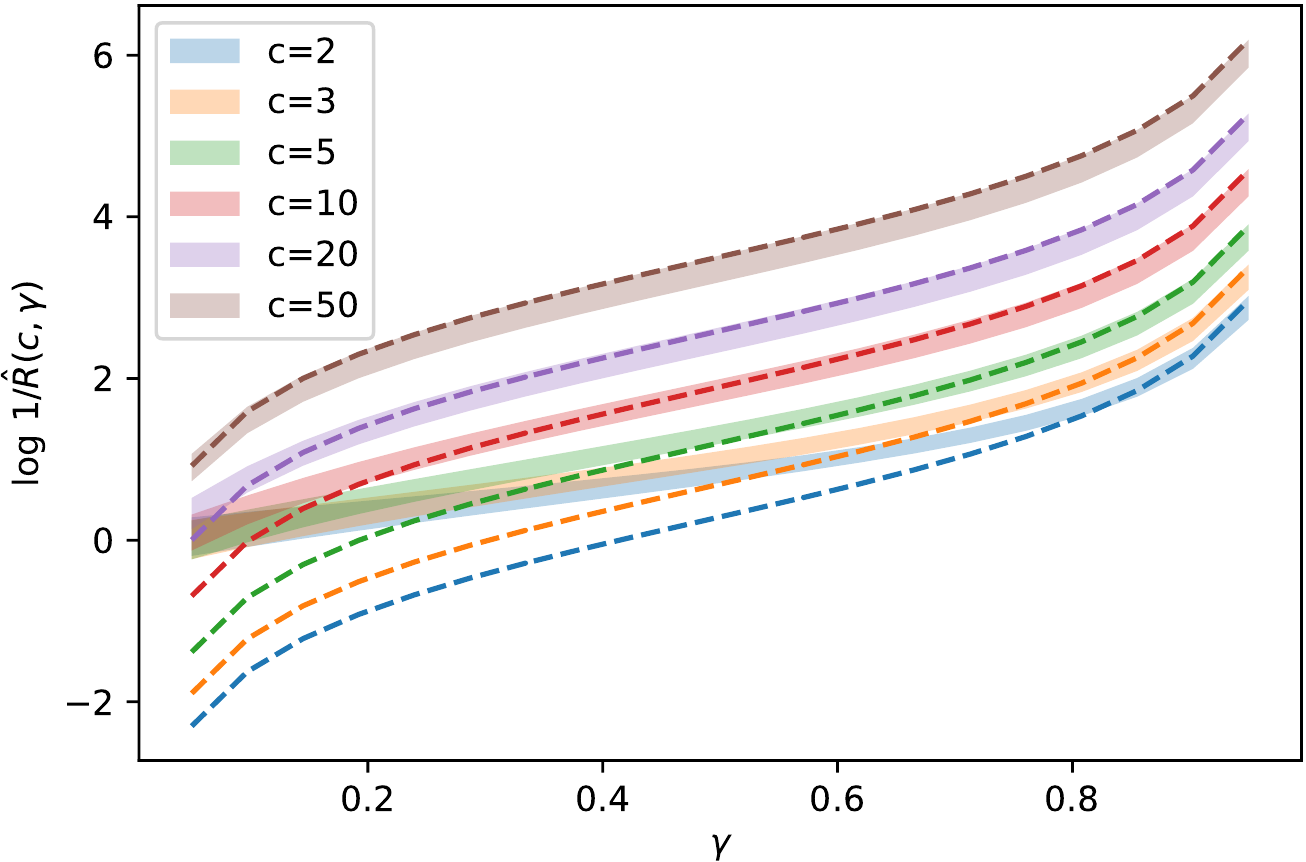}
    \caption{Visualization of $\log 1 / \hat R(c, \gamma)$ with respect to $c$ and $\gamma$. For each color, the shaded area shows the upper and lower limits of $\log 1 / \hat R(c, \gamma)$ computed following Eqn.~\ref{eq:estimation-m}, and the solid line is the asymptotic curve $\hat R(c, \gamma) \sim \frac{1 - \gamma^2}{\gamma c}$ when $c\rightarrow +\infty$ and $\gamma \rightarrow 1$.}
    \label{fig:rcgamma}
\end{figure}

So when $y$ is large, we have $r(y) \sim \frac{1}{y}$. Therefore, when $\gamma$ is close to $1$ and $c$ is large (Fig.~\ref{fig:rcgamma}):
\begin{equation}
\hat R(c, \gamma) \sim \kappa^{-1}_1\kappa^{-1}_2 \sim \frac{1 - \gamma^2}{\gamma c} \label{eq:r-approx}
\end{equation}
Therefore, $\hat R(c, \gamma)$ has the following properties:
\begin{equation}
    \lim_{\gamma\rightarrow 1} \hat R(c, \gamma) = 0, \quad\quad \lim_{c\rightarrow +\infty} \hat R(c, \gamma) = 0 
\end{equation}
Intuitively, the first one is due to the fact that when $\gamma = 1$, $\tilde v^+_j$ and $\tilde v^-_j$ becomes perfectly correlated (i.e., $y_+$ and $y_-$ are perfectly negatively correlated) with each other. In that case, $\pr(\vy\ge \vy_0)$ becomes zero. The second one is due to the fact that $\vy$ as a random variable cannot deviate too far away from its mean value.  

Note that when $\gamma \rightarrow 0$, Eqn.~\ref{eq:r-approx} is not accurate and there is no singularity in $\hat R(c, \gamma)$ near $\gamma=0$, as shown in Fig.~\ref{fig:rcgamma}.  

\subsection{Lucky node at initialization (Sec.~\ref{sec:lucky-node-main-text})}
\label{sec:lucky-node}
\begin{theorem}[Lucky nodes at initialization in deep ReLU networks regarding to SB-\hltm{}]
\label{thm:lottery-ticket-gen-model}
Suppose each element of the weights $W_l$ between layer $l+1$ and $l$ are initialized with $\mathrm{Uniform}\left[-\sigma_w\sqrt{\frac{3}{|\ch{\mu}|}}, \sigma_w\sqrt{\frac{3}{|\ch{\mu}|}}\right]$. Then for any $\mu \in \latents{l+1}$, if 
\begin{equation}
|\nodes{\mu}| \ge \frac{2\pi e^{c/2}}{\sqrt{1-\gamma^2}\hat R(c,\gamma)} \ln 1/\eta
\end{equation}
where 
\begin{equation}
    \gamma := \frac{ \|\bar\vv_{\ch{\mu}}\|^2_2 - \|\vamu\|^2_2 }{\|\bar\vv_{\ch{\mu}} - \vamu\|_2 \|\bar\vv_{\ch{\mu}} + \vamu\|_2 }
\end{equation}
then with probability at least $1 - \eta$, there exists at least one node $j \in\nodes{\mu}$ so that their \emph{pre-activation gap} $\tilde v_j(1) - \tilde v_j(0) = 2\vw_j^\t\vamu > 0$ and the activations satisfy (here $\bar v_j := \frac{1}{2}(v_j(0) + v_j(1))$):
\begin{equation}
    |s_j| \ge 
    \frac{3\sigma_w^2}{4\bar v_j}\left[\frac{1}{|\ch{\mu}|}\sum_{k\in\ch{\mu}} s_k^2\left(\frac{c+6}{6}\rho_{\mu\nu}^2 - 1\right) - \sigma_l^2\right]. 
\end{equation}
Here $\sigma_l^2 := \max_{k\in\ch{\mu}} \var[f_k|z_\nu]$ and $\vamu := [\rho_{\mu\nu(k)}s_k]_{k\in \ch{\mu}}$ is a vector which contains selectivity of all child nodes of $z_\mu$, multiplied by the corresponding polarity $\rho_{\mu\nu(k)}$ for each child $k$ whose latent variable is $\nu=\nu(k)$ (See Fig.~\ref{fig:hltm-appendix}).
\end{theorem}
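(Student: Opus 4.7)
The plan is to identify a specific lucky event at random initialization that forces a single node $j \in \nodes{\mu}$ to be correlated with the latent $z_\mu$, and then to translate that event into the two claimed conclusions via Jensen--type inequalities.

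First I would compute the law of $(\tilde v_j(1), \tilde v_j(0))$ under random initialization. Using the SB-\hltm{} transition $\pr(z_\nu\!\!=\!\!1|z_\mu)$, the children's conditional expected activations satisfy $v_{\mu k}(z_\mu) = \bar v_k + (2z_\mu-1)\rho_{\mu\nu(k)} s_k$, so $\tilde v_j(z_\mu) = \vw_j^\t \bar\vv_{\ch{\mu}} + (2z_\mu-1)\vw_j^\t \vamu$. Under the stated uniform initialization the entries of $\vw_j$ are i.i.d.\ symmetric with variance $\sigma_w^2/|\ch{\mu}|$, so by the CLT the pair $(\tilde v_j(1), \tilde v_j(0))$ is approximately bivariate Gaussian with zero mean, variances $\sigma_\pm^2 = \sigma_w^2 \|\bar\vv_{\ch{\mu}} \pm \vamu\|^2/|\ch{\mu}|$, and cross-correlation exactly $\gamma$ as defined in the theorem (since $\langle \bar\vv_{\ch{\mu}}+\vamu,\, \bar\vv_{\ch{\mu}}-\vamu\rangle = \|\bar\vv_{\ch{\mu}}\|^2 - \|\vamu\|^2$).

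Next I would apply Lemma~\ref{lemma:2d-gaussian-lower-bound} to the rescaled pair $(\tilde v_j(1)/\sigma_+,\, -\tilde v_j(0)/\sigma_-)$, which is standard bivariate normal with correlation $-\gamma$. This lower-bounds the probability of the lucky event
\begin{equation*}
E_j := \{\tilde v_j(1) \ge \sqrt{c}\,\sigma_+\} \cap \{\tilde v_j(0) \le 0\}
\end{equation*}
by $p_0 := \tfrac{1}{2\pi}\, e^{-c/2}\sqrt{1-\gamma^2}\,\hat R(c,\gamma)$. Because the $|\nodes{\mu}|$ weight vectors are drawn independently, the chance that \emph{no} node lands in its lucky event is at most $(1-p_0)^{|\nodes{\mu}|} \le e^{-p_0 |\nodes{\mu}|}$; requiring this to be $\le \eta$ yields exactly the overparameterization threshold stated in the theorem. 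On $E_j$ the pre-activation gap satisfies $2\vw_j^\t\vamu = \tilde v_j(1) - \tilde v_j(0) \ge \sqrt{c}\,\sigma_+ > 0$, giving the first claim. For the selectivity bound I would use the identity $4 s_j \bar v_j = v_j(1)^2 - v_j(0)^2$: since $\tilde v_j(1) > 0$, convex Jensen gives $v_j(1) \ge \tilde v_j(1)$ and hence $v_j(1)^2 \ge c\sigma_+^2$; since $\tilde v_j(0) \le 0$, the sharpened Jensen Lemma~\ref{lemma:sharpened-jensen} combined with the Cauchy--Schwarz $v_j(0)^2 \le \ee{\psi(\tilde f_j)^2|z_\mu{=}0}$ gives $v_j(0)^2 \le \var[\tilde f_j|z_\mu{=}0]$. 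Conditional independence of children in the tree decomposes $\var[\tilde f_j|z_\mu{=}0] = \sum_k w_{jk}^2 \var[f_k|z_\mu{=}0]$, the law of total variance together with binary symmetric polarity gives $\var[f_k|z_\mu{=}0] \le \sigma_l^2 + (1-\rho_{\mu\nu}^2)s_k^2$, and the almost-sure bound $w_{jk}^2 \le 3\sigma_w^2/|\ch{\mu}|$ from the uniform initialization controls the whole sum. Dividing by $4\bar v_j$ delivers the bracketed expression in the theorem.

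The hardest step is the last one: the naive expansion $\|\bar\vv_{\ch{\mu}}+\vamu\|^2 = \|\bar\vv_{\ch{\mu}}\|^2 + 2\bar\vv_{\ch{\mu}}^\t\vamu + \|\vamu\|^2$ contains a cross term and a ``nuisance'' $\|\bar\vv_{\ch{\mu}}\|^2$ that do not appear in the final bound, so pulling the $c\sigma_+^2$ contribution down to a clean multiple of $\|\vamu\|^2 = \sum_k \rho_{\mu\nu}^2 s_k^2$ requires either a finer conditional-moment estimate on $E_j$ (so that the cross term is absorbed into the $\sqrt{c}\sigma_+$ margin) or an AM--GM style inequality that balances the margin against the $(1-\rho_{\mu\nu}^2) s_k^2$ residual. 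Matching the precise coefficient $(c+6)/6$ is exactly where this bookkeeping has to be done most carefully, and it is the main technical obstacle in the proof.
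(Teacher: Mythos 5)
Your proposal tracks the paper's proof closely --- same CLT reduction to a bivariate Gaussian, same lucky event, same application of Lemma~\ref{lemma:2d-gaussian-lower-bound} and union bound for over-parameterization, and same pair of Jensen-type estimates for $v_j(1)$ and $v_j(0)$ --- and you correctly identify where the argument is left hanging. The unresolved step is the lower bound on $c\sigma_+^2 = \tfrac{c\sigma_w^2}{|\ch{\mu}|}\|\bar\vv_{\ch{\mu}}+\vamu\|^2$ by a clean multiple of $\|\vamu\|^2 = \sum_k \rho_{\mu\nu}^2 s_k^2$. The resolution is neither a finer conditional-moment estimate on $E_j$ nor an AM--GM balancing act; it is an elementary positivity observation you have not exploited. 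Each coordinate of $\bar\vv_{\ch{\mu}} + \vamu = [\eee{z}{f_k|z_\mu=1}]_{k}$ and of $\bar\vv_{\ch{\mu}} - \vamu = [\eee{z}{f_k|z_\mu=0}]_{k}$ is the expectation of a ReLU output and hence nonnegative, and $\bar\vv_{\ch{\mu}} \ge \vzero$ as well. Therefore on the index set $A_+ := \{k : a_k \ge 0\}$ (where $a_k := \rho_{\mu\nu(k)} s_k$), each entry satisfies $(\bar v_k + a_k)^2 \ge a_k^2$, giving $\|\bar\vv_{\ch{\mu}}+\vamu\|^2 \ge \sum_{k\in A_+} a_k^2$. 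One then assumes without loss of generality that $\sum_{k\in A_+} a_k^2 \ge \tfrac{1}{2}\|\vamu\|^2$; the complementary case is handled by the symmetric event (flip the roles of $\tilde v_j(1)$ and $\tilde v_j(0)$), which is exactly why the conclusion is stated as a bound on $|s_j|$ rather than on $s_j$. This yields $v_j(1)^2 \ge \tfrac{c\sigma_w^2}{2|\ch{\mu}|}\sum_k \rho_{\mu\nu}^2 s_k^2$ without ever expanding $\|\bar\vv_{\ch{\mu}}+\vamu\|^2$, so the cross term and the nuisance $\|\bar\vv_{\ch{\mu}}\|^2$ you were worried about simply never enter.

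Once that lower bound is in hand, the coefficient $(c+6)/6$ is pure arithmetic and not a further obstacle: subtracting the variance bound $v_j(0)^2 \le 3\sigma_w^2\bigl(\sigma_l^2 + \tfrac{1}{|\ch{\mu}|}\sum_k s_k^2(1-\rho_{\mu\nu}^2)\bigr)$ and collecting per-$k$ coefficients gives $\tfrac{c}{2}\rho_{\mu\nu}^2 - 3(1-\rho_{\mu\nu}^2) = 3\bigl(\tfrac{c+6}{6}\rho_{\mu\nu}^2 - 1\bigr)$, and dividing the resulting bound on $(v_j^+)^2 - (v_j^-)^2 = 4\bar v_j s_j$ by $4\bar v_j$ finishes. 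Two smaller remarks: your bound on the upper tail of $v_j(0)$ via the pointwise domination $\psi(\tilde f_j)^2 \le (\tilde f_j - \tilde v_j(0))^2$ when $\tilde v_j(0) \le 0$ is valid and slightly different in flavor from the paper, which instead applies its sharpened Jensen Lemma~\ref{lemma:sharpened-jensen} directly to get $v_j(0) \le \psi(\tilde v_j(0)) + \sqrt{\var[\tilde f_j|z_\mu{=}0]} = \sqrt{\var[\tilde f_j|z_\mu{=}0]}$ before squaring; both work. Also, the almost-sure bound $w_{jk}^2 \le 3\sigma_w^2/|\ch{\mu}|$ that you use to control $\sum_k w_{jk}^2 \var[f_k|z_\mu]$ is exactly the paper's step as well, using that the uniform initialization is bounded.
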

\begin{proof}
According to our setting, for each node $k\in \nodes{\mu}$, there exists a unique latent variable $z_\nu$ with $\nu = \nu(k)$ that corresponds to it. In the following we omit its dependency on $k$ for brevity. 

We first consider pre-activation $\tilde f_j := \sum_k w_{jk}f_k$ and its expectation with respect to latent variable $z$: 
\begin{equation}
    \tilde v^+_j := \eee{z}{\tilde f_j \Big|z_\mu = 1},\quad  
    \tilde v^-_j := \eee{z}{\tilde f_j \Big|z_\mu = 0}
\end{equation}
Note that for each node $k\in\ch{\mu}$ we compute the conditional expectation of the child node $k$, conditioned on the parent level latent variable $z_\mu$: 
\begin{eqnarray}
  v^+_{\mu k} &:=& \eee{z}{f_k|z_\mu=1} \\
  &=& \sum_{z_\nu} \eee{z}{f_k|z_\nu, z_\mu=1}\pr(z_\nu|z_\mu=1) \\
  &\stackrel{\circle{1}}{=}& \sum_{z_\nu} \eee{z}{f_k|z_\nu}\pr(z_\nu|z_\mu=1) \\
  &=& \frac12(1 + \rho_{\mu\nu})v_k(1) + \frac12 (1 - \rho_{\mu\nu})v_k(0) \\
  &=& \bar v_k + \rho_{\mu\nu}s_k 
\end{eqnarray}
Note that \circle{1} is due to the fact that the activation $f_k$ only depends on $z_\nu$ (and its descendants), and is independent of $z_\mu$ as long as $z_\nu$ is known. Similarly we have $v^-_{\mu k} = \eee{z}{f_k|z_\mu=0} = \bar v_k - \rho_{\mu\nu}s_k$. For convenience, we vectorize it. Let $\vamu := [a_k]_{k\in\ch{\mu}} := [\rho_{\mu\nu}s_k]_{k\in\ch{\mu}}$ and 
\begin{eqnarray}
    \vu_{\ch{\mu}}^+ &:=& V^+_{\mu,\ch{\mu}} := \Big[ \ee{f_k|z_\mu=1} \Big] = \bar\vv_{\ch{\mu}} + \vamu \\ 
    \vu_{\ch{\mu}}^- &:=& V^-_{\mu,\ch{\mu}} := \Big[ \ee{f_k|z_\mu=0} \Big] = \bar\vv_{\ch{\mu}} - \vamu
\end{eqnarray}
Then we have $\tilde v_j^+ = \vw_j^\t \vu_{\ch{\mu}}^+$ and $\tilde v_j^- = \vw_j^\t \vu_{\ch{\mu}}^-$ and the pre-activation gap $\tilde v_j^+ - \tilde v_j^- = 2\vw_j^\t\vamu$.

Note that $\vw_j$ is a random variable with each entry $w_{jk} \sim \mathrm{Uniform}\left[-\sigma_w\sqrt{\frac{3}{|\ch{\mu}|}}, \sigma_w\sqrt{\frac{3}{|\ch{\mu}|}}\right]$. It is easy to verify that $\ee{w_{jk}} = 0$ and $\var[w_{jk}] = \sigma_w^2 / |\ch{\mu}|$. Therefore, for two dimensional vector 
\begin{equation}
   \tilde\vv_j = \left[
    \begin{array}{cc}
        \tilde v_j^+\\
        \tilde v_j^-
    \end{array} 
    \right]
\end{equation}
we can compute its first and second order moments: $\eee{w}{\tilde\vv_j} = \vzero$ and its variance (here we set $\vh := \bar\vv_{\ch{\mu}}$ and $\va := \va_\mu$ for notation brevity, in this case we have $\vu_{\ch{\mu}}^+ = \vh + \va$ and $\vu_{\ch{\mu}}^- = \vh - \va$):
\begin{equation}
    \var_{w}[\tilde\vv_j] = \frac{\sigma_w^2}{|\ch{\mu}|}V_{\mu,\ch{\mu}}V^\t_{\mu,\ch{\mu}} = \frac{\sigma_w^2}{|\ch{\mu}|} \left[
    \begin{array}{c}
         \vu_{\ch{\mu}}^+  \\
         \vu_{\ch{\mu}}^-
    \end{array}\right][\vu_{\ch{\mu}}^+,\vu_{\ch{\mu}}^-] = \frac{\sigma_w^2}{|\ch{\mu}|} \left[
    \begin{array}{c}
        \vh + \va \\
        \vh - \va 
    \end{array}\right][\vh + \va,\vh - \va] 
\end{equation}

Define the positive and negative set (note that $a_k := \rho_{\mu\nu}s_k$):
\begin{equation}
    A_+ = \{k: a_k \ge 0 \}, \quad  A_- = \{k: a_k < 0 \} 
\end{equation}

Without loss of generality, assume that $\sum_{k\in A_+} a^2_k \ge \sum_{k\in A_-} a^2_k$. In the following, we show there exists $j$ with $(v_j^+)^2 - (v_j^-)^2$ is greater than some positive threshold. Otherwise the proof is symmetric and we can show $(v_j^+)^2 - (v_j^-)^2$ is lower than some negative threshold. 

When $|\ch{\mu}|$ is large, by Central Limit Theorem, $\tilde\vv_j$ can be regarded as zero-mean 2D Gaussian distribution with the following covariance matrix:
\begin{equation}
    \tilde\vv_j \sim \mathcal{N}\left(\vzero, 
    \frac{\sigma_w^2}{|\ch{\mu}|}\left[
        \begin{array}{cc}
            \|\vh + \va\|^2_2 & \|\vh\|^2_2 - \|\va\|^2_2 \\
            \|\vh\|^2_2 - \|\va\|^2_2 & \|\vh - \va\|^2_2 
        \end{array}
    \right]
    \right)
\end{equation}

Now we want to lower-bound the following probability. For some $c > 0$:
\begin{equation}
    \pr\left(\tilde v^+_j \ge \frac{\sqrt{c}\sigma_w}{\sqrt{|\ch{\mu}|}}\|\vu^+_{\ch{\mu}}\|\quad\mathrm{and}\quad\tilde v^-_j < 0\right) \label{eq:condition}
\end{equation}

For this, we consider the following transformation to standardize each component of $\tilde \vv_j$: 
\begin{equation}
    y_+ = \tilde v_j^+ \frac{\sqrt{|\ch{\mu}|}}{\sqrt{c}\sigma_w}
    \frac{1}{\|\vh + \va\|_2}, \quad\quad
    y_- = -\tilde v_j^- \frac{\sqrt{|\ch{\mu}|}}{\sqrt{c}\sigma_w}
    \frac{1}{\|\vh - \va\|_2} 
\end{equation}
And the probability we want to lower-bound becomes:
\begin{equation}
    \pr\left(\vy \ge \vy_0 := \left[
    \begin{array}{c}
        \sqrt{c}\\
        0
    \end{array}
    \right]\right)    
\end{equation}

Then $\vy = [y_+, y_-]^\t$ satisfies the condition of Lemma~\ref{lemma:2d-gaussian-lower-bound} with $\gamma$:
\begin{equation}
    \gamma := \frac{\|\vh\|^2_2 - \|\va\|^2_2}{\|\vh+\va\|_2\|\vh-\va\|_2} 
\end{equation}
Note that $\gamma \ge 0$. To see this, we know $\vu_{\ch{\mu}}^+ = \vh + \va \ge \vzero$ and $\vu_{\ch{\mu}}^- = \vh - \va \ge \vzero$ by the definition of $\vu_{\ch{\mu}}^+$ and $\vu_{\ch{\mu}}^-$ since they are both expectation of ReLU outputs (here $\ge$ is in the sense of element-wise). Therefore, we have:
\begin{equation}
    \|\vh\|^2_2 - \|\va\|^2_2 = (\vh + \va)^\t (\vh - \va) \ge 0
\end{equation}
and thus $\gamma \ge 0$.

Therefore, with (note that the residue term $\hat R(c, \gamma)$ is defined in Lemma~\ref{lemma:2d-gaussian-lower-bound}):
\begin{equation}
|\nodes{\mu}| \ge \frac{2\pi e^{c/2}}{\sqrt{1-\gamma^2}\hat R(c,\gamma)} \ln 1/\eta
\end{equation}
we have:
\begin{equation}
    |\nodes{\mu}| \ge \frac{\ln 1/\eta}{\pr\left(\vy \ge \vy_0\right)} \ge \frac{\ln 1/\eta}{-\ln\left[1 - \pr\left(\vy \ge \vy_0\right)\right]}
\end{equation}
Or 
\begin{equation}
    1 - (1 - \pr\left(\vy \ge \vy_0\right))^{|\nodes{\mu}|} \ge 1 - \eta
\end{equation}
which means that with at least $1 - \eta$ probability, there exists at least one node $j\in \nodes{\mu}$ and $\vw_j$, so that Eqn.~\ref{eq:condition} holds. 

In this case, we have:
\begin{equation}
    \tilde v^+_j = \vw_j^\t\vu^+_{\ch{\mu}} \ge \frac{\sqrt{c}\sigma_w}{\sqrt{|\ch{\mu}|}}\|\vu^+_{\ch{\mu}}\|, \quad\quad\tilde v^-_j = \vw_j^\t\vu^-_{\ch{\mu}} < 0
\end{equation}
Since $\bar \vv_{\ch{\mu}} \ge 0$ (all $f_k$ are after ReLU and non-negative), this leads to:
\begin{equation}
    \tilde v^+_j \ge \frac{\sqrt{c}\sigma_w}{\sqrt{|\ch{\mu}|}}\|\vu^+_{\ch{\mu}}\|\ge \frac{\sqrt{c}\sigma_w}{\sqrt{|\ch{\mu}|}}\sqrt{\sum_{k\in A_+} a_k^2}\ge \sigma_w\sqrt{\frac{c}{2|\ch{\mu}|}\sum_{k\in\ch{\mu}} \rho^2_{\mu\nu}s_k^2} 
\end{equation}
By Jensen's inequality, we have (note that $\psi(x) := \max(x,0)$ is the ReLU activation): 
\begin{eqnarray}
v^+_j &=& \eee{z}{f_j|z_\mu=1} = \eee{z}{\psi(\tilde f_j)|z_\mu=1} \\
&\ge& \psi\left(\eee{z}{\tilde f_j\Big|z_\mu=1}\right) = \psi(\tilde v^+_j) \ge \sigma_w\sqrt{\frac{c}{2|\ch{\mu}|}\sum_{k\in\ch{\mu}} \rho^2_{\mu\nu}s_k^2} \label{eq:pos-bound}
\end{eqnarray} 

On the other hand, we also want to compute $v_j^- := \eee{z}{f_j|z_\mu=0}$ using sharpened Jensen's inequality (Lemma~\ref{lemma:sharpened-jensen}). For this we need to compute the conditional covariance $\var_z[\tilde f_j|z_\mu]$:
\begin{eqnarray}
    \var_z[\tilde f_j|z_\mu] &\stackrel{\circle{2}}{=}& \sum_k w^2_{jk} \var_z[f_k|z_\mu] \stackrel{\circle{3}}{\le} \frac{3\sigma_w^2}{|\ch{\mu}|}\sum_k  \var_z[f_k|z_\mu] \\
   &=& \frac{3\sigma_w^2}{|\ch{\mu}|}\sum_k \left(\eee{z_\nu|z_\mu}{\var[f_k|z_\nu]} +  \var_{z_\nu|z_\mu}[\eee{z}{f_k|z_\nu}]\right) \\
   &\le& 3\sigma_w^2 \left(\sigma^2_l + \frac{1}{|\ch{\mu}|}\sum_k  \var_{z_\nu|z_\mu}[\eee{z}{f_k|z_\nu}]\right)
\end{eqnarray}
Note that \circle{2} is due to conditional independence: $f_k$ as the computed activation, only depends on latent variable $z_\nu$ and its descendants. Given $z_\mu$, all $z_\nu$ and their respective descendants are independent of each other and so does $f_k$. \circle{3} is due to the fact that each $w_{jk}$ are sampled from uniform distribution and $|w_{jk}|\le \sigma_w\sqrt{\frac{3}{|\ch{\mu}|}}$. 

Here $\var_{z_\nu|z_\mu}[\eee{z}{f_k|z_\nu}] = s_k^2(1 - \rho^2_{\mu\nu})$ can be computed analytically. It is the variance of a binomial distribution: with probability $\frac{1}{2}(1+\rho_{\mu\nu})$ we get $v^+_k$ otherwise get $v^-_k$. Therefore, we finally have: 
\begin{equation}
    \var_z[\tilde f_j|z_\mu] \le 3\sigma_w^2\left(\sigma_l^2 + \frac{1}{|\ch{\mu}|}\sum_k s_k^2(1-\rho^2_{\mu\nu})\right)
\end{equation}

As a side note, using Lemma~\ref{lemma:var-squashing}, since ReLU function $\psi$ has Lipschitz constant $\le 1$ (empirically it is smaller), we know that:
\begin{equation}
    \var_z[f_j|z_\mu] \le 3\sigma_w^2\left(\sigma_l^2 + \frac{1}{|\ch{\mu}|}\sum_k s_k^2(1-\rho^2_{\mu\nu})\right)
\end{equation}
Finally using Lemma~\ref{lemma:sharpened-jensen} and $\tilde v^-_j := \eee{z}{\tilde f_j|z_\mu=0} < 0$, we have:
\begin{eqnarray}
v^-_j &=& \eee{z}{f_j|z_\mu=0} = \eee{z}{\psi(\tilde f_j)|z_\mu=0} \\
&\le& \psi\left(\eee{z}{\tilde f_j\Big|z_\mu=0}\right) + \sqrt{\var_z[\tilde f_j|z_\mu=0]} \\
&=&\sqrt{\var_z[\tilde f_j|z_\mu=0]} \\
&\le& \sigma_w \sqrt{3\sigma_l^2 + \frac{3}{|\ch{\mu}|}\sum_k s_k^2(1-\rho_{\mu\nu}^2)} \label{eq:neg-bound} 
\end{eqnarray} 

Combining Eqn.~\ref{eq:pos-bound} and Eqn.~\ref{eq:neg-bound}, we have a bound for $\lambda_j$:
\begin{equation}
    (v^+_j)^2 - (v^-_j)^2 \ge 3\sigma_w^2\left[\frac{1}{|\ch{\mu}|}\sum_k s_k^2 \left(\frac{c+6}{6}\rho_{\mu\nu}^2 - 1\right) - \sigma_l^2\right] 
\end{equation}
Note that $|(v^+_j)^2 - (v^-_j)^2| = |(v^+_j + v^-_j)(v^+_j - v^-_j)| = 4|\bar v_j s_j|$ and the conclusion follows.
\end{proof}
\textbf{Remark}. Intuitively, this means that with large polarity $\rho_{\mu\nu}$ (strong top-down signals), randomly initialized over-parameterized ReLU networks yield selective neurons, if the lower layer also contains selective ones. 
For example, when $\sigma_l = 0$, $c = 9$, if $\rho_{\mu\nu} \ge 63.3\%$ then there is a gap between expected activations $v_j(1)$ and $v_j(0)$, and the gap is larger when the selectivity in the lower layer $l$ is higher. With the same setting, if $\gamma = 0.2$, we could actually compute the range of $|\nodes{\mu}|$ specified by Theorem~\ref{thm:lucky-node}, using Eqn.~\ref{eq:estimation-m}:
\begin{equation}
    1099 \le |\nodes{\mu}| \le 1467
\end{equation}
while in practice, we don't need such a strong over-parameterization.

For $l = 0$ (the lowest layer), $\{v_k\}$ are themselves observable leaf latent variables and are selective by definition ($|s_k| \equiv 1$) and the conditional variance $\sigma^2_l \equiv 0$. In the higher level, $|s_j|$ often goes down, since only a few child nodes will be selective (with large $s_k^2$), and their contribution are divided by $|\ch{\mu}|$. These insights are confirmed in Tbl.~\ref{tbl:hltm} and more experiments in Sec.~\ref{exp:hltm-experiment}.

\subsection{Training dynamics (Sec.~\ref{sec:hltm-training})}
\begin{lemma}[Activation covariance in TR-\hltm{} (Def.~\ref{definition:trans-hltm})] 
\label{lemma:act-cov-tr-hltm}
The variance of augmentation-mean activation $\eee{z}{\vf_{\ch{\mu}}|z_0}$ of the children of intermediate nodes $\nodes{\mu}$ is: 
\begin{equation}
    \var_{z_0}[\eee{z}{\vf_{\ch{\mu}}|z_0}] = V_{0,\ch{\mu}}^\t (P_0 - P_0^\t\vone\vone^\t P_0) V_{0,\ch{\mu}}
\end{equation}
where $P_0 := \mathrm{diag}[\pr(z_0)]$, $V_{0,\ch{\mu}} := [\vv_{0k}]_{k\in \ch{\mu}}$ and each of its column $\vv_{0k} :=  [\eee{z}{f_k|z_0}] \in \rr^{m_0}$. Note that each element of $\vv_{0k}$ runs through different instantiation of categorical variable $z_0$.
\end{lemma}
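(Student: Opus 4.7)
The plan is to reduce the statement to a standard computation of the covariance of a categorical one-hot indicator vector, followed by one instance of the identity $\var[Ax] = A\var[x]A^\t$.

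First I would rewrite the augmentation-mean activation as a linear function of the one-hot encoding of $z_0$. Let $\ve_{z_0} \in \rr^{m_0}$ be the indicator vector with a $1$ in the coordinate corresponding to the sampled value of the root latent $z_0$. Then the $k$-th coordinate of the random vector $\eee{z}{\vf_{\ch{\mu}}|z_0}$ equals $\vv_{0k}^\t \ve_{z_0}$, since the column $\vv_{0k} := [\eee{z}{f_k|z_0=i}]_i$ lists the conditional expectations across all values of $z_0$. Stacking over $k\in\ch{\mu}$ yields the compact identity
\begin{equation}
\eee{z}{\vf_{\ch{\mu}}|z_0} = V_{0,\ch{\mu}}^\t \ve_{z_0}.
\end{equation}

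Second, I would pull the deterministic matrix $V_{0,\ch{\mu}}^\t$ out of the variance:
\begin{equation}
\var_{z_0}[\eee{z}{\vf_{\ch{\mu}}|z_0}] = V_{0,\ch{\mu}}^\t \, \var_{z_0}[\ve_{z_0}] \, V_{0,\ch{\mu}},
\end{equation}
reducing the problem to computing the covariance of the one-hot random vector $\ve_{z_0}$ under the prior $\pr(z_0)$. For this, I would use that $\ve_{z_0}\ve_{z_0}^\t$ is the diagonal matrix with a single $1$ in position $(z_0,z_0)$, so $\eee{z_0}{\ve_{z_0}\ve_{z_0}^\t} = \diag[\pr(z_0)] = P_0$, while the mean is $\eee{z_0}{\ve_{z_0}} = P_0\vone$. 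Subtracting gives $\var_{z_0}[\ve_{z_0}] = P_0 - P_0\vone\vone^\t P_0$, and because $P_0$ is diagonal we may write the second term as $P_0^\t\vone\vone^\t P_0$ without changing its value.

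Substituting back yields exactly the claimed expression. There is no real obstacle: the only thing to be careful about is the bookkeeping between ``row-indexed by $z_0$'' and ``column-indexed by $k$'' in $V_{0,\ch{\mu}}$, which is why the transpose appears in the outer factors. The statement is essentially the identity $\var[A^\t\ve_{z_0}] = A^\t\var[\ve_{z_0}]A$ applied with $A = V_{0,\ch{\mu}}$ together with the standard mean/covariance of a categorical indicator.
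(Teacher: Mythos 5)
Your proof is correct and takes essentially the same route as the paper: both reduce to computing the covariance of a categorical variable under the prior $\pr(z_0)$ and then conjugating by $V_{0,\ch{\mu}}$. The only presentational difference is that you work directly with the one-hot vector $\ve_{z_0}$ and obtain the full covariance matrix $P_0 - P_0\vone\vone^\t P_0$ in one shot, whereas the paper derives the scalar quadratic form $\var_{z_0}[u] = \vu^\t(P_0 - P_0\vone\vone^\t P_0)\vu$ and then tacitly stacks over the columns $\vv_{0k}$ to assemble the matrix; your formulation makes that last step explicit rather than implicit, which is a small but genuine clarity gain.
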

\begin{proof}
For TR-\hltm{} (Def.~\ref{definition:trans-hltm}) which is an extension of SB-\hltm{} for categorical latent variables (Lemma~\ref{lemma:tr-sb-relation}), for each node $k\in \ch{\mu}$ we have: 
\begin{eqnarray}
    \vv_{0k} &:=& [\eee{z}{f_k|z_0}] = \sum_{z_\nu} \eee{z}{f_k|z_{\nu}}\pr(z_\nu|z_0) = P_{0\nu}\vv_k \\
    &=& \left(\frac{1}{\cardinal\nu}\vone_0\vone_\nu^\t + C_{0\nu} \right)\vv_k \\
    &=& \frac{1}{\cardinal\nu}\vone_0\vone_\nu^\t\vv_k + C_{0\nu}\vv_k \label{eq:vok}
\end{eqnarray}
For a categorical distribution \[
(\pr(z_0=0), u(0)),\quad (\pr(z_0=1), u(1)),\quad\ldots, \quad (\pr(z_0=\cardinal0-1), u(\cardinal0-1))
\]
With $P_0 := \mathrm{diag}[\pr(z_0)]$, the mean is $\eee{z_0}{u} = \vone^\t P_0\vu$ and its covariance can be written as (here $\vone = \vone_0$): 
\begin{equation}
    \var_{z_0}[u] = (\vu - \vone\vone^\t P_0\vu)^\t P_0 (\vu - \vone\vone^\t P_0\vu) = \vu^\t (P_0-P_0\vone\vone^\t P_0)\vu 
\end{equation}
Note that each column of $V_{0,\ch{\mu}}$ is $\vv_{0k}$. Setting $\vu = \vv_{0k}$ and we have:
\begin{equation}
    \var_{z_0}[\eee{z}{\vf_{\ch{\mu}}|z_0}] = V_{0,\ch{\mu}}^\t (P_0 - P_0^\t\vone\vone^\t P_0) V_{0,\ch{\mu}}
\end{equation}
\end{proof}

\begin{theorem}[Activation covariance in SB-\hltm]
$\var_{z_0}[\eee{z'}{\vf_{\ch{\mu}}|z_0}] = o_\mu\vamu\vamu^\t$. Here $o_\mu := \rho^2_{0\mu} (1-\rho_0^2)$. Also $\lim_{L\rightarrow +\infty} \rho_{0\mu} = 0$. Therefore, when $L\rightarrow +\infty$, the covariance $\var_{z_0}[\eee{z'}{\vf_{\ch{\mu}}|z_0}]$ becomes zero as well.
\end{theorem}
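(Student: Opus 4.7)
The plan is to specialize the TR-\hltm\ result in Lemma~\ref{lemma:act-cov-tr-hltm} to the binary case, since SB-\hltm\ is a binary TR-\hltm\ by Lemma~\ref{lemma:tr-sb-relation}. That lemma already gives the closed form $\var_{z_0}[\eee{z}{\vf_{\ch{\mu}}|z_0}] = V_{0,\ch{\mu}}^\t (P_0 - P_0\vone\vone^\t P_0) V_{0,\ch{\mu}}$, so the rest is two explicit computations: the $2\times 2$ matrix $M_0 := P_0 - P_0\vone\vone^\t P_0$ at the root, and the $2 \times |\ch{\mu}|$ matrix of conditional means $V_{0,\ch{\mu}}$. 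I expect both to collapse onto the rank-one direction $\vq := [1,-1]^\t$, which is the only nonzero eigen-direction of $M_0$; once that is established the theorem drops out by a single outer product.

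First I will compute $M_0$. With $\pr(z_0{=}0){=}(1-\rho_0)/2$ and $\pr(z_0{=}1){=}(1+\rho_0)/2$, so $P_0 = \mathrm{diag}((1-\rho_0)/2,(1+\rho_0)/2)$, a direct $2{\times}2$ expansion gives the diagonal entries $(1-\rho_0^2)/4$ and the off-diagonal entries $-(1-\rho_0^2)/4$, hence $M_0 = \tfrac{1-\rho_0^2}{4}\,\vq\vq^\t$ with $\vq=[1,-1]^\t$ satisfying $\vq^\t\vone=0$ and $\vq^\t\vq=2$. Next I will compute $\vv_{0k} := [\eee{z}{f_k|z_0}]$ for $k\in\ch{\mu}$ with parent latent $\nu=\nu(k)$. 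Using Lemma~\ref{lemma:probability-transition} and the multiplicative identity~\eqref{eq:close-multiplication}, the transition matrix $P_{0\nu}$ between two binary latents along the path through $\mu$ decomposes as $\tfrac12\vone\vone^\t + \tfrac12\rho_{0\nu}\vq\vq^\t$ with $\rho_{0\nu}=\rho_{0\mu}\rho_{\mu\nu}$ (the product of polarities along the path). Since $\vone^\t\vv_k = 2\bar v_k$ and $\vq^\t\vv_k = v_k(0)-v_k(1) = -2s_k$, this yields
\begin{equation}
\vv_{0k} = P_{0\nu}\vv_k = \bar v_k\vone - \rho_{0\mu}\rho_{\mu\nu(k)} s_k\,\vq,
\end{equation}
so in matrix form $V_{0,\ch{\mu}} = \vone\bar\vv_{\ch{\mu}}^\t - \rho_{0\mu}\,\vq\,\vamu^\t$ with $\vamu=[\rho_{\mu\nu(k)}s_k]_{k\in\ch{\mu}}$.

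Plugging these into Lemma~\ref{lemma:act-cov-tr-hltm}, the $\vone$-component of $V_{0,\ch{\mu}}$ is annihilated because $\vq^\t\vone=0$, leaving $\vq^\t V_{0,\ch{\mu}} = -2\rho_{0\mu}\vamu^\t$. Therefore
\begin{equation}
\var_{z_0}[\eee{z}{\vf_{\ch{\mu}}|z_0}] = V_{0,\ch{\mu}}^\t\,\tfrac{1-\rho_0^2}{4}\vq\vq^\t\,V_{0,\ch{\mu}} = \tfrac{1-\rho_0^2}{4}\cdot 4\rho_{0\mu}^2\,\vamu\vamu^\t = o_\mu\,\vamu\vamu^\t,
\end{equation}
with $o_\mu = \rho_{0\mu}^2(1-\rho_0^2)$ as claimed. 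For the limit statement, iterating~\eqref{eq:close-multiplication} shows that $\rho_{0\mu}$ equals the product of the $\rho_{\alpha\beta}$ along the ancestor chain from the root $z_0$ down to $z_\mu$; if $\max_{\alpha\beta}|\rho_{\alpha\beta}|<1$ and the depth $L\to\infty$, this product decays geometrically to $0$, so $\covop_\mu=o_\mu\vamu\vamu^\t\otimes J_\mu^\t J_\mu$ vanishes at deep layers. The only technical point is bookkeeping the orientation of $\vq$ and the scaling $\vq^\t\vq=2$, but there is no genuine obstacle since everything reduces to the rank-one spectral structure of a binary symmetric Markov chain.
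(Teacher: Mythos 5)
Your proof is correct and follows the same route as the paper: specialize Lemma~\ref{lemma:act-cov-tr-hltm} to the binary case via Lemma~\ref{lemma:tr-sb-relation}, reduce $P_0 - P_0\vone\vone^\t P_0$ to the rank-one form $\tfrac{1-\rho_0^2}{4}\vq\vq^\t$, and use $\vq^\t\vone = 0$ together with $\rho_{0\nu} = \rho_{0\mu}\rho_{\mu\nu}$ from Eqn.~\ref{eq:close-multiplication} to collapse $V_{0,\ch{\mu}}^\t\vq\vq^\t V_{0,\ch{\mu}}$ onto $4\rho_{0\mu}^2\vamu\vamu^\t$. The only cosmetic difference is that you carry out the computation globally via the rank-one decomposition $V_{0,\ch{\mu}} = \vone\bar\vv_{\ch{\mu}}^\t - \rho_{0\mu}\vq\vamu^\t$ rather than entrywise, and you state explicitly the (implicit in the paper) hypothesis $\max_{\alpha\beta}|\rho_{\alpha\beta}|<1$ needed for $\rho_{0\mu}\to 0$.
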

\begin{proof}
We simply apply Lemma~\ref{lemma:act-cov-tr-hltm} to SB-\hltm{}. Note here we define $\rho_0 := \pr(z_0=1)-\pr(z_0=0)$, and $\vq := [-1,1]^\t$ so we have:
\begin{equation}
    P_0 - P_0 \vone\vone^\t P_0 = \frac{1}{4}(1 - \rho^2_0) \vq\vq^\t
\end{equation}

Therefore, we have:
\begin{equation}
    \var_{z_0}[\eee{z}{\vf_{\ch{\mu}}|z_0}] = \frac{1}{4}(1 - \rho^2_0) V_{0,\ch{\mu}}^\t \vq\vq^\t V_{0,\ch{\mu}} \quad\in \rr^{|\ch{\mu}| \times |\ch{\mu}|}   
\end{equation}
This is a square matrix of size $|\ch{\mu}|$-by-$|\ch{\mu}|$. Now let's check each entry. For entry $(k, k')$ (where $k,k' \in \ch{\mu}$), it is
\begin{equation}
    \frac{1}{4}(1 - \rho^2_0)(\vv_{0k}^\t \vq) (\vv^\t_{0k'}\vq)^\t
\end{equation}
So it reduces to computing $\vv_{0k}^\t \vq$. Note that following Eqn.~\ref{eq:vok}:
\begin{equation}
    \vq^\t\vv_{0k} = \frac{1}{\cardinal\nu}\vq^\t\vone_0\vone_\nu^\t\vv_k + \vq^\t C_{0\nu}\vv_k = \vq^\t C_{0\nu}\vv_k
\end{equation}
since $\vq^\t\vone_0 = 0$. In SB-\hltm, according to Lemma~\ref{lemma:tr-sb-relation}, all $C_{\mu\nu} = \frac{1}{2}\rho_{\mu\nu} \vq\vq^\t$ and we could compute $C_{0\nu}\vv_k$:
\begin{equation}
    C_{0\nu}\vv_k = \frac{1}{2}\rho_{0\nu}\vq\vq^\t\vv_k = \rho_{0\nu}\frac{1}{2}(v_k(1) -v_k(0))\vq =  \rho_{0\nu}s_k
    \vq
\end{equation}
Therefore, we have $\vq^\t\vv_{0k} = \vq^\t C_{0\nu}\vv_k = 2\rho_{0\nu}s_k$ since $\vq^\t\vq = 2$.

On the other hand, according to Eqn.~\ref{eq:close-multiplication}, we have: 
\begin{equation}
    \rho_{0\nu} := 
\prod_{0,\ldots,\alpha,\beta,\ldots,\nu}\rho_{\alpha\beta} = \rho_{0\mu}\rho_{\mu\nu} 
\end{equation}
This is due to the fact that due to tree structure, the path from $z_0$ to all child nodes in $\ch{\mu}$ must pass $z_\mu$. Therefore, we have $\vq^\t\vv_{0k} = 2\rho_{0\mu}\rho_{\mu\nu}s_k = 2\rho_{0\mu} \vamu[k]$, which gives for entry $(k,k')$:
\begin{equation}
    \frac{1}{4}(1 - \rho^2_0)(\vv_{0k}^\t \vq) (\vv^\t_{0k'}\vq)^\t = \rho^2_{0\mu}(1 - \rho_0^2) \vamu[k] \vamu[k'] 
\end{equation}

Put them together, the covariance operator is:
\begin{equation}
 \var_{z_0}[\eee{z}{\vf_{\ch{\mu}}|z_0}] = \rho^2_{0\mu}(1-\rho_0^2)\vamu\vamu^\t 
\end{equation}

When $L \rightarrow +\infty$, from Lemma~\ref{lemma:probability-transition} and its remark, we have:
\begin{equation}
    \rho_{0\mu} := 
\prod_{0,\ldots,\alpha,\beta,\ldots,\mu}\rho_{\alpha\beta} \rightarrow 0 
\end{equation}
and thus the covariance becomes zero as well.
\end{proof}

\clearpage

\begin{figure}
    \centering
    \includegraphics[width=\textwidth]{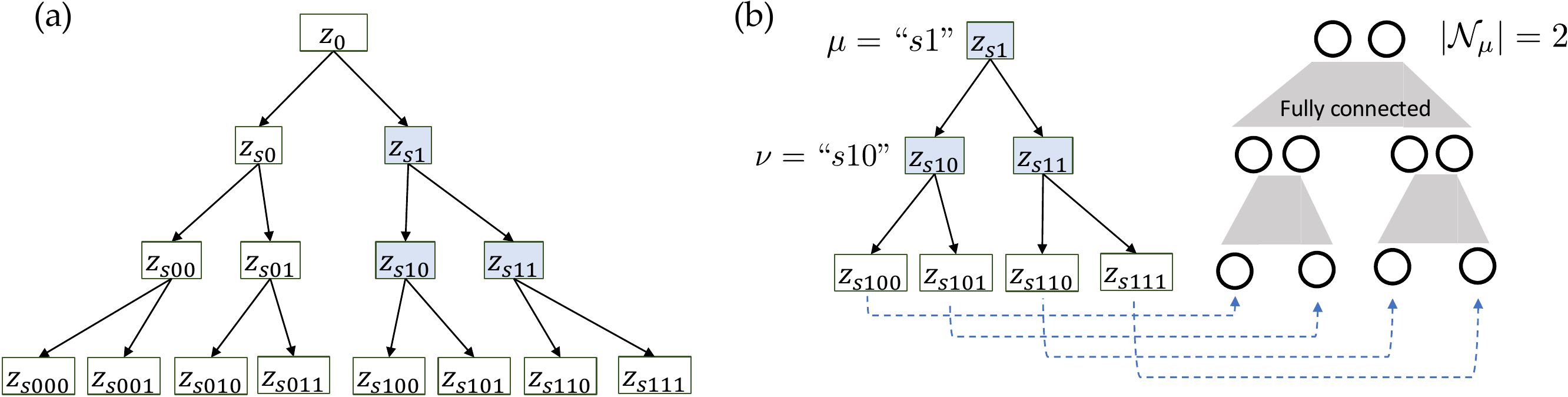}
    \caption{The Hierarchical Latent Tree Model (\hltm{}) used in our experiments (Sec.~\ref{exp:hltm-experiment} and Sec.~\ref{sec:experiment}).} 
    \label{fig:hltm-exp-setup}
\end{figure}

\begin{table}[h!]
\centering
\small
\label{tbl:hltm}
\caption{\small Normalized Correlation between the intermediate layer latent variable in SB-\hltm{} and corresponding group of nodes in deep ReLU networks ($L = 5$) trained with SimCLR with NCE loss. With more over-parameterization, correlations are higher with lower std on 10 trials at both init and end of training, }
\label{tbl:intermediate-layer}
\setlength{\tabcolsep}{1pt}

\begin{tabular}{c||c|c|c||c|c|c}
    \textbf{Latent node} $z_{0}$ (Root) & \multicolumn{3}{c||}{$|\nodes{\mu}|=2$} & \multicolumn{3}{c}{$|\nodes{\mu}|=10$} \\
    $\rho_{\mu\nu}$  & Initial & After epoch 1 & Converged & Initial & After epoch 1 & Converged \\
    \hline
$\sim \mathrm{Uniform}[0.7,1]$ &  $0.35\pm 0.20$ &   $0.39\pm 0.20$  &   $0.62\pm 0.29$ &  $0.62\pm 0.10$  &  $0.82\pm 0.06$  &   $0.85\pm 0.05$ \\
$\sim \mathrm{Uniform}[0.8,1]$ &  $0.48\pm 0.23$ &   $0.57\pm 0.30$  &   $0.72\pm 0.31$ &  $0.75\pm 0.08$  &  $0.90\pm 0.03$  &   $0.91\pm 0.03$ \\
$\sim \mathrm{Uniform}[0.9,1]$ &  $0.66\pm 0.28$ &   $0.73\pm 0.30$  &   $0.80\pm 0.29$ &  $0.88\pm 0.05$  &  $0.95\pm 0.02$  &   $0.96\pm 0.01$  
\end{tabular}

\vspace{0.1in}

\begin{tabular}{c||c|c|c||c|c|c}
    \textbf{Latent node} $z_{s0}$ & \multicolumn{3}{c||}{$|\nodes{\mu}|=2$} & \multicolumn{3}{c}{$|\nodes{\mu}|=10$} \\
    $\rho_{\mu\nu}$  & Initial & After epoch 1 & Converged & Initial & After epoch 1 & Converged \\
    \hline
$\sim \mathrm{Uniform}[0.7,1]$ &          $0.52\pm 0.20$  &   $0.53\pm 0.20$  &    $0.62\pm 0.21$ &  $0.68\pm 0.07$  &   $0.83\pm 0.06$  &    $0.86\pm 0.05$ \\
$\sim \mathrm{Uniform}[0.8,1]$ &          $0.65\pm 0.19$  &   $0.65\pm 0.18$  &    $0.70\pm 0.14$ & $0.79\pm 0.05$   &  $0.90\pm 0.03$  &    $0.91\pm 0.03$ \\
$\sim \mathrm{Uniform}[0.9,1]$ &          $0.80\pm 0.13$  &   $0.81\pm 0.12$  &    $0.84\pm 0.09$ &  $0.90\pm 0.02$  &   $0.95\pm 0.02$  &    $0.96\pm 0.01$
\end{tabular}

\vspace{0.1in}

\begin{tabular}{c||c|c|c||c|c|c}
    \textbf{Latent node} $z_{s1}$ & \multicolumn{3}{c||}{$|\nodes{\mu}|=2$} & \multicolumn{3}{c}{$|\nodes{\mu}|=10$} \\
    $\rho_{\mu\nu}$  & Initial & After epoch 1 & Converged & Initial & After epoch 1 & Converged \\
    \hline
$\sim \mathrm{Uniform}[0.7,1]$ &         $0.45\pm 0.13$  &   $0.50\pm 0.20$  &    $0.64\pm 0.22$ & $0.64\pm 0.10$  &   $0.78\pm 0.08$   &   $0.82\pm 0.06$ \\ 
$\sim \mathrm{Uniform}[0.8,1]$ &         $0.60\pm 0.16$  &   $0.62\pm 0.16$  &    $0.79\pm 0.20$ & $0.76\pm 0.07$  &   $0.87\pm 0.04$   &   $0.89\pm 0.04$ \\
$\sim \mathrm{Uniform}[0.9,1]$ &         $0.72\pm 0.08$  &    $0.78\pm 0.12$ &     $0.86\pm 0.09$ & $0.88\pm 0.04$  &   $0.94\pm 0.02$   &   $0.95\pm 0.02$
\end{tabular}

\vspace{0.1in}

\begin{tabular}{c||c|c|c||c|c|c}
    \textbf{Latent node} $z_{s101}$ & \multicolumn{3}{c||}{$|\nodes{\mu}|=2$} & \multicolumn{3}{c}{$|\nodes{\mu}|=10$} \\
    $\rho_{\mu\nu}$  & Initial & After epoch 1 & Converged & Initial & After epoch 1 & Converged \\
    \hline
$\sim \mathrm{Uniform}[0.7,1]$ &         $0.66\pm 0.24$   &    $0.69\pm 0.24$    &    $0.72\pm 0.24$ & $0.87\pm 0.06$   &    $0.90\pm 0.04$    &    $0.92\pm 0.04$ \\
$\sim \mathrm{Uniform}[0.8,1]$ &         $0.74\pm 0.25$   &    $0.75\pm 0.24$    &    $0.77\pm 0.25$ & $0.91\pm 0.04$   &   $0.93\pm 0.03$    &    $0.95\pm 0.02$ \\
$\sim \mathrm{Uniform}[0.9,1]$ &         $0.83\pm 0.22$   &    $0.84\pm 0.22$    &    $0.86\pm 0.22$ &  $0.96\pm 0.02$  &    $0.97\pm 0.01$   &     $0.97\pm 0.01$
\end{tabular}

\vspace{0.1in}

\begin{tabular}{c||c|c|c||c|c|c}
    \textbf{Latent node} $z_{s0111}$ & \multicolumn{3}{c||}{$|\nodes{\mu}|=2$} & \multicolumn{3}{c}{$|\nodes{\mu}|=10$} \\
    $\rho_{\mu\nu}$  & Initial & After epoch 1 & Converged & Initial & After epoch 1 & Converged \\
    \hline
$\sim \mathrm{Uniform}[0.7,1]$ &    $0.83\pm 0.19$   &     $0.83\pm 0.19$   &      $0.84\pm 0.19$  & $0.93\pm 0.03$   &     $0.93\pm 0.03$   &      $0.93\pm 0.03$ \\ 
$\sim \mathrm{Uniform}[0.8,1]$ &    $0.87\pm 0.16$   &     $0.88\pm 0.15$   &      $0.88\pm 0.15$  & $0.95\pm 0.02$   &     $0.95\pm 0.02$   &      $0.96\pm 0.02$ \\
$\sim \mathrm{Uniform}[0.9,1]$ &    $0.93\pm 0.10$   &     $0.93\pm 0.10$   &      $0.93\pm 0.10$  & $0.98\pm 0.01$   &     $0.98\pm 0.01$   &      $0.98\pm 0.01$ 
\end{tabular}
\end{table}

\section{Additional Experiments}
\subsection{HLTM}
\label{exp:hltm-experiment}
\textbf{Experiment Setup}. The construction of our Symmetric Binary Hierarchical Latent Tree Models (SB-\hltm) is shown in Fig.~\ref{fig:hltm-exp-setup}(a). We construct a complete binary tree of depth $L$. The class/sample-specific latent $z_0$ is at the root, while other nuisance latent variables are labeled with a binary encoding (e.g., $\mu = ``\mathrm{s010}"$ for a $z_\mu$ that is the left-right-left child of the root $z_0$). The polarity $\rho_{\mu\nu}$ is random sampled from a uniform distribution: $\rho_{\mu\nu} \sim \mathrm{Uniform}[\mathrm{delta\_lower}, 1]$ at each layer, where $\mathrm{delta\_lower}$ is a hyperparameter.  

On the neural network side, we use deep ReLU networks. We train with NCE loss ($\lnce^\tau$) with $\tau = 0.1$ and $H = 1$. $\ell_2$ normalization is used in the output layer. Simple SGD is used for the training with learning rate $0.1$. We generate a dataset using \hltm{} with $N = 64000$ samples. The training batchsize is 128. The model usually is well converged after 50 epochs.  

\textbf{Normalized correlation between latents in SB-\hltm{} and hidden nodes in neural networks}. We provide additional table (Tbl.~\ref{tbl:intermediate-layer}) that shows that besides the top-most layers, the normalized correlation (NC) between the hidden layer of the deep ReLU network and the intermediate latent variables of the hierarchical tree model is also strong at initialization and will grow over time. In particular, we can clearly see the following several trends:
\begin{itemize}
    \item[(1)] Top-layer latent is less correlated with the top-layer nodes in deep networks. This shows that top-layer latents are harder to learn (and bottom-layer latents are easier to learn since they are closer to the corresponding network layers). 
    \item[(2)] There is already a non-trivial amount of NC at initialization. Furthermore, NC is higher in the bottom-layer, since they are closer to the input. Nodes in the lowest layer (leaves) would have perfect NC (1.0), since they are identical to the leaves of the latent tree models. 
    \item[(3)] Overparameterization ($|\ch{\mu}| > 1$) helps for both the initial NC and NC after during training. Furthermore, over-parameterization seems to also greatly accelerate the improvement of NC. Indeed, NC after 1 epoch grows much faster in $|\ch{\mu}| = 10$ (10x over-parameterization) than in $|\ch{\mu}| = 2$ (2x over-parameterization). 
\end{itemize}
All experiments are repeated 10 times and its mean and standard derivation are reported. 

\textbf{Growth of the norm of the covariance operator}. We also check how the norm of the covariance operator ($\covop{}$) changes during training in different situations. The results are shown in Fig.~\ref{fig:hltm-top-node} and Fig.~\ref{fig:hltm-bottom-node}. We could see that norm of the covariance operator indeed go up, showing that it gets amplified during training (even if the output is $\ell_2$-normalized). For each experiment configuration, we run 30 random seeds and take the \emph{median} of the norm of the covariance operator. Note that due to different initialization of the network, the standard derivation can be fairly large and is not shown in the figure. 

\begin{figure}
    \centering
    \includegraphics[width=\textwidth]{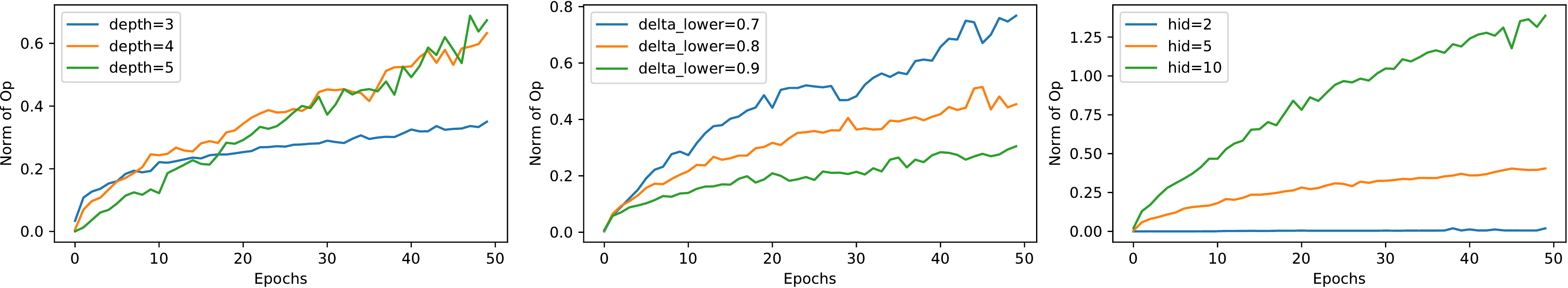}
    \includegraphics[width=\textwidth]{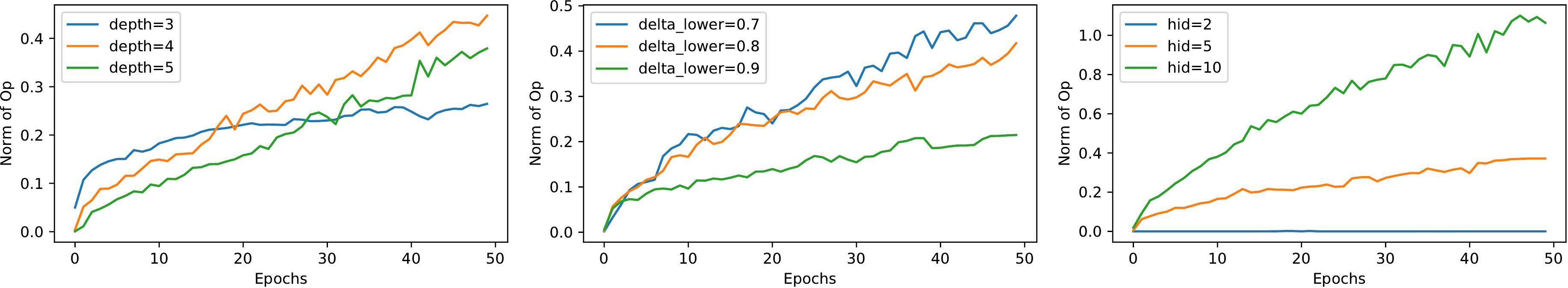}
    \caption{\small The Frobenius norm of the covariance operator $\covop$ changes over training (\emph{median} Frobenius norm over 30 random trials), under different factors: depth $L$, sample range of $\rho_{\mu\nu}$ ($\rho_{\mu\nu} \sim \mathrm{Uniform}[\mathrm{delta\_lower}, 1]$) and over-parameterization $|\nodes{\mu}|$. \textbf{Top row}: covariance operator of immediate left child $z_{s0}$ of the root node $z_0$; \textbf{Bottom row}: covariance operator of immediate right child $z_{s1}$ of the root node $z_0$.}
    \label{fig:hltm-top-node}
\end{figure}

\begin{figure}
    \centering
    \includegraphics[width=\textwidth]{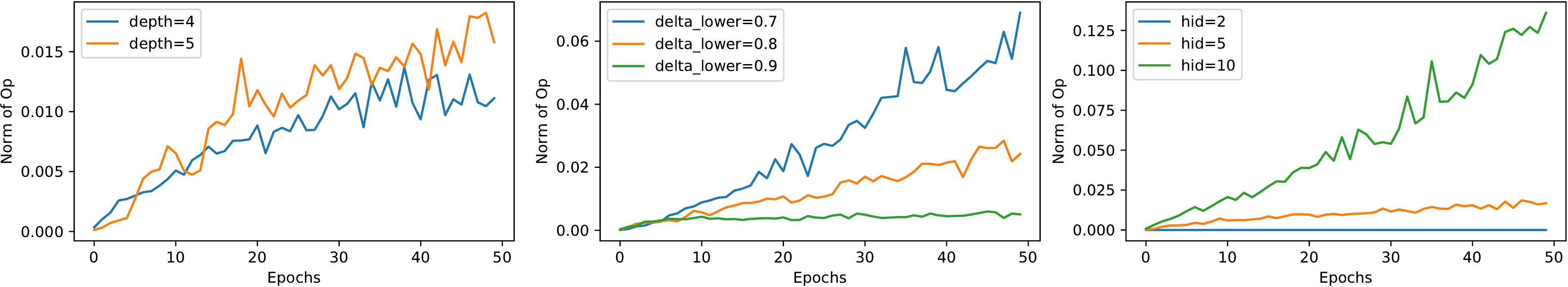}
    \includegraphics[width=\textwidth]{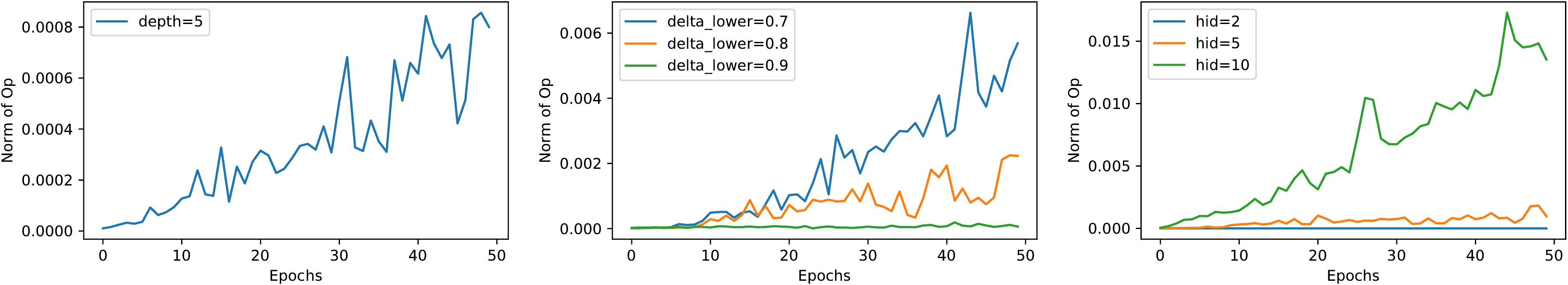}
    \caption{Ablation of how the Frobenius norm of the covariance operator $\covop$ changes over training (\emph{median} Frobenius norm over 30 random trials). Same setting as Fig.~\ref{fig:hltm-top-node} but focus on lower level. Note that since we have used $\ell_2$ normalization at the topmost layer, the growth of the covariance operator is likely not due to the growth of the weight magnitudes, but due to more discriminative representations of the input features $\vf_\mu$ with respect to distinct $z_0$. \textbf{Top row}: covariance operator of left-right-left latent variable ($z_{s010}$) from the root node $z_0$; \textbf{Bottom row}: covariance operator right-left-right-left ($z_{s1100}$) latent variable from the root node $z_0$.}
    \label{fig:hltm-bottom-node}
\end{figure}

\subsection{Experiments Setup}
\label{sec:byol-exp-setup}
For all STL-10~\cite{coates2011analysis} and CIFAR-10~\cite{krizhevsky2009learning} task, we use ResNet18~\cite{resnet} as the backbone with a two-layer MLP projector. We use Adam~\cite{kingma2014adam} optimizer with momentum $0.9$ and no weight decay. For CIFAR-10 we use $0.01$ learning rate and for STL-10 we use $0.001$ learning rate. The training batchsize is $128$. 


\end{document}